\newtheorem{theorem}{Theorem}[section]
\newtheorem{lemma}[theorem]{Lemma}
\newtheorem{proposition}[theorem]{Proposition}
\title{Risk-sensitive Actor-Critic with Static Spectral Risk Measures for Online and Offline Reinforcement Learning} 
\author{%
  Mehrdad Moghimi, Hyejin Ku \\
  Department of Mathematics and Statistics\\
  York University\\
  Toronto, Ontario, Canada \\
  \texttt{\{moghimi,hku\}@yorku.ca} \\
}
\begin{document}

\maketitle

\begin{abstract}
The development of Distributional Reinforcement Learning (DRL) has introduced a natural way to incorporate risk sensitivity into value-based and actor-critic methods by employing risk measures other than expectation in the value function. While this approach is widely adopted in many online and offline RL algorithms due to its simplicity, the naive integration of risk measures often results in suboptimal policies. This limitation can be particularly harmful in scenarios where the need for effective risk-sensitive policies is critical and worst-case outcomes carry severe consequences. To address this challenge, we propose a novel framework for optimizing static Spectral Risk Measures (SRM), a flexible family of risk measures that generalizes objectives such as CVaR and Mean-CVaR, and enables the tailoring of risk preferences. Our method is applicable to both online and offline RL algorithms. We establish theoretical guarantees by proving convergence in the finite state-action setting. Moreover, through extensive empirical evaluations, we demonstrate that our algorithms consistently outperform existing risk-sensitive methods in both online and offline environments across diverse domains. 
\end{abstract}

\begingroup
\let\clearpage\relax

\section{Introduction}
Risk management plays a crucial role in sequential decision-making tasks across various domains, including finance, healthcare, and robotics. To address these risks, many approaches change the standard objective of maximizing expected returns to alternative risk measures, such as the Conditional Value-at-Risk (CVaR) \citep{Bauerle.Ott2011} or coherent risk measures \citep{Tamar.etal2017}. Another strategy is to impose constraints, such as variance-based limits \citep{Tamar.etal2012a} or dynamic risk measures \citep{Chow.Pavone2013}, to control exposure to worst-case scenarios.

Recently, Distributional Reinforcement Learning (DRL) has gained significant attention as a framework for risk-sensitive RL (RSRL) by estimating the entire distribution of returns rather than focusing solely on the expected value \citep{Morimura.etal2010a,Bellemare.etal2017a}. This approach offers a straightforward method for risk mitigation by employing a risk measure, such as a Mean-Variance or Distortion risk measure, instead of the expected value \citep{Dabney.etal2018b, Ma.etal2020}. However, within the DRL framework, applying a fixed risk measure at each time step does not necessarily result in policies that optimize static or dynamic risk measures over the entire trajectory \citep{Lim.Malik2022}. With such iterative risk measures, action selection at different states may deviate from the agent’s overall risk preferences, potentially leading to suboptimal policies. For example, optimizing $\operatorname{CVaR}_{0.1}$ at a future state may not align with optimizing the worst 10\% of returns from the initial state. This phenomenon, referred to as time inconsistency, is a well-documented challenge in risk-sensitive decision-making \citep{Shapiro.etal2014}.

Due to the simplicity of incorporating iterative risk measures with DRL, this approach has also been adopted in offline RL \citep{Urpi.etal2021a, Ma.etal2021}. In offline RL, policies are learned from previously collected data \citep{Levine.etal2020} and risk-sensitive offline RL with iterative risk measures not only face the same issues observed in the online setting but also encounter additional challenges unique to offline RL, such as the inability to explore new high-reward states outside the dataset and the distributional shift problem, where the policy is trained on one state-action distribution but evaluated on another. To address these issues, existing offline RL methods often employ techniques like policy constraints \citep{Fujimoto.etal2019a, Peng.etal2019, Nair.etal2021} and value function regularization \citep{Kumar.etal2020a}.  

\begin{wrapfigure}{r}{0.4\textwidth}
  \begin{center}
  \vspace*{-15pt}
    \includegraphics[width=1.0\linewidth]{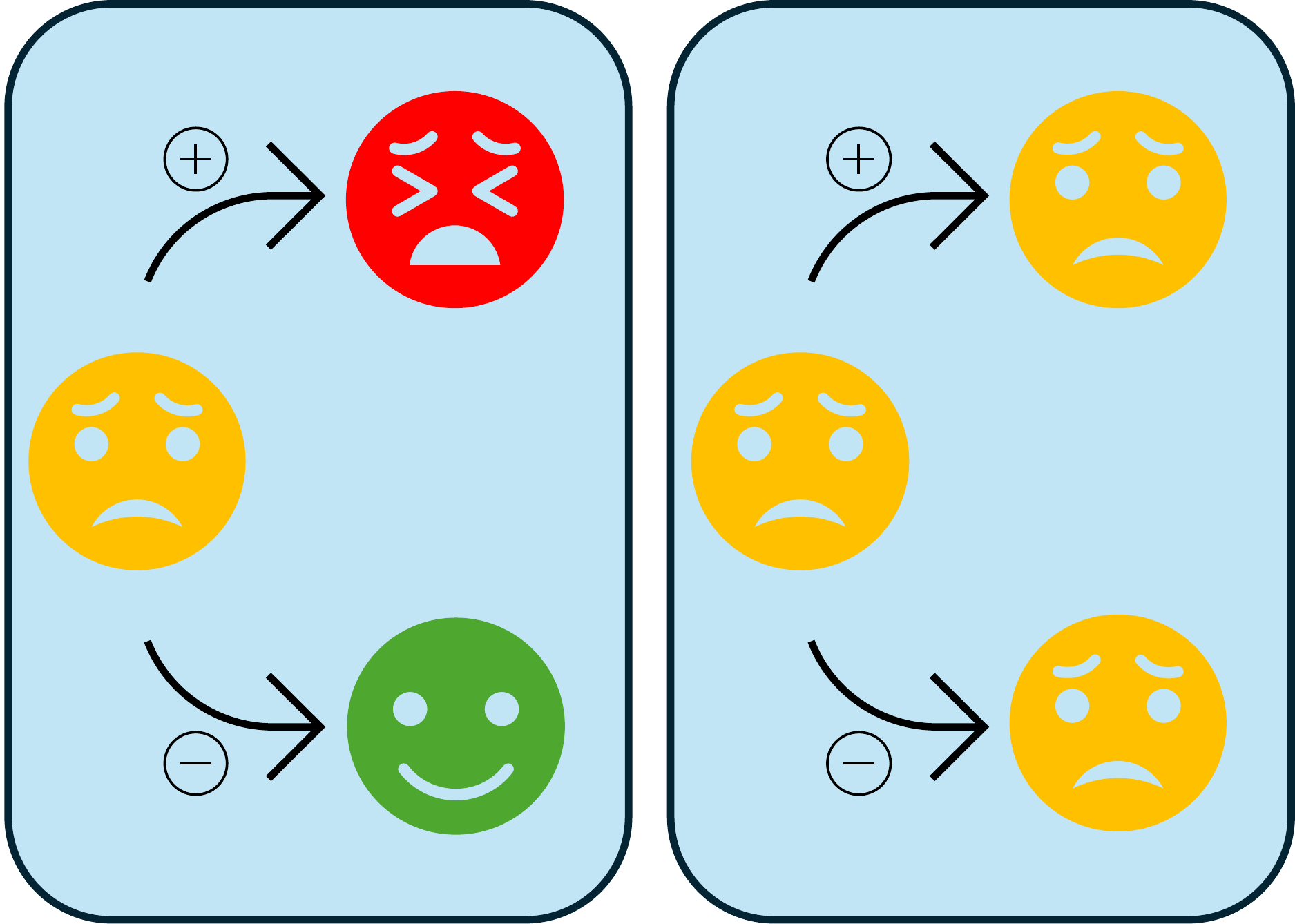}
  \end{center}
  \caption{The risk-sensitivity changes implicitly with Static risk measures \textbf{(Left)}. However, Iterative risk measures have the same risk-sensitivity regardless of the observed path \textbf{(Right)}.}
\end{wrapfigure}

Optimizing static risk measures offers a more interpretable solution by focusing on policies that maximize worst-case returns. Unlike dynamic or iterative risk measures, which assume that worst-case scenarios can occur at each step, static risk measures impose a limit on the worst-case scenarios that can occur over an entire episode \citep{Chow.etal2015}. Consequently, with static risk measures, the risk level of the agent changes implicitly based on the observed path \citep{Pflug.Pichler2016, Moghimi.Ku2025a}. While this behavior may seem unintuitive, it reflects a key feature of static measures: after observing a favorable sequence of events, the agent becomes more cautious in subsequent steps to respect the overall bound on worst-case scenarios.

Although optimizing static CVaR has been the focus of many studies due to its simple formulation, one of its major drawbacks is that it exclusively focuses on the left tail of the return distribution, ignoring potential upsides. This limitation, often referred to as “Blindness to Success” \citep{Greenberg.etal2022a}, can result in suboptimal policies. To address this issue, as well as the previously mentioned challenges, we introduce a simple yet powerful actor-critic framework for optimizing static Spectral Risk Measures (SRM) that can be applied in both offline and online settings. By breaking down the steps required to optimize an SRM, we demonstrate how such a framework can be derived. Moreover, since SRM is a flexible risk measure that encompasses various measures, such as Mean-CVaR and the Exponential risk measure, this framework fully leverages the potential of DRL.  

In scenarios where avoiding worst-case outcomes is critical and collecting new data is costly, learning effective risk-sensitive policies from offline datasets is especially valuable, as it eliminates the need for environment interaction. Despite this importance, existing work on risk-sensitive offline reinforcement learning has largely focused on iterative risk measures. To the best of our knowledge, the optimization of static risk measures remains unexplored in the offline setting. A further advantage of learning risk-sensitive policies offline is the ability to tailor policies to different levels of risk aversion. In this regard, the flexibility of static spectral risk measures enables the learning of a diverse set of policies adapted to varying risk preferences. As we demonstrate in our experiments, our algorithms not only achieve strong performance under risk-sensitive criteria but also allow for the customization of policies to suit specific risk profiles.

Our framework can be utilized with both stochastic and deterministic policies. Stochastic policies are well-suited for environments with uncertainty and imperfect information due to their inherent randomness \citep{Sutton.Barto2018a}. However, this randomness can degrade the performance of risk-averse policies, and the exploration advantages of stochastic policies are lost in the offline setting \citep{Urpi.etal2021a}. This motivates the use of deterministic policies in our framework. For this purpose, we utilize TD3 \citep{Fujimoto.etal2018} and TD3BC \citep{Fujimoto.Gu2021a} as high-performing online and offline RL algorithms with deterministic policies. Accordingly, in addition to our stochastic policy-based methods, AC-SRM for online learning and OAC-SRM for offline learning, we introduce TD3-SRM and TD3BC-SRM as their deterministic counterparts for online and offline settings, respectively. Our empirical results demonstrate that optimizing static SRM not only addresses theoretical concerns but also produces policies that outperform other risk-sensitive offline algorithms across various environments.  

In summary, our key contributions are as follows:
\begin{itemize}
    \item We propose a risk-sensitive actor-critic framework for optimizing the static Spectral Risk Measures, which is applicable to both online and offline learning.
    \item We prove the convergence of our algorithm in the online MDP setting with finite state and action spaces. 
    \item We perform a comprehensive empirical analysis of our framework with both stochastic and deterministic policies in online and offline settings, demonstrating its overall effectiveness and superiority compared to existing methods.
\end{itemize}

\section{Related Work}

\subsection{Risk-sensitive Online RL}
Risk-sensitive reinforcement learning (RL) focuses on identifying policies that maximize risk-adjusted returns, formulated as $\max_{\pi \in \boldsymbol{\pi}} \rho(Z^\pi)$. In value-based methods, several approaches have been proposed: \citet{Bauerle.Ott2011} and \citet{Chow.etal2015} study the Conditional Value at Risk (CVaR), while \citet{Bauerle.Rieder2014} focus on utility functions. In the distributional RL framework, \citet{Dabney.etal2018b} explore the use of risk measures beyond expected value, such as distortion risk measures, for action selection. Further developments include optimizing static CVaR, as shown in \citet{Bellemare.etal2023} and \citet{Lim.Malik2022}, and general statistical functionals of the return distribution, as discussed by \citet{Pires.etal2025}. In policy gradient methods, \citet{Chow.etal2018a} address CVaR-constrained problems, \citet{Tamar.etal2012a} study variance-based objectives, and \citet{Tamar.etal2015a} focus on optimizing CVaR. Dynamic coherent and convex risk measures have also been considered by \citet{Tamar.etal2017} and \citet{Coache.Jaimungal2023}, respectively. Additionally, \citet{Bisi.etal2022} propose a state-augmentation approach for optimizing risk measures such as mean-variance and utility functions, and \citet{Ma.etal2020} incorporate risk sensitivity into SAC using a distributional critic.

There have been a few works that address the optimization of static SRM. \citet{Bauerle.Glauner2021a} decompose the problem into two parts: identifying a risk function corresponding to the SRM, and then optimizing a policy under that risk function. They use a value-based method for policy optimization and rely on global optimization to learn the risk function. \citet{Kim.etal2024a} use SRM as the constraint of the optimization problem and employ a gradient-based method to learn the parameters of the risk function, which can be computationally expensive. More recently, \citet{Moghimi.Ku2025a} propose a value-based approach for optimizing SRM using distributional RL, and use a closed-form solution to compute the risk function. In this paper, we follow this approach due to its simplicity and strong performance and extend it by developing an actor-critic algorithm for both online and offline settings.

\subsection{Offline RL} 
Offline RL, or batch RL, involves learning policies from static datasets without further interaction with the environment. Addressing the key challenge of \textit{distributional shift}, where the state-action distribution of the dataset differs from that of the learned policy, has led to a variety of approaches. These include policy constraint methods, such as Batch-Constrained Q-learning (BCQ) \citep{Fujimoto.etal2019a} and Advantage-Weighted Actor-Critic (AWAC) \citep{Nair.etal2021}, which constrain the learned policy to stay close to the behavior policy and reduce the risk of poor decisions in underexplored areas. Regularization techniques, like Conservative Q-Learning (CQL) \citep{Kumar.etal2020a}, address overestimation of actions not well supported by the dataset. Additionally, ensemble methods \citep{Agarwal.etal2020} improve robustness by aggregating multiple value estimates. These methods represent a few of the many strategies proposed to tackle offline RL challenges. For a comprehensive review, we refer readers to the surveys by \citet{Levine.etal2020} and \citet{Prudencio.etal2023}.

\subsection{Risk-sensitive Offline RL}
Despite the importance of learning risk-averse policies from datasets, only a few works have addressed this problem. The first work in this area is the Offline Risk-Averse Actor-Critic (ORAAC) algorithm \citep{Urpi.etal2021a} which extends the BCQ algorithm \citep{Fujimoto.etal2019a} with a distributional critic and uses risk-sensitive action-selection. Similarly, the Conservative Offline Distributional Actor-Critic (CODAC) algorithm \citep{Ma.etal2021} extends the CQL algorithm \citep{Kumar.etal2020a} with a distributional critic and learns risk-averse policies by utilizing the return-distribution of state-action pairs. \citet{Bai.etal2022} propose a new architecture called Monotonic Quantile Network (MQN) to improve the learning of the distributional value function in the offline setting and uses the CVaR of this value function to learn risk-averse policies. Finally, \citet{Rigter.etal2023} proposes an actor-critic algorithm that optimizes a dynamic risk measure, which requires learning the MDP model from the dataset. While model-based algorithms offer improved sample efficiency in the online setting by leveraging learned environment dynamics, they are computationally expensive, and the model bias can lead to compounding errors over multiple time-steps, which can degrade policy performance.

\section{Preliminary Studies}

In this work, we consider a Markov Decision Process (MDP) defined by the tuple \((\mathcal{X}, \mathcal{A}, \mathcal{R}, \mathcal{P}, \gamma, \xi_0)\), where \(\mathcal{X}\) and \(\mathcal{A}\) represent the state and action spaces. The reward function \(\mathcal{R}: \mathcal{X} \times \mathcal{A} \rightarrow \mathscr{P}(\mathbb{R})\) specifies a distribution over rewards, while the transition function \(\mathcal{P}: \mathcal{X} \times \mathcal{A} \rightarrow \mathscr{P}(\mathcal{X})\) defines the probability of moving to the next state. The initial state follows the distribution \(\xi_0\), and \(\gamma \in [0,1)\) is the discount factor. We assume that rewards are bounded within \([R_{\min}, R_{\max}]\), where \(R_{\min} \geq 0\).  

A Markov stochastic policy is a mapping \(\pi: \mathcal{X} \rightarrow \mathscr{P}(\mathcal{A})\) that assigns a probability distribution over actions given a state. We denote the space of Markov policies as $\Pi$ and state occupancy measure under policy \(\pi\) as \(d_{\xi_0}^{\pi}\). The discounted return when starting from state \(x\), taking action \(a\), and following policy \(\pi\) is defined as  
\[
G^\pi(x, a) := \sum_{t=0}^{\infty} \gamma^t R_t,
\]  
where rewards \(R_t\) are drawn from \(\mathcal{R}(X_t, A_t)\), actions \(A_t\) are sampled from \(\pi(\cdot | X_t)\), and next states \(X_{t+1}\) follow the transition dynamics \(\mathcal{P}(X_t, A_t)\). The return from state \(x\) under policy \(\pi\) is denoted as \(G^\pi(x)\), while \(G^\pi=\mathbb{E}_{x_0\sim\xi_0}\left[G^\pi(x_0)\right] \) represents the initial state's return distribution.  

For a parameterized policy \(\pi_\theta\), the value of the policy is defined as  
\[
J(\pi_\theta) = \mathbb{E}\left[G^{\pi_\theta}\right],
\]  
and our goal is to find the optimal parameters \(\theta^*\) that maximize this value:  
\[
\theta^* = \arg\max_\theta J(\pi_\theta).
\]  
The Policy Gradient Theorem \citep{Sutton.etal1999} provides a way for updating the policy \(\pi_\theta\) by computing the gradient of the performance objective \(J(\pi_\theta)\), given by:
\begin{equation}
\label{eq:gradient}
    \nabla_\theta J(\pi_\theta) = \mathbb{E}_{x\sim d_{\xi_0}^{\pi_\theta},\; a\sim \pi_\theta}\left[\nabla_\theta\log\pi_\theta(a\mid x) Q^{\pi_\theta}(x, a)\right],
\end{equation}
where $d_{\xi_0}^{\pi}(x) = \mathbb{E}_{x_0\sim\xi_0}\left[d_{x_0}^{\pi}(x)\right]$ is the discounted state visitation distribution under initial distribution $\xi_0$ and $d_{x_0}^{\pi}(x)$ is defined as $d_{x_0}^{\pi}(x):= (1-\gamma)\sum_{t=0}^{\infty}\gamma^t\operatorname{Pr}^{\pi}(x_t=x\mid x_0)$ where $\operatorname{Pr}^{\pi}(x_t=x\mid x_0)$
is the probability of visiting state $x$ after starting at $x_0$ and executing $\pi$. The Q-value function \(Q^{\pi}(x, a)\) represents the expected return after taking action \(a\) in state \(x\):  
\[
Q^{\pi}(x,a) = \mathbb{E}\left[G^\pi(x, a)\right].
\]
On-policy learning methods update the policy using data collected by executing the current policy in the environment. In contrast, off-policy learning allows learning about a target policy \(\pi_\theta\) using data generated by a different behavior policy. 
This enables more sample-efficient learning by reusing past experiences. 

In online off-policy learning, experiences collected through ongoing interaction are stored in a replay buffer, from which the agent samples to update its policy. In contrast, offline RL (also known as batch RL) involves learning solely from a fixed dataset of transitions generated by an unknown behavior policy. In this setting, the agent has no further interaction with the environment, and the key challenge is to learn a high-performing policy purely from static data.

\subsection{Distributional RL}
\label{sec:distributional}
Distributional reinforcement learning \citep{Morimura.etal2010a, Bellemare.etal2017a} extends standard RL by modeling the full distribution of returns \( G^\pi(x, a) \), denoted as \( \eta^\pi(x, a) \). This return distribution is the unique fixed point of the distributional Bellman operator 
\(
\mathcal{T}^{\pi}: \mathscr{P}(\mathbb{R})^{\mathcal{X} \times \mathcal{A}}\rightarrow \mathscr{P}(\mathbb{R})^{\mathcal{X} \times \mathcal{A}},
\)
defined as  
\[
\left(\mathcal{T}^{\pi}\eta\right)(x, a) = \mathbb{E}_\pi\left[(b_{R,\gamma})_{\#} \eta(X^\prime, A^\prime) \mid X=x, A=a\right],
\]
where \( A^\prime \sim \pi(\cdot) \) and \( b_{r,\gamma}(z) = r + \gamma z \). The push-forward distribution \( (b_{R,\gamma})_{\#} \eta(X^\prime, A^\prime) \) represents the return distribution of  \( (b_{R,\gamma}) G(X^\prime, A^\prime) \). The operator can also be written in terms of return variables:  
\[
G^{\pi}(x, a) = R(x, a) + \gamma \mathbb{E}_{x^{\prime} \sim \mathcal{P}(x^{\prime} \mid x, a), a^{\prime} \sim \pi(x^{\prime})} \left[G^{\pi}(x^{\prime}, a^{\prime})\right].
\]  

In this work, we model the return distribution using quantiles and update our estimates via quantile regression, following the QR-DQN approach \citep{Dabney.etal2018a}. Given cumulative probabilities \( \tau_i = i/N \) for \( i = 0, \dots, N \), the return distribution is parameterized as  
\[
\eta_q(x, a) = \frac{1}{N} \sum_{i=1}^{N} \delta_{q_i(x, a)},
\]  
where the support points are defined by  
\[
q_i(x, a) = F_{G(x, a)}^{-1}(\hat{\tau}_i), \quad \hat{\tau}_i = (\tau_{i-1} + \tau_i)/2, \quad 1 \leq i \leq N.
\]  

To estimate the quantiles of a distribution \(\nu\), we minimize the quantile regression Huber loss, defined as: 
\begin{equation}
\label{huberloss}
    \mathcal{L}(G, Z) = \sum_{i=1}^{N} \mathbb{E}_{Z \sim \nu} \left[l_{\hat{\tau}_i}^{\kappa}(Z - q_i)\right],
\end{equation}
where  \(l_\tau^\kappa(u) = \left|\tau - \mathbb{I}\{u < 0\}\right| \cdot \mathcal{L}_\kappa^{huber}(u)\) and \(\mathcal{L}_\kappa^{huber}(u)\) denotes the Huber loss \citep{Huber1992}, which combines squared and absolute error to avoid constant gradients near zero:
\[
\mathcal{L}_\kappa^{huber}(u) =
\begin{cases}
\frac{1}{2} u^2, & \text{if } |u| \leq \kappa, \\
\kappa \left(|u| - \frac{1}{2} \kappa\right), & \text{otherwise}.
\end{cases}
\]

\subsection{Risk-sensitive Policies}

The availability of return distributions in actor-critic methods with a distributional critic enables the use of risk measures other than the expectation when estimating state-action values. A simple approach is to apply a risk measure $\rho$ at each step by replacing the Q-values in Equation \ref{eq:gradient} with:  
\[
Q(x,a) =\rho\left(G^\pi(x, a)\right).
\]
This method has been widely used in both online \citep{Dabney.etal2018b, Ma.etal2020} and offline \citep{Urpi.etal2021a, Ma.etal2021} RL for risk-sensitive policy optimization.  

However, as discussed in \citet{Lim.Malik2022}, this iterative risk application can be problematic. It can lead to overly optimistic or overly conservative estimates, and optimizing a policy based on these Q-values does not necessarily align with maximizing either the static risk-adjusted value:  
\[
J(\pi) = \rho\left(G^{\pi}\right),
\]
or the dynamic risk-adjusted value:  
\[
J_d(\pi) = \rho\left(R_0 + \gamma\rho\left(R_1 + \gamma\rho\left(R_2 + \gamma(\cdots)\right)\right)\right).
\]  

Dynamic risk measures, which repeatedly apply a risk function at each step, introduce further challenges. Computing policy gradients in this setting require knowledge of the MDP model to adjust transition probabilities, making both optimization and interpretation more difficult. Additionally, selecting an appropriate risk parameter is nontrivial, as the fixed risk level can lead to overly conservative policies. Given these issues, we instead focus on directly optimizing the static risk-adjusted value.

\subsection{Spectral Risk Measure}
Let \( Z \in \mathcal{Z} \) denote the random variable representing the return, and let \( \rho: \mathcal{Z} \to \mathbb{R} \) be a risk measure, where \( \rho(Z) \) is the risk-adjusted value of \( Z \). The quantile function of \( Z \) is defined as  
\[
F_Z^{-1}(u) = \inf \{z \in \mathbb{R} \mid F_Z(z) \geq u\}, \quad u \in [0, 1],
\]
where \( F_Z(z) = \mathbb{P}(Z \leq z) \) is the cumulative distribution function (CDF). While expectation assigns equal weights to all quantiles, a natural extension is to assign different weights based on risk preferences.  Spectral Risk Measure (SRM), introduced by \citet{Acerbi2002}, formalizes this idea by defining  
\[
\operatorname{SRM}_\phi(Z) = \int_0^1 F_Z^{-1}(u) \phi(u) \, \mathrm{d}u,
\]
where \( \phi: [0, 1] \to \mathbb{R}_{+} \) is a left-continuous, non-increasing risk spectrum function satisfying \( \int_0^1 \phi(u) \, du = 1 \). The function \( \phi(u) \) expresses the risk preference across different quantiles of the return distribution. \footnote{
\citet{Dabney.etal2018b} employ the Distortion Risk Measure (DRM). A coherent DRM, characterized by a concave distortion function $g$, is equivalent to the Spectral Risk Measure where $g^\prime(u)=\phi(u)$ \citep{Henryk.Silvia2006}.} Various risk spectrums have been explored in the literature:

\begin{wrapfigure}{r}{0.3\textwidth}
  \begin{center}
  \vspace*{-130pt}
    \includegraphics[width=1.0\linewidth]{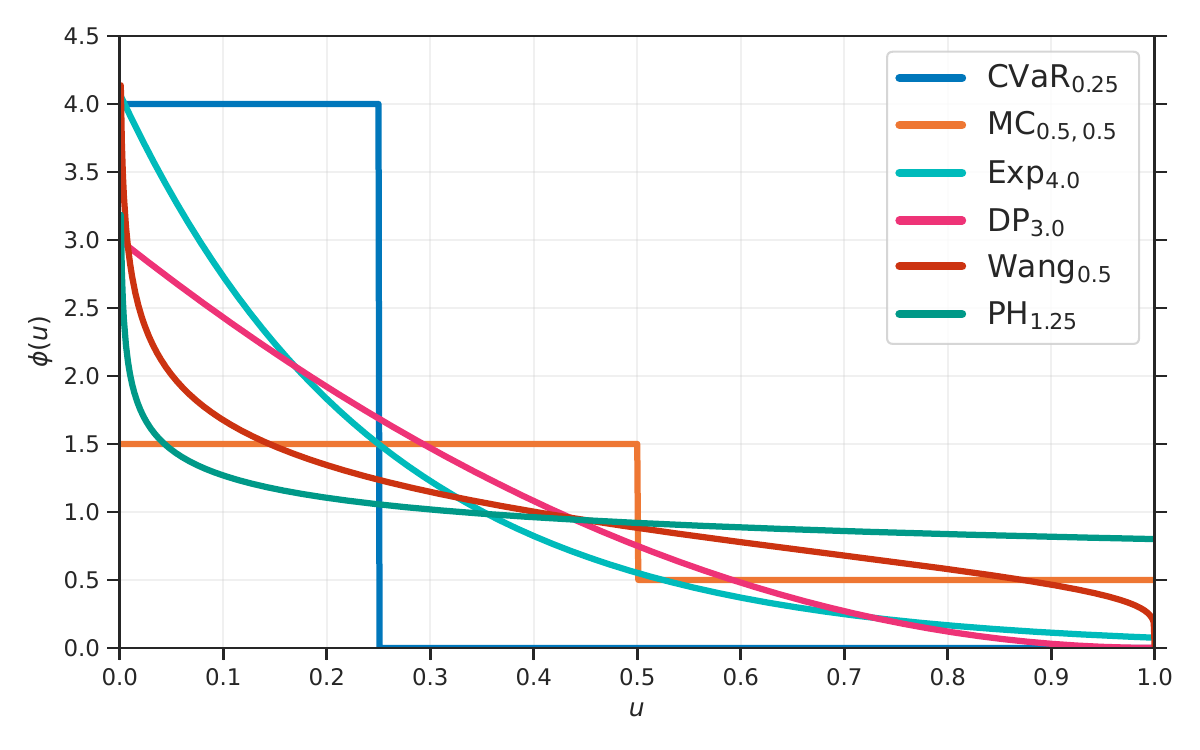}
  \end{center}
  \vspace*{-15pt}
  \caption{Illustration of various risk spectrums.}
\end{wrapfigure}

\begin{itemize}[leftmargin=15pt]
    \item CVaR: \( \phi_{\alpha}(u) = \frac{1}{\alpha} \mathds{1}_{[0, \alpha]}(u) \),
    \item Mean-CVaR (MC): \( \phi_{\alpha, \omega}(u) = \omega \mathds{1}_{[0, 1]}(u) + (1-\omega)\frac{1}{\alpha} \mathds{1}_{[0, \alpha]}(u) \),
    \item Exponential (Exp): \( \phi_\alpha(u) = \frac{\alpha e^{-\alpha u}}{1 - e^{-\alpha}} \),
    \item Dual Power (DP): \( \phi_{\alpha}(u) = \alpha (1 - u)^{\alpha - 1} \),
    \item Wang: \( \phi_{\alpha}(u) = e^{(\alpha \Phi^{-1}(u) - \alpha^2/2)} \),
    \item Proportional Hazard (PH): \( \phi_{\alpha}(u) = \frac{1}{\alpha}u^{(1/\alpha - 1)}  \).
\end{itemize}

where $\Phi^{-1}(u)$ denotes the inverse CDF of a standard Normal distribution \citep{Wang1995, Wang2000}.

Our goal in this work is to find a policy that maximizes its static risk-adjusted value. In the next section, we show that achieving this requires using the supremum representation of SRM, which applies to any SRM with a bounded spectrum:
\begin{equation}
\label{eq:srmg}
\operatorname{SRM}_\phi(Z) = \sup _{h \in \mathcal{H}}\left\{\mathbb{E}\left[h(Z)\right]+\int_0^1 \hat{h}(\phi(u)) \mathrm{d} u\right\}.
\end{equation}
Here, \( \mathcal{H} \) is the set of concave functions, and \( \hat{h} \) is the concave conjugate of \( h \) \citep{Pichler2015}. The supremum is attained by the function \( h_{\phi, Z}: \mathbb{R} \to \mathbb{R} \), given by  
\begin{equation}
\label{functionh}
h_{\phi, Z}(z) = \int_0^1 \left[ F_Z^{-1}(\alpha) + \frac{1}{\alpha} (z - F_Z^{-1}(\alpha))^{-} \right] \mu(\mathrm{d}\alpha),
\end{equation}
where the probability measure \( \mu:[0,1]\to[0,1] \) satisfies  
\begin{equation}
    \label{eq:phimu}
    \phi(u) = \int_u^1 \frac{1}{\alpha} \mu(\mathrm{d}\alpha),
\end{equation}
and \( h_{\phi, Z}(z) \) satisfies  
\[
\int_0^1 \hat{h}_{\phi, Z}(\phi(u)) \, \mathrm{d} u = 0.
\]

\section{Method}
\label{method}

In this section, we present a unified actor-critic framework for optimizing static Spectral Risk Measures (SRM) through a bi-level optimization approach. By leveraging the supremum representation of SRMs, we decompose the objective into an inner loop that optimizes the policy for a fixed risk function and an outer loop that updates the risk function based on the return distribution. This framework supports both online and offline settings, as well as stochastic and deterministic policies. We first introduce AC-SRM for online learning, followed by OAC-SRM for offline learning with policy constraints, and finally extend our method to deterministic policy gradients via TD3-SRM and TD3BC-SRM.

To be more precise, we aim to optimize the static risk-adjusted return of a policy under a Spectral Risk Measure (SRM), given by:  
\[
J(\pi) = \operatorname{SRM}_{\phi} (G^\pi).
\]
We leverage the supremum representation of the SRM (Equation \eqref{eq:srmg}) to reformulate this objective as a bi-level optimization problem. To that end, we define an auxiliary value function for a fixed risk function \( h \in \mathcal{H} \):  
\[
J(\pi, h) := \mathbb{E}\left[h\left(G^\pi\right)\right] + \int_0^1 \hat{h}(\phi(u)) \, \mathrm{d}u.
\]
Using this formulation, the overall optimization problem becomes:  
\begin{equation}
\label{eq:obj}
\max_{\pi \in \Pi} J(\pi) = \max_{\pi \in \Pi} \max_{h \in \mathcal{H}} J(\pi, h) = \max_{h \in \mathcal{H}} \left( \max_{\pi \in \Pi} J(\pi, h) \right).
\end{equation}

For clarity, we omit the subscript \( h \) in \( \pi_h \) throughout the remainder of the paper, and simply write \( \pi \) when the dependence on \( h \) is clear from context.

For the inner optimization, we adopt an actor-critic method with a distributional critic, enabling simultaneous learning of both the optimal policy and its associated return distribution. In this context, the optimal policy is Markovian in an extended state space \(\mathcal{\bar{X}} := \mathcal{X} \times \mathcal{S} \times \mathcal{C}\) \citep{Bauerle.Glauner2021a}, where \(\mathcal{S}\) represents the space of accumulated discounted rewards, and \(\mathcal{C}\) captures the space of discount factors up to the decision time. The state transitions are also defined as:  
\[
S_{t+1} = S_t + C_t R_t, \quad C_{t+1} = \gamma C_t,
\]
where we initialize \( S_0 = 0 \) and \( C_0 = 1 \).

For the outer optimization, Equation \eqref{functionh} suggests that the function \( h \) should be updated based on the return distribution of the initial state. This is where the distributional critic plays a key role: it provides an estimate of this return distribution, which is then used to refine \( h \). Proposition \ref{prop} explains how using the quantile representation of the return distribution modifies the definition of the function \( h \). The detailed proof is provided in \ref{app:functionh}.

\begin{proposition}
\label{prop}
Let \( Z \) be a random variable with quantile representation defined by \( q_i = F^{-1}_{Z}(\hat{\tau}_i) \) for \( i = 1, \dots, N \), where \(\tau_i = i / N\) and \(\hat{\tau}_i = (\tau_{i-1} + \tau_i) / 2\). Then, the function \( h \) in Equation \ref{functionh} can be approximated by the piecewise linear function 
\[
\tilde{h}_{\phi,Z}(z) := \sum_{i=1}^{N} w_i \left(q_i + \frac{1}{\hat{\tau}_i}(z - q_i)^-\right),
\]
where the weights are given by \( w_i = \hat{\tau}_i \left(\phi(\tau_{i-1}) - \phi(\tau_i)\right) \).  
\end{proposition}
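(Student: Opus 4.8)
The plan is to view Equation~\eqref{functionh} as a mixture of CVaR integrands against the measure $\mu$, and to replace this integral by a midpoint quadrature on the uniform grid $\{\tau_i\}_{i=0}^N$, using the cell midpoints $\hat{\tau}_i$ as representative points. Writing the integrand as $g_\alpha(z) := F_Z^{-1}(\alpha) + \frac{1}{\alpha}\big(z - F_Z^{-1}(\alpha)\big)^-$, I need two quantities for each cell $[\tau_{i-1},\tau_i]$: the mass that $\mu$ assigns to it, and the value $g_{\hat{\tau}_i}(z)$ of the integrand at its midpoint. Multiplying and summing will yield $\tilde{h}_{\phi,Z}$.

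First I would extract the cell mass of $\mu$ from Equation~\eqref{eq:phimu}. Evaluating $\phi(u) = \int_u^1 \tfrac{1}{\alpha}\,\mu(\mathrm{d}\alpha)$ at the two endpoints of a cell and subtracting gives
\[
\phi(\tau_{i-1}) - \phi(\tau_i) = \int_{\tau_{i-1}}^{\tau_i} \frac{1}{\alpha}\,\mu(\mathrm{d}\alpha).
\]
(A Fubini computation on $\int_0^1 \phi(u)\,\mathrm{d}u$ confirms $\int_0^1 \mu(\mathrm{d}\alpha) = \int_0^1 \phi(u)\,\mathrm{d}u = 1$, so $\mu$ is a probability measure and the $w_i$ can be read as approximate probability masses.) Approximating the slowly varying factor $1/\alpha$ by its midpoint value $1/\hat{\tau}_i$ over the cell yields
\[
\mu\big([\tau_{i-1},\tau_i]\big) \approx \hat{\tau}_i\big(\phi(\tau_{i-1}) - \phi(\tau_i)\big) = w_i,
\]
which is exactly the weight in the statement.

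Next I would discretize $h_{\phi,Z}$ itself. Splitting the integral over the partition and replacing each cell integral by the midpoint value of the integrand times the cell mass gives
\[
h_{\phi,Z}(z) = \sum_{i=1}^N \int_{\tau_{i-1}}^{\tau_i} g_\alpha(z)\,\mu(\mathrm{d}\alpha) \approx \sum_{i=1}^N g_{\hat{\tau}_i}(z)\,\mu\big([\tau_{i-1},\tau_i]\big).
\]
Since $F_Z^{-1}(\hat{\tau}_i) = q_i$ by the quantile parameterization, $g_{\hat{\tau}_i}(z) = q_i + \tfrac{1}{\hat{\tau}_i}(z-q_i)^-$, and substituting the weight from the previous step gives $\sum_{i=1}^N w_i\big(q_i + \tfrac{1}{\hat{\tau}_i}(z-q_i)^-\big) = \tilde{h}_{\phi,Z}(z)$, as claimed.

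The main obstacle is choosing the representative point consistently: the same midpoint $\hat{\tau}_i$ must serve both as the evaluation point for $1/\alpha$ (producing the factor $\hat{\tau}_i$ in $w_i$) and for the quantile $F_Z^{-1}$ (producing $q_i$), so that the construction matches the $N$-atom quantile model $\eta_q$ used throughout. The delicate part is the behavior of $\mu$ near $\alpha = 1$, where $\mu$ may carry an atom encoding the expectation component of the spectrum (for instance the mass $\phi(1)$ in Mean-CVaR); the midpoint rule is cleanest when $\phi(1)=0$ and otherwise introduces a boundary discrepancy that shrinks with the mesh. Since the statement claims an approximation rather than an identity, I would not pursue explicit error bounds, but I would note that both replacements are instances of the standard midpoint quadrature rule, whose error is controlled by the regularity of $\phi$ and $F_Z^{-1}$ and vanishes as $N\to\infty$.
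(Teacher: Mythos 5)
Your proposal is correct and takes essentially the same route as the paper's proof: both partition $[0,1]$ into the cells $[\tau_{i-1},\tau_i]$, use Equation~\ref{eq:phimu} to identify $\int_{\tau_{i-1}}^{\tau_i}\tfrac{1}{\alpha}\,\mu(\mathrm{d}\alpha)=\phi(\tau_{i-1})-\phi(\tau_i)$, and apply the midpoint approximation $\alpha\approx\hat{\tau}_i$ on each cell to produce the weights $w_i=\hat{\tau}_i\left(\phi(\tau_{i-1})-\phi(\tau_i)\right)$. The only difference is bookkeeping: the paper first replaces $F_Z^{-1}(\alpha)$ by $q_i$ on each cell (so the coefficient of $(z-q_i)^-$ is obtained exactly and only the term $q_i\int\mu(\mathrm{d}\alpha)$ needs the midpoint rule), whereas you apply the midpoint rule to the whole integrand and then to the cell mass, but the two bookkeepings yield the identical final expression.
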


  This framework is summarized in Algorithm \ref{alg:acsrm}, and a conceptual diagram is illustrated in Figure \ref{fig:both}. In this algorithm, 
\(
\bar{\mathcal{T}}^{\pi}: \mathscr{P}(\mathbb{R})^{\bar{\mathcal{X}} \times \mathcal{A}}\rightarrow \mathscr{P}(\mathbb{R})^{\bar{\mathcal{X}} \times \mathcal{A}},
\)
represents the distributional Bellman operator for the extended MDP.

\begin{figure*}[t]
    \begin{subfigure}{0.48\textwidth}
        \centering
        \resizebox{\textwidth}{!}{
\begin{tikzpicture}[
    node distance=2.5cm,
    every node/.style={align=center, font=\sffamily\small},
    env/.style={rectangle,     draw=blue!50!black,  fill=blue!10,  minimum width=3.5cm, minimum height=4.0cm, rounded corners=5pt, drop shadow},
    buffer/.style={rectangle,  draw=gray!50!black,  fill=gray!20,  minimum width=2.5cm, minimum height=2.5cm, rounded corners=5pt, drop shadow},
    process/.style={rectangle, draw=green!50!black, fill=green!10, minimum width=2.5cm, minimum height=2.5cm, rounded corners=5pt, drop shadow},
    risk/.style={rectangle,    draw=red!50!black,   fill=red!10,   minimum width=1.5cm, minimum height=1.5cm, rounded corners=5pt, drop shadow},
    arrow/.style={->, thick, -{Stealth}, bend angle=15, color=black!80},
]

\node[env] (env) {
    \includegraphics[width=1cm]{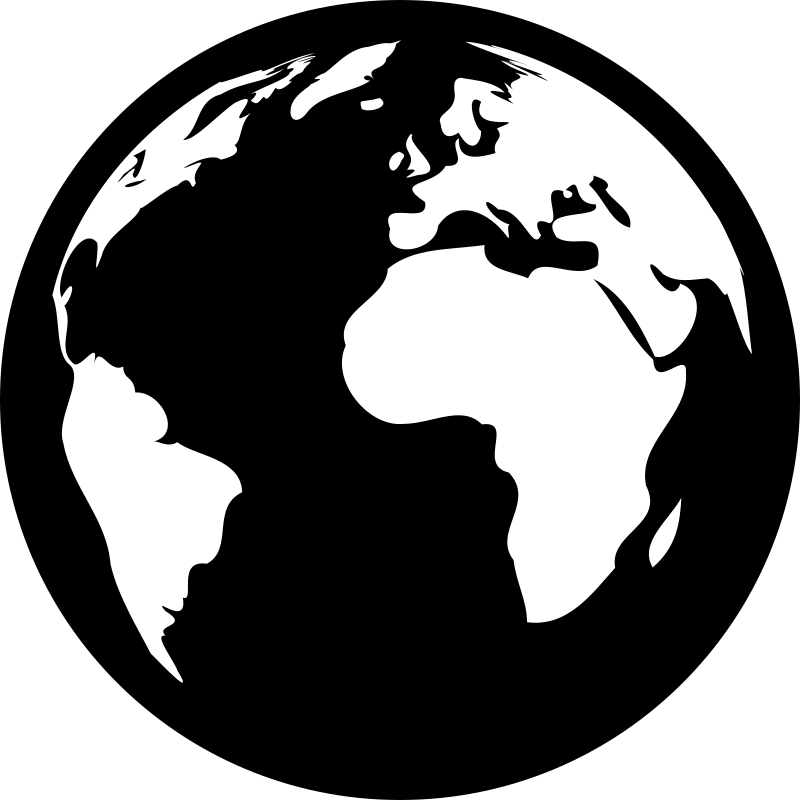} \\[15pt]
    \includegraphics[width=1cm]{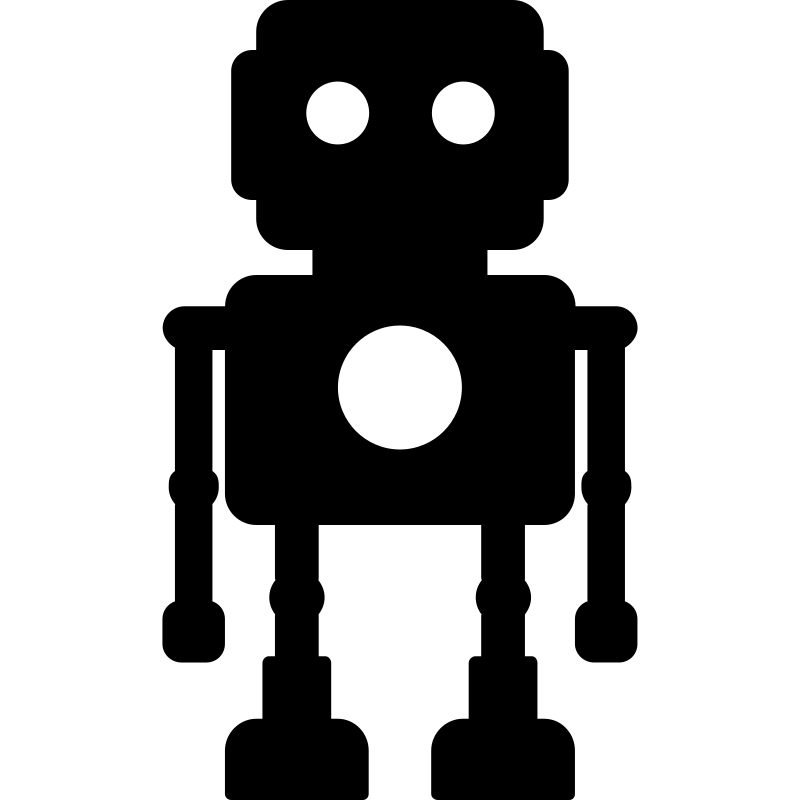}
};

\draw[arrow] ([xshift=-0.6cm, yshift=0.7cm] env.center) to[bend right=50] ([xshift=-0.6cm, yshift=-0.7cm] env.center); 
\draw[arrow] ([xshift=0.6cm, yshift=-0.7cm] env.center) to[bend right=50] ([xshift=0.6cm, yshift=0.7cm] env.center);

\node[buffer, right=of env, xshift=1cm, yshift=2cm] (buffer) {
    \includegraphics[width=1cm]{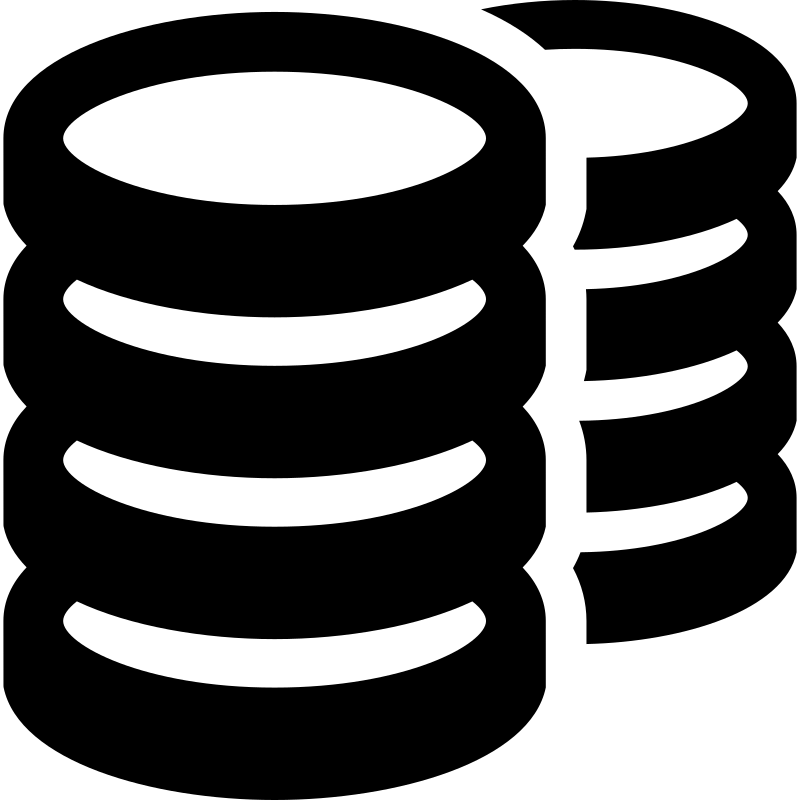} \\[5pt]
    \large Dataset \(\mathcal{D}\)
};

\node[process, below=of buffer, xshift=-3cm] (actor) {
    \includegraphics[width=1cm]{figs/robot.png} \\[5pt]
    \large Actor 
};

\node[process, below=of buffer, xshift=3cm] (critic) {
    \includegraphics[width=1cm]{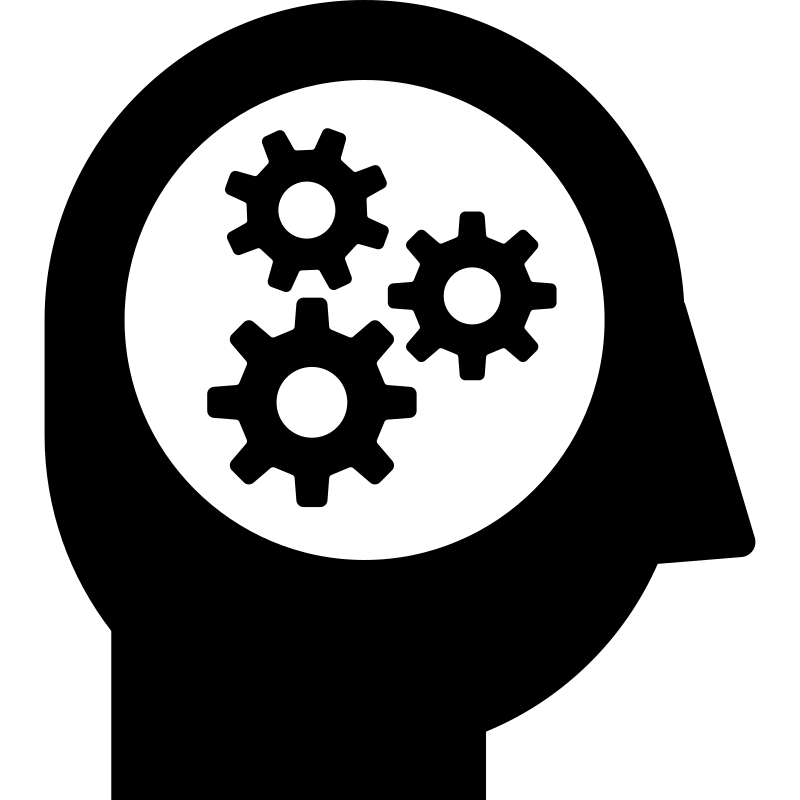} \\[5pt]
    \large Distributional \\ \large Critic
};

\node[risk, below=of buffer, yshift=-2.5cm] (functionh) {
    \includegraphics[width=0.7cm]{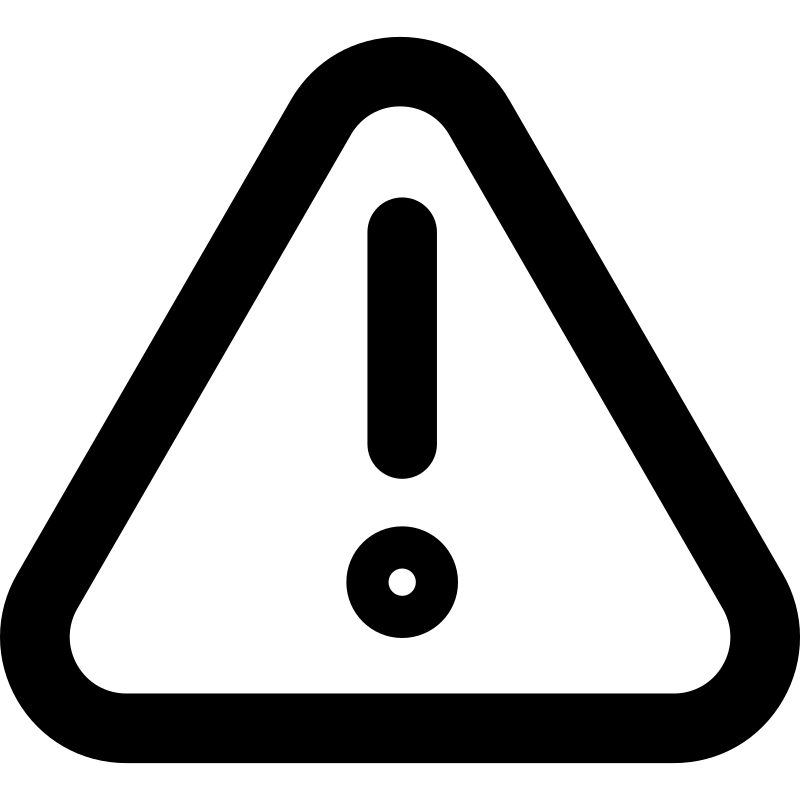} \\[2pt]
    \large Function $h$
};

\node[below=of buffer, yshift=1cm] (mid) {};

\draw[arrow, blue] (env.north) |- ++(0,2) -|  node[pos=0.25, above] {\large \(\bm{\{\bar{x}, a, r, \bar{x}^\prime\}}\)} (buffer.north) ;
\draw[thick] (buffer.south) to node {} (mid.north); 
\draw[arrow] (mid.north) -| node[pos=0.25, above] {\large \(\bm{\{\bar{x}\}}\)} (actor.north);
\draw[arrow] (mid.north) -| node[pos=0.25, above] {\large \(\bm{\{\bar{x}, a, r, \bar{x}^\prime\}}\)} (critic.north);

\draw[arrow] ([yshift=-0.5cm] critic.west) -- node[pos=0.5, above] {\large \(\bm{G^{\pi_{k}}(\bar{x},a)}\)} ([yshift=-0.5cm] actor.east);
\draw[arrow] ([yshift=0.5cm] actor.east)  -- node[pos=0.5, above] {\large \(\bm{\pi_{k}(\bar{x}^\prime)}\)} ([yshift=0.5cm] critic.west);

\draw[arrow, blue] (actor.west) -| node[pos=0.25, below] {\large \(\bm{\pi_{k+1}}\)} (env.south);

\draw[arrow] (critic.south) |- node[pos=0.75, above] {\large \(\bm{G^{\pi_{k}}}\)} (functionh.east);
\draw[arrow] (functionh.west) -| node[pos=0.25, above] {\large \(\bm{h_{\phi, G^{\pi_{k}}}}\)} (actor.south);

\end{tikzpicture}
}
        \label{fig:diagram}
    \end{subfigure}
    \hfill
    \begin{subfigure}{0.48\textwidth}
        \centering
        \includegraphics[width=\textwidth]{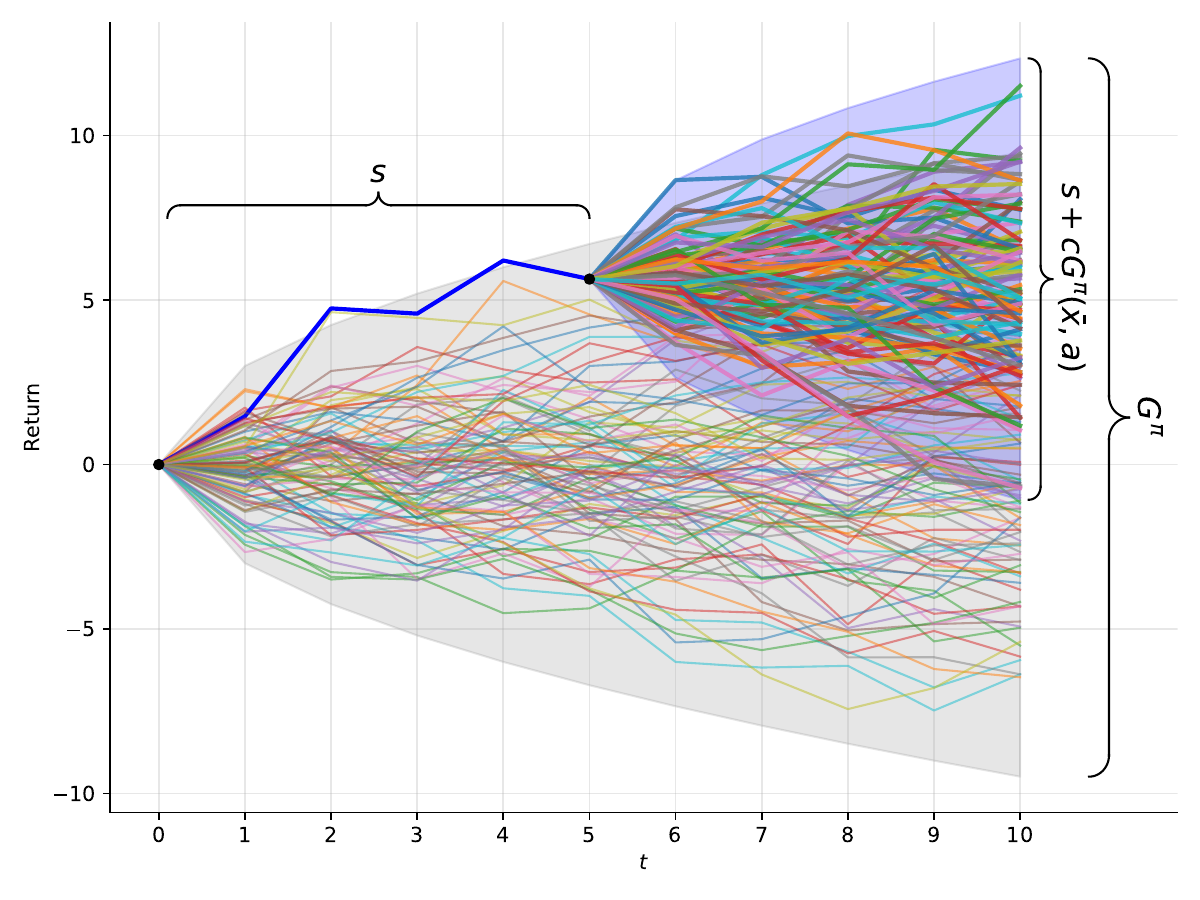}
        \label{fig:scG}
    \end{subfigure}
    \caption{{\textbf{(Left)} Diagram of our framework. The blue connections indicating the interaction of the new policy with the environment and storing the new interaction in the dataset are not present in the offline setting. Finding the solution of the outer optimization can be interpreted as updating the actor's perception of the initial state's return distribution $(G^{\pi_k})$, which is then combined with the actor's risk preference $\phi$ to calculate the Q values. \textbf{(Right)} The static SRM is defined for \( G^\pi \), so calculating the value of a future state-action requires an adjustment of \( G^{\pi}(\bar{x}, a) \) to the initial timestep using \( c \) and \( s \). }}
    \label{fig:both}
\end{figure*}

\begin{algorithm}
\DontPrintSemicolon
\SetAlgoLined
\SetNoFillComment
\LinesNotNumbered 
\caption{Static Spectral Risk Actor-Critic Framework: Bi-level Optimization} 
\label{alg:acsrm}
\textbf{Input:} A random initialization of $G^{\pi_0}$\;
\For{$k = 0, 1, \cdots$}{
\textbf{Step 1:} \tcp{The Closed-form Solution in Equation \ref{functionh}}
$\quad h_{k+1} = \arg\max_h J(\pi_k,h) = \tilde{h}_{\phi, G^{\pi_k}}$ \;
\textbf{Step 2:} \tcp{The Inner Optimization}
$\quad \pi_{k+1} = \arg\max_{\pi} J(\pi,h_{k+1})$ \\
$\quad G^{\pi_{k+1}} = \bar{\mathcal{T}}^{\pi_{k+1}}G^{\pi_{k+1}}$\;
}
\end{algorithm}

In Step 2 of Algorithm \ref{alg:acsrm}, we use a distributional critic to estimate the return distribution for state-action pairs. The corresponding Q-value function for policy gradient is defined as:  
\begin{equation}
\label{eq:q_func}
    Q^{\pi}_{h}(\bar{x}, a):=\mathbb{E}\left[h\left(s + c G^{\pi}(\bar{x}, a)\right)\right]/c,
\end{equation}
where the division by \( c = \gamma^t\) offsets the discounting effect applied to \( G^{\pi}(\bar{x}, a) \). To build intuition for the Q-value function, we examine the structure of \( h \) from Equation \ref{functionh}. The integrand can be rewritten as:  
\begin{align}
    \label{eq:conversion}
    K_\alpha+\frac{1}{\alpha}&\min\left(s + c G^{\pi}(\bar{x}, a),q_\alpha\right) \nonumber \\
    & =  c\left(\bar{K}_\alpha(s,c)+\frac{1}{\alpha}\min\left(G^{\pi}(\bar{x}, a),\dfrac{q_\alpha-s}{c}\right)\right),
\end{align}
where \( q_\alpha = F_{G^\pi}^{-1}(\alpha) \) is the quantile function of \( G^\pi \), and \( K_\alpha := \frac{\alpha q_\alpha - q_\alpha}{\alpha} \) and \( \bar{K}_\alpha(s,c) := \frac{\alpha q_\alpha - q_\alpha + s}{\alpha c} \) are constant terms for a given \( \alpha \) and \( (s, c) \), independent of the action. This formulation highlights that the factor \( c \) can be factored out of \( h \), justifying the division by \( c \) in Equation \eqref{eq:q_func}.

Equation \eqref{eq:conversion} also demonstrates how the extended state variables \( s \) and \( c \) allow the function \( h \) to apply the same risk preference at different time-steps. Comparing the return \( G^\pi \) to returns received at future time-steps requires alignment to a common reference frame. On the left-hand side of the equation, the term \( s + c G^{\pi}(\bar{x}, a) \) adjusts the future return to the initial time-step by incorporating the discount factor \( c \) and past return \( s \). On the right-hand side, the term \( (q_\alpha - s)/c \) represents the adjusted value of \( q_\alpha \) at time \( t \), enabling a direct comparison with \( G^{\pi}(\bar{x}, a) \).

\subsection{Risk-sensitive Policy Learning in the Online Setting: AC-SRM}

We begin with AC-SRM (Actor-Critic with Spectral Risk Measure), designed for the online reinforcement learning setting, where the agent interacts with the environment to collect data during training. AC-SRM directly optimizes a static spectral risk measure (SRM) of the return, allowing the agent to learn policies that reflect specific risk preferences. Using the Q-value function from Equation \ref{eq:q_func}, the state value function is computed as $V^{\pi}_{h}(\bar{x})=\mathbb{E}_{a\sim \pi(a\mid\bar{x}) }\left[Q^{\pi}_{h}(\bar{x},a)\right]$, and the advantage function is defined as $A^{\pi}_{h}(\bar{x}, a):= Q^{\pi}_{h}(\bar{x}, a) - V^{\pi}_{h}(\bar{x})$. With this advantage function, the gradient of the objective with respect to the parameters of policy $\pi_\theta$ can be expressed as follows:
\begin{equation}
\label{eq:policy_gradient}
\nabla_\theta J(\pi_\theta,h) = \mathbb{E}_{d_{\xi_0}^{\pi_\theta}, \pi_\theta}\left[\nabla_\theta\log\pi_\theta(a\mid\bar{x}) A^{\pi_\theta}_{h}(\bar{x}, a)\right]/(1-\gamma),
\end{equation}
where $d_{\xi_0}^{\pi_\theta}$ denote the state occupancy measure when following policy $\pi_\theta$. The following theorem establishes the convergence of this policy gradient method in the finite states and actions setting, with the proof available in \ref{app:ac_inner}. Notably, the assumption of finite extended states implies finite horizon and that the rewards are drawn from a finite set $\hat{\mathcal{R}}\subset[R_{\min},R_{\max}]$.

\begin{theorem}
\label{theorem:ac_inner}
 In the tabular setting (finite states and actions), let the policy $\pi_\theta$ be parameterized by $\theta$ using a softmax function, and let $J(\pi_\theta,h)$ be the performance objective. Assume the parameters $\theta$ are updated according to the Natural Policy Gradient (NPG) scheme \citep{Kakade2001}, based on the policy gradient given in Equation \ref{eq:policy_gradient}. If the updates use a learning rate $\eta_t$ satisfying the Robbins-Monro condition $(\sum_t \eta_t =\infty, \sum_t \eta_t^2 <\infty)$, then $\pi_\theta$ converges to the optimal policy of the inner optimization within the class of representable softmax policies $\Pi$, i.e. $\pi_h^*:=\arg\max_{\pi_\theta \in \Pi}J(\pi_\theta,h)$.
\end{theorem}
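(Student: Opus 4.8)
The plan is to reduce the inner optimization for a fixed risk function $h$ to a standard expected-reward maximization over the extended MDP, and then invoke convergence guarantees for the Natural Policy Gradient in the tabular softmax setting. First I would observe that once $h \in \mathcal{H}$ is fixed, the term $\int_0^1 \hat{h}(\phi(u))\,\mathrm{d}u$ in $J(\pi, h)$ is a constant independent of $\pi$, so maximizing $J(\pi, h)$ is equivalent to maximizing $\mathbb{E}[h(G^\pi)]$. Using the extended state $\bar{x} = (x, s, c)$ with transitions $S_{t+1} = S_t + C_t R_t$ and $C_{t+1} = \gamma C_t$, the accumulated variable satisfies $S_t = \sum_{k<t}\gamma^k R_k$, so that $s + c\,G^\pi(\bar{x}, a)$ is exactly the total discounted return $G^\pi$ measured from the initial time-step. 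Hence $\mathbb{E}[h(G^\pi)]$ is the expected terminal reward $\mathbb{E}[h(S_T)]$ of a finite-horizon MDP on $\bar{\mathcal{X}}$, the horizon being finite because, as noted, finitely many extended states force bounded $t$ and a finite reward set $\hat{\mathcal{R}}$.

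Next I would verify that $Q^\pi_h$, $V^\pi_h$, and $A^\pi_h$ defined in the text are the genuine state-action value, value, and advantage functions of this reformulated MDP, so that Equation \eqref{eq:policy_gradient} is the exact policy gradient for the objective $J(\pi, h)$. The only non-trivial bookkeeping is the normalization by $c$: Equation \eqref{eq:conversion} shows the factor $c = \gamma^t$ can be pulled out of $h$, which is precisely what makes $Q^\pi_h(\bar{x}, a) = \mathbb{E}[h(s + cG^\pi(\bar{x},a))]/c$ satisfy a linear Bellman recursion in the extended MDP. With this identification, the inner problem is a standard finite MDP, and the softmax-parameterized objective $J(\pi_\theta, h)$ admits the usual policy-gradient and compatible-function-approximation structure.

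I would then appeal to the convergence theory of NPG for tabular softmax policies. For exact gradients, global convergence of NPG to the optimal policy within the softmax class follows from the closed-form multiplicative-weights form of the softmax NPG update \citep{Kakade2001} together with the monotone improvement of $J(\pi_\theta, h)$. To handle the Robbins-Monro step sizes $\eta_t$ and the sampled advantage estimates, I would cast the update as a stochastic approximation $\theta_{t+1} = \theta_t + \eta_t\big(g(\theta_t) + M_t\big)$, where $g$ is the expected NPG direction and $M_t$ is a martingale-difference noise with bounded conditional variance (guaranteed by boundedness of the rewards and of $h$ on the compact return range). The conditions $\sum_t \eta_t = \infty$ and $\sum_t \eta_t^2 < \infty$ let the ODE method control the noise, so the iterates track the mean-field flow $\dot{\theta} = g(\theta)$, whose limit set coincides with the maximizers $\pi_h^*$ of the inner objective.

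The main obstacle I anticipate is establishing global, rather than merely stationary, convergence of this mean-field flow: the objective $J(\pi_\theta, h)$ is non-concave in $\theta$, so the argument must rely on the special geometry of NPG, namely that the natural-gradient preconditioning turns the update into mirror ascent in policy space, for which the performance-difference lemma yields a global improvement bound that rules out spurious stationary points. Controlling the interaction between the diminishing step size, the martingale noise, and this global-optimality certificate, rather than any single ingredient in isolation, is where the proof requires the most care.
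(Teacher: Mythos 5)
Your outline shares the paper's skeleton up to a point: the paper also fixes $h$, works in the extended MDP, proves a risk-sensitive performance difference lemma (following \citet{Kim.etal2024a}), exploits the multiplicative-weights form of the softmax NPG update, and uses the $\phi(0)$-Lipschitz property of $h$ to bound the advantage by $\phi(0)(G_{\max}-G_{\min})$. However, there is a genuine gap in how you propose to finish, and it concerns precisely the step you defer to your last paragraph. First, the setting is misread: the theorem and the paper's proof concern \emph{exact} NPG updates, so there is no sampled-advantage martingale noise to average out, and the Robbins--Monro condition is not used for noise control. It enters a purely deterministic argument: an improvement lower bound $J(\pi_{t+1},h)-J(\pi_t,h)\geq \frac{1-\gamma}{\eta_t}\sum_{\bar{x}}\xi_0(\bar{x})\log Z_t(\bar{x})\geq 0$ (giving monotonicity), an improvement upper bound of order $\eta_t\left(\phi(0)(G_{\max}-G_{\min})\right)^2/(1-\gamma)^2$ (via H\"older's inequality and $\|\pi_{t+1}-\pi_t\|_1\leq\|\theta_{t+1}-\theta_t\|_\infty$), and a KL-to-optimum telescoping that together yield
\[
\sum_t \eta_t\left(J(\pi^*,h)-J(\pi_t,h)\right)\;\leq\; \mathrm{D}_{\mathrm{KL}}\left(\pi^*\,\|\,\pi_0\right) + K\sum_t \eta_t^2 \;<\;\infty ,
\]
after which $\sum_t\eta_t=\infty$ and monotonicity force $J(\pi_t,h)\to J(\pi^*,h)$.

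Second, and more seriously, your stochastic-approximation/ODE route does not deliver the conclusion even if carried out: the mean-field flow of softmax NPG is a replicator-type dynamic on the simplex whose rest points include \emph{every} deterministic (corner) policy, including suboptimal ones, so the claim that ``the limit set coincides with the maximizers $\pi_h^*$'' is false as stated. The ODE method alone gives at most convergence to this stationary set; excluding the spurious rest points requires exactly the KL-potential (mirror-descent) argument sketched above, with $\mathrm{D}_{\mathrm{KL}}(\pi^*\|\pi_t)$ serving as the Lyapunov-like quantity. You correctly identify this as the main obstacle, but identifying it is where your proposal stops, and that argument is not a refinement of the proof --- it \emph{is} the proof. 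As written, the proposal establishes monotone improvement and tracking of a flow, but not global convergence to $\pi_h^*$.
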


While Theorem \ref{theorem:ac_inner} discusses the convergence of the inner optimization, our next theorem discusses the convergence of the overall approach, with the proof available in \ref{app:ac_outer}.

\begin{theorem}
\label{theorem:ac_outer}
The policy updates in Algorithm \ref{alg:acsrm} yield monotonic improvement of the objective $J(\pi)$. When using parameterized policies $\pi_{\theta_k}$, the policy improvement is monotonic as well, meaning $J(\pi_{\theta_{k+1}}) \geq J(\pi_{\theta_k})$. Additionally, the sequence $\{J(\pi_{\theta_k})\}_{k=1,\cdots}$ converges, and $\lim\inf_k \|\nabla_\theta J(\pi_{\theta_k})\| = 0$.
\end{theorem}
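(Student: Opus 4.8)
The plan is to recognize Algorithm~\ref{alg:acsrm} as \emph{exact block (alternating) maximization} of the joint objective $J(\pi,h)$ and to combine this with a descent-lemma estimate for the inner problem. For the first two claims (monotonic improvement) the three ingredients are: (i) Step~1 attains the inner supremum over $h$, so by the supremum representation \eqref{eq:srmg} one has $J(\pi_k,h_{k+1}) = \max_h J(\pi_k,h) = J(\pi_k)$; (ii) Step~2, whose convergence to $\pi_{h_{k+1}}^{\ast} := \arg\max_{\pi\in\Pi} J(\pi,h_{k+1})$ is guaranteed by Theorem~\ref{theorem:ac_inner}, gives $J(\pi_{k+1},h_{k+1}) \ge J(\pi_k,h_{k+1})$ because $\pi_k\in\Pi$; and (iii) $J(\pi_{k+1}) = \max_h J(\pi_{k+1},h) \ge J(\pi_{k+1},h_{k+1})$. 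Chaining these yields $J(\pi_{k+1}) \ge J(\pi_{k+1},h_{k+1}) \ge J(\pi_k,h_{k+1}) = J(\pi_k)$, which is exactly monotonic improvement and holds verbatim for the parameterized policies $\pi_{\theta_k}$, since the argument uses only $\pi_{\theta_k}\in\Pi$.

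Convergence of $\{J(\pi_{\theta_k})\}$ then follows from the monotone convergence theorem once the sequence is bounded from above. Because rewards lie in $[R_{\min},R_{\max}]$ with $\gamma<1$, every return $G^\pi$ is supported in a fixed bounded interval, and since $\operatorname{SRM}_\phi$ is monotone with $\int_0^1\phi(u)\,\mathrm{d}u=1$, the value $J(\pi)=\operatorname{SRM}_\phi(G^\pi)$ is uniformly bounded above (e.g.\ by $R_{\max}/(1-\gamma)$). A non-decreasing, bounded sequence converges.

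For the stationarity claim I would use the descent lemma applied to the inner objective. By Danskin's (envelope) theorem, since $J(\pi_\theta)=\max_h J(\pi_\theta,h)$ with maximizer $h^{\ast}(\theta)=h_{\phi,G^{\pi_\theta}}$ and $h_{k+1}=h^{\ast}(\theta_k)$ from Step~1, the outer gradient coincides with the partial gradient of the inner objective, $\nabla_\theta J(\pi_{\theta_k}) = \partial_\theta J(\pi_{\theta_k},h_{k+1})$, which is precisely the policy gradient \eqref{eq:policy_gradient} the algorithm evaluates. Writing $f_{k+1}(\theta):=J(\pi_\theta,h_{k+1})$, which in the tabular softmax setting is $L$-smooth with a constant $L$ uniform in $k$, the descent lemma gives $\|\nabla f_{k+1}(\theta_k)\|^2 \le 2L\big(\sup_\theta f_{k+1} - f_{k+1}(\theta_k)\big)$. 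Since Step~2 drives $f_{k+1}$ to its maximum $J(\pi_{\theta_{k+1}},h_{k+1})$, and since $J(\pi_{\theta_k})=J(\pi_{\theta_k},h_{k+1})$ while $J(\pi_{\theta_{k+1}})\ge J(\pi_{\theta_{k+1}},h_{k+1})$, I obtain the key estimate
\[
\|\nabla_\theta J(\pi_{\theta_k})\|^2 \;\le\; 2L\big(J(\pi_{\theta_{k+1}},h_{k+1}) - J(\pi_{\theta_k},h_{k+1})\big) \;\le\; 2L\big(J(\pi_{\theta_{k+1}}) - J(\pi_{\theta_k})\big).
\]
Summing over $k$ telescopes the right-hand side to $2L\big(\lim_k J(\pi_{\theta_k}) - J(\pi_{\theta_0})\big) < \infty$, so $\sum_k \|\nabla_\theta J(\pi_{\theta_k})\|^2 < \infty$ and hence $\|\nabla_\theta J(\pi_{\theta_k})\| \to 0$, which is in fact stronger than the claimed $\liminf_k\|\nabla_\theta J(\pi_{\theta_k})\|=0$.

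The main obstacle is the verification of the \emph{uniform} smoothness constant $L$ for $f_{k+1}(\theta)=\mathbb{E}[h_{k+1}(G^{\pi_\theta})]+\mathrm{const}$. This needs (a) the known smoothness of softmax policy value functions in finite MDPs, and (b) a bound on the Lipschitz constant of $h_{k+1}$ that is uniform across iterations; the latter follows from Proposition~\ref{prop}, since the piecewise-linear $\tilde h_{\phi,G^{\pi_{\theta_k}}}$ has slopes controlled by $\phi$ and by the quantiles $q_i$, all of which lie in the fixed bounded ranges guaranteed by the finite-horizon, finite-reward setting. Two secondary technical points must also be handled: the envelope-theorem identification of $\nabla_\theta J$ requires the maximizing $h^{\ast}(\theta_k)$ to be essentially unique (so $J(\pi_\theta)$ is differentiable at the iterates), and Step~1 must attain the supremum exactly, so the analysis is carried out with the exact risk function $h_{\phi,G^{\pi_{\theta_k}}}$ of \eqref{functionh} rather than only its quantile approximation.
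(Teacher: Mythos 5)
Your proof of the first two claims coincides with the paper's own argument: the chain $J(\pi_{k+1}) = \max_h J(\pi_{k+1},h) \ge J(\pi_{k+1},h_{k+1}) \ge J(\pi_k,h_{k+1}) = J(\pi_k)$, followed by boundedness of the objective, is exactly how the paper establishes monotonicity and convergence of $\{J(\pi_{\theta_k})\}$. For the stationarity claim, however, you take a genuinely different route. The paper frames Algorithm~\ref{alg:acsrm} as block cyclic coordinate ascent in the two blocks $(\theta,q)$, where $q\in[G_{\min},G_{\max}]^N$ parameterizes $h_q$: it invokes Theorem~4.1(c) of \citet{Tseng2001} to conclude that limits of convergent subsequences are blockwise stationary, then performs an explicit subgradient computation of $\partial_{q_i}\mathbb{E}\left[h_q(Z)\right]$ to show that blockwise stationarity in $q$ forces each $q_i$ (with $w_i\neq 0$) to be a genuine $\hat\tau_i$-quantile of $G^{\pi_{\theta_{\mathcal K}}}$, so that $\nabla_\theta J(\pi_{\theta_{\mathcal K}})=\nabla_\theta J(\pi_{\theta_{\mathcal K}},h_{q_{\mathcal K}})=0$ at such limit points; a final compactness argument guarantees such subsequences exist, which yields only $\liminf_k\|\nabla_\theta J(\pi_{\theta_k})\|=0$. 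You replace all of this with Danskin's theorem plus the descent lemma and telescoping, which is more elementary and self-contained, quantitative (it gives $\sum_k\|\nabla_\theta J(\pi_{\theta_k})\|^2<\infty$, hence an $O(1/K)$ bound on the best squared gradient norm), and delivers the strictly stronger conclusion $\|\nabla_\theta J(\pi_{\theta_k})\|\to 0$. The price is the two premises you flag, and it is worth recording their status. The uniform smoothness of $f_{k+1}$ is unproblematic: $\mathbb{E}\left[h_{k+1}(G^{\pi_\theta})\right]$ is the expected terminal reward of the extended finite-horizon MDP with reward $h_{k+1}$, which is $\phi(0)$-Lipschitz and uniformly bounded because its defining quantiles lie in $[G_{\min},G_{\max}]$; standard softmax smoothness bounds \citep{Mei.etal2020, Agarwal.etal2021a} then give an $L$ independent of $k$. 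The envelope step is the real issue: in the tabular setting the return distribution is discrete, so the maximizing $h$ need not be unique and $J(\pi_\theta)$ can fail to be differentiable precisely at the iterates. But the paper's own proof requires the identical identification $\nabla_\theta J(\pi_{\theta_{\mathcal K}})=\nabla_\theta J(\pi_{\theta_{\mathcal K}},h_{q_{\mathcal K}})$ at its limit points (and copes with the $q$-block nondifferentiability via subgradients), so on this point your argument is at the same level of rigor as the paper's, once gradients are interpreted in the generalized (Clarke) sense or once one assumes, as the theorem statement implicitly does, that $J$ is differentiable along the iterates.
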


Algorithm \ref{alg:stochastic_ac} outlines the steps of our method. To mitigate Q-value overestimation, we employ two critic networks \citep{Hasselt2010}. To further improve training stability, we use a target network, following standard practice in RL, and reduce the frequency of policy updates \citep{Mnih.etal2015, Fujimoto.etal2018}.

\begin{algorithm}[!ht]
\caption{AC-SRM}
\label{alg:stochastic_ac}
\SetAlgoLined
\DontPrintSemicolon
\SetKwInOut{Input}{Input}
\SetKwInOut{Initialize}{Initialize}
{\small
\Input{
Batch size $M$,
Number of quantiles $N$, 
Policy update frequency $d$,
Target smoothing coefficient $\nu$,
Number of policy updates $T_{inner}$,
Number of function $h$ updates $T_{outer}$
}
\Initialize{
Critic networks $G_{\theta_1}, G_{\theta_2}$ and Actor network $\pi_{\theta}=\mathcal{N}(f_\theta,\sigma)$ or $\pi_{\theta}=\operatorname{Categorical}(\operatorname{softmax}(f_\theta))$ with random parameters $\theta_1, \theta_2, \theta$, 
Target network parameters $\theta_1^\prime \leftarrow \theta_1, \theta_2^\prime \leftarrow \theta_2, \theta^\prime \leftarrow \theta$, 
Dataset $\mathcal{D}$
}
\For{$1$ \KwTo $T_{outer}$}{
    \tcp{Update risk function}
    \(
    h =  \tilde{h}_{\phi, G_{\theta_1}(x_0,\pi_{\theta}(x_0))}
    \)
    
    \For{$t=1$ \KwTo $T_{inner}$}{
        \tcp{Collect New Data}
        Observe state $\bar{x}$, select action $a \sim \pi_\theta(\bar{x})$\\
        Execute $a$, observe reward $r$ and next state $\bar{x}^\prime$\\
        Store transition $(\bar{x}, a, r, \bar{x}^\prime)$ into $\mathcal{D}$\\
        \tcp{Update Critic}
        Sample mini-batch of $M$ transitions $(\bar{x}, a, r, \bar{x}^\prime)$ from $\mathcal{D}$\\
        \ForEach{transition in the mini-batch}{
            Sample target action: 
            \[
            a^\prime \sim \pi_{\theta^\prime}(\bar{x}^\prime)
            \]
            
            Compute Q-values for $k = 1, 2$:
            \[
            Q_{k} = \mathbb{E}\left[h\left(s^\prime + c^\prime G_{\theta^\prime_k}(\bar{x}^\prime, a^\prime)\right)\right]/c^\prime
            \]
            
            Select target quantile set:
            \[
            G^\prime(\bar{x}^\prime, a^\prime) = 
            \begin{cases}
            G_{\theta^\prime_1}(\bar{x}^\prime, a^\prime) & \text{if } Q_{1} \leq Q_{2} \\
            G_{\theta^\prime_2}(\bar{x}^\prime, a^\prime) & \text{otherwise}
            \end{cases}
            \]
            
            Compute target quantiles:
            \[
            Y(\bar{x},a) = r + \gamma G^\prime(\bar{x}^\prime, a^\prime)
            \] 
        }
        Minimize quantile regression loss $\mathcal{L}(G_{\theta_k}, Y), k = 1, 2$ (Equation \ref{huberloss}) for the mini-batch
        
        \tcp{Update Actor}
        \If{$t \bmod d = 0$}{
            Compute the advantage function:
            \[
            A(\bar{x},a) = Q_1(\bar{x},a) - \mathbb{E}_{\tilde{a}\sim\pi_{\theta^\prime}}\left[Q_1(\bar{x},\tilde{a})\right]
            \]
            
            Update $\theta$ using the advantage function and the policy gradient in Equation \ref{eq:policy_gradient}

            Update target networks:
            \[
            \theta^\prime_k \leftarrow \nu \theta_k + (1 - \nu) \theta^\prime_k, k=1,2 \quad \theta^\prime \leftarrow \nu \theta + (1 - \nu) \theta^\prime
            \]
        }
    }
}
}
\end{algorithm}

\subsection{Offline Risk-sensitive Learning with Policy Constraints: OAC-SRM}

We next present OAC-SRM (Offline Actor-Critic with Spectral Risk Measure), which extends our framework to the offline reinforcement learning setting, where the agent must learn solely from a fixed dataset $\mathcal{D}$ generated by an unknown behavior policy $\pi_\beta$ without further interaction with the environment. To mitigate distributional shift and reduce extrapolation error, OAC-SRM incorporates a policy constraint that encourages the learned policy to stay close to the behavior policy. 

This framework is especially valuable in multi-stakeholder or high-stakes applications, where risk preferences vary across users or tasks. For example, in financial portfolio management, clients often exhibit different levels of risk aversion. In wealth management or portfolio optimization, historical market data is often the only source available (i.e., the setting is inherently offline). By optimizing policies under different spectral risk measures using the same offline dataset, our framework enables the development of investment strategies tailored to individual risk profiles.

By incorporating this constraint into the policy optimization problem, we arrive at the following objective:
\[
\max_{\pi \in \Pi} \operatorname{SRM}_{\phi} (G^\pi) \quad \text{s.t.} \quad \mathrm{D}_{\mathrm{KL}}(\pi, \pi_\beta) < \epsilon.
\]
As in the previous section, we can use the supremum form of the SRM. Since the constraint does not depend on the function $h$, we can reformulate the problem as an inner-outer optimization:
\[
\max_{h \in \mathcal{H}} \left(\max_{\pi \in \Pi}J(\pi,h) \quad \text{s.t.} \quad \mathrm{D}_{\mathrm{KL}}(\pi, \pi_\beta) < \epsilon \right).
\]

The following theorem closely follows the results of \citet{Peng.etal2019} and \citet{Nair.etal2021}, which derive policy updates under a KL divergence constraint. Our formulation adapts this idea to the risk-sensitive setting by incorporating the risk-adjusted advantage $A_h^{\pi} (\bar{x},a)$, and we treat the Lagrange multiplier $\lambda$ as a tunable hyperparameter. This adaptation enables us to retain the tractability of the update rule while aligning it with the underlying risk-sensitive objective. Our formulation thus generalizes the advantage-weighted actor-critic framework to account for risk preferences in the offline policy learning. The proof of this theorem is available in \ref{app:awac_srm}.
\begin{theorem}
\label{theorem:awac_srm}
For policy search in the inner optimization, the policy constraint can be imposed implicitly (without approximating $\pi_\beta$) using the following policy update rule:
\begin{equation}
    \label{eq:oac_pg}
    \nabla_\theta J(\pi_\theta,h) = \mathbb{E}_{\bar{x},a\sim \mathcal{D}} \left[\nabla_\theta\log \pi_\theta(a \mid \bar{x}) \exp \left(\frac{1}{\lambda}A_h^{\pi_\theta}(\bar{x}, a)\right)\right]/(1-\gamma),
\end{equation}

where the Lagrange multiplier $\lambda$ is treated as a hyperparameter. 
\end{theorem}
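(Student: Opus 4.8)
The plan is to follow the advantage-weighted regression derivation of \citet{Peng.etal2019} and \citet{Nair.etal2021}, replacing the ordinary advantage by the risk-adjusted advantage $A_h^{\pi}$. The first observation is that, for a fixed risk function $h$, the inner objective $J(\pi,h) = \mathbb{E}[h(G^\pi)] + \int_0^1 \hat{h}(\phi(u))\,\mathrm{d}u$ is an ordinary expected-return objective in the extended MDP (the additive constant does not affect the optimizing policy), so the policy gradient theorem applies verbatim with the $h$-transformed Q-function $Q_{h}^{\pi}$ of Equation \eqref{eq:q_func} in place of the usual one. Consequently the constrained inner problem $\max_{\pi}J(\pi,h)$ subject to $\mathrm{D}_{\mathrm{KL}}(\pi,\pi_\beta)<\epsilon$ has exactly the structure treated in the cited references, and the only modification needed is to carry $A_h^{\pi}$ through the argument.

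Next I would set up the constrained optimization at the level of a single policy-improvement step. Writing the improvement objective in terms of the risk-adjusted advantage, the problem at each extended state $\bar{x}$ becomes maximizing $\mathbb{E}_{a\sim\pi}[A_h^{\pi}(\bar{x},a)]$ subject to the KL bound $\mathrm{D}_{\mathrm{KL}}(\pi(\cdot\mid\bar{x}),\pi_\beta(\cdot\mid\bar{x}))<\epsilon$ and the normalization $\sum_a \pi(a\mid\bar{x})=1$. Forming the Lagrangian with multiplier $\lambda$ for the KL constraint and a per-state multiplier for normalization, the KKT stationarity condition $\partial/\partial\pi(a\mid\bar{x})=0$ gives the closed-form nonparametric optimizer
\[
\pi^{*}(a\mid\bar{x}) = \frac{1}{Z(\bar{x})}\,\pi_\beta(a\mid\bar{x})\exp\!\left(\tfrac{1}{\lambda}A_h^{\pi}(\bar{x},a)\right),
\]
where $Z(\bar{x})$ is the per-state partition function enforcing normalization.

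The final step is the projection onto the parametric family, which is where the constraint becomes implicit. Rather than fitting $\pi^{*}$ directly (which would require an explicit estimate of $\pi_\beta$), I would project by minimizing the forward KL divergence $\mathbb{E}_{\bar{x}\sim d_{\xi_0}^{\pi}}[\mathrm{D}_{\mathrm{KL}}(\pi^{*}(\cdot\mid\bar{x}),\pi_\theta(\cdot\mid\bar{x}))]$, which is equivalent to maximizing $\mathbb{E}_{\bar{x}}\mathbb{E}_{a\sim\pi^{*}}[\log\pi_\theta(a\mid\bar{x})]$. Substituting the closed form for $\pi^{*}$ and rewriting the expectation over $\pi^{*}$ as a change of measure to $\pi_\beta$ absorbs the factor $\pi_\beta(a\mid\bar{x})$ into the sampling distribution; since samples $(\bar{x},a)\sim\mathcal{D}$ are drawn according to $\pi_\beta$ along the occupancy measure, the gradient of this surrogate reduces precisely to Equation \eqref{eq:oac_pg}, with the $1/(1-\gamma)$ factor inherited from the occupancy-measure normalization as in Equation \eqref{eq:policy_gradient}.

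I expect the main obstacle to be the treatment of the partition function $Z(\bar{x})$: it must be dropped (or, equivalently, shown to act as a per-state positive scaling that does not alter the ascent direction) to arrive at the stated update, and I would justify this by noting that $Z(\bar{x})$ depends only on $\bar{x}$ and thus plays the role of a state-dependent baseline in the weighted log-likelihood gradient. A secondary subtlety is that $A_h^{\pi}$ depends on the current policy, so the statement should be read as one step of the iterative improvement scheme---the closed-form $\pi^{*}$ is the target for the current iterate's advantage---rather than a solution of a static optimization; making this iterate dependence explicit is needed for the argument to be self-consistent in the extended state space.
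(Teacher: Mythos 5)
Your proposal follows essentially the same route as the paper's proof: form the Lagrangian of the KL-constrained, risk-adjusted advantage maximization, obtain the closed-form nonparametric solution $\pi^{*}(a\mid\bar{x})=\tfrac{1}{Z(\bar{x})}\pi_\beta(a\mid\bar{x})\exp\bigl(\tfrac{1}{\lambda}A_h^{\pi_k}(\bar{x},a)\bigr)$, and project it onto the parametric class via a forward KL, with the dataset $\mathcal{D}$ standing in for sampling from $\pi_\beta$ so that the constraint is enforced implicitly. The only differences are presentational --- the paper first writes the objective as expected improvement via the risk-sensitive performance difference lemma and swaps in the occupancy measure $d_{\pi_\beta}$, and it drops the partition function $Z(\bar{x})$ silently in the final projection step, so your explicitly flagged obstacle is treated no more rigorously there.
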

The update of the function $h$ in the outer optimization follows the same procedure as in the online setting. Although the distributional critic is less accurate in the offline setting, we empirically show in Section \ref{results} that our algorithm provides an effective approach for learning risk-sensitive policies in offline settings. The detailed algorithm for OAC-SRM is outlined in \ref{app:stochastic_oac}.

\subsection{Risk-sensitive Deterministic Policies: TD3-SRM and TD3BC-SRM}

Finally, we extend our framework to deterministic policies and introduce TD3-SRM and TD3BC-SRM, which build on TD3 \citep{Fujimoto.etal2018} and TD3BC \citep{Fujimoto.Gu2021a} for online and offline settings, respectively. Deterministic policies are well-suited for continuous control tasks and are particularly effective in offline learning, where the benefits of stochastic exploration diminish. To support such settings, we derive a risk-sensitive deterministic policy gradient based on the spectral risk-adjusted value function. Note that, unlike stochastic actors which can be tailored for both discrete and continuous action spaces (e.g., using softmax and Gaussian parameterizations, respectively), deterministic actors are limited to MDPs with continuous action spaces.

Our proposed TD3-SRM retains the core components of TD3, including two distributional critics and a deterministic actor. The critic estimates the return distribution via quantile regression, and the actor is trained to maximize the SRM-adjusted Q-value, defined by applying a spectral risk measure to the estimated quantiles. In the offline setting, we adapt this framework as TD3BC-SRM, where the actor objective is augmented with a behavior cloning loss to keep the learned policy close to the behavior policy.

Building on the definition of the action value function (Equation \ref{eq:q_func}) and the deterministic policy gradient \citep{Silver.etal2014a}, we derive a risk-sensitive deterministic policy gradient, which can be approximated as follows.
\begin{align}
\label{eq:td3srm_pg}
    & \nabla_\theta J(\pi_\theta, h)  \nonumber\\ 
    & \approx \mathbb{E}_{\bar{x}\sim d_{\xi_0}^{\pi_\theta}}\left[\left.\nabla_\theta \pi_\theta(\bar{x}) \nabla_{a} Q^{\pi_\theta}_{h}(\bar{x}, a) \right|_{a=\pi_\theta(\bar{x})}\right]  \nonumber\\
    & = \mathbb{E}_{\bar{x}\sim d_{\xi_0}^{\pi_\theta}}\left[\left.\nabla_\theta \pi_\theta(\bar{x}) \mathbb{E}\left[h^\prime(s+cG^{\pi_\theta}\left(\bar{x}, \pi_\theta(\bar{x})\right))\nabla_{a} G^{\pi_\theta}(\bar{x}, a)\right|_{a=\pi_\theta(\bar{x})}  \right] \right].
\end{align} 
Since $h_{\phi, G^{\pi_\theta}}(z)$ is differentiable almost everywhere with derivative $h_{\phi, G^{\pi_\theta}}^{\prime}(z)=\phi\left(F_{G^{\pi_\theta}}(z)\right)$, we have:
\[
h^\prime(s+cG^{\pi_\theta}\left(\bar{x}, \pi_\theta(\bar{x})\right)) = \phi\left(F_{G^{\pi_\theta}}(s+cG^{\pi_\theta}\left(\bar{x}, \pi_\theta(\bar{x})\right))\right)
\]
Compared to the D4PG algorithm \citep{Barth-Maron.etal2018a}, which employs a deterministic policy gradient with a distributional critic, our approach has an additional \( \phi\left(F_{G^{\pi_\theta}}(s + c G^{\pi_\theta}(\bar{x}, a))\right) \) term for risk sensitivity. This term evaluates the relationship between \( G^{\pi_\theta} \) and \( s + c G^{\pi_\theta}(\bar{x}, a) \), and adjusts the coefficient of \( \nabla_a G^{\pi_\theta}(\bar{x}, a) \) according to the agent's risk preferences, defined by the risk spectrum \( \phi \).

To effectively apply this policy gradient in the offline setting while addressing the challenge of distributional shift, a straightforward yet robust approach is to incorporate a regularization term for behavior cloning:
\begin{align}
\label{eq:td3bcsrm_pg}
    \nabla_\theta J(\pi_\theta, h)  \approx \mathbb{E}_{\bar{x},\hat{a} \sim \mathcal{D}} &\left[\nabla_\theta \pi_\theta(\bar{x}) \nabla_{a} Q^{\pi_\theta}_{h}(\bar{x}, a) \right|_{a=\pi_\theta(\bar{x})} \nonumber\\
    & - \lambda\left.\nabla_\theta \pi_\theta(\bar{x}) \left(\pi_\theta(\bar{x})-\hat{a}\right)\right].
\end{align}
in which $\lambda$ controls the strength of the regularizer. The regularization term encourages the policy to produce actions that align with those observed in the dataset.

Deterministic policies, including their risk-sensitive variants, share common issues such as sensitivity to hyperparameter selection and exploitation of errors in the Q-function. To address these challenges, modifications proven to improve deterministic policy performance can be applied in the risk-sensitive context as well. The Twin Delayed DDPG (TD3) algorithm introduces three key techniques that enhance the stability and performance of standard DDPG: Clipped Double-Q Learning, Delayed Policy Updates, and Target Policy Smoothing. We incorporate these techniques to stabilize training with our risk-sensitive deterministic policy gradients. The detailed steps for TD3-SRM and TD3BC-SRM are provided in \ref{app:algodeterministic}.

\section{Experiment Results}
\label{results}

In this section, we benchmark our algorithm against other risk-neutral and risk-sensitive algorithms in both online and offline settings. As our baselines, we select state-of-the-art model-free off-policy algorithms. In the online setting, we benchmark our algorithm against risk-neutral algorithms such as SAC \citep{Haarnoja.etal2018b} and TD3 \citep{Fujimoto.etal2018}, as well as risk-sensitive algorithms like DSAC \citep{Ma.etal2020}. Similarly, in the offline setting, we use AWAC \citep{Nair.etal2021}, CQL \citep{Kumar.etal2020a}, IQL \citep{Kostrikov.etal2021a}, and TD3+BC \citep{Fujimoto.Gu2021a} for risk-neutral comparisons, and ORAAC \citep{Urpi.etal2021a} and CODAC \citep{Ma.etal2021} for risk-sensitive evaluations. 

Risk-sensitive algorithms are identified by their name and the risk measure they optimize. For example, our actor-critic algorithm using CVaR is referred to as AC-CVaR, while DSAC with the iterative CVaR risk measure is called DSAC-iCVaR. The results in both online and offline settings are normalized so that a score of 0 corresponds to a random policy, and a score of 100 corresponds to an expert policy. The hyperparameters of each model, as well as the values used for normalization, are available in the \ref{details}.

We evaluate these models in environments related to finance, healthcare, and robotics. Descriptions of these environments are provided in each section. In offline RL, the policy used to generate the dataset plays a critical role in shaping the learned policy. Following established conventions in the offline RL literature \citep{Fu.etal2021}, we detail the datasets employed in each of our offline experiments. The \textit{Medium} dataset contains transitions from an early-stopped SAC policy, while the \textit{Medium-Replay} dataset is comprised of the replay buffer collected during the training of the Medium policy. The \textit{Medium-Expert} dataset includes a mixture of sub-optimal and expert-level data, and the \textit{Expert-Replay} dataset consists of the replay buffer from training the SAC policy.

\subsection{Mean-reverting Trading.}
In this environment, the agent aims to make a profit by trading an asset that follows a mean-reverting Ornstein-Uhlenbeck process, described by the equation 
\[
\mathrm{d} P_t = \kappa(\zeta - P_t) \mathrm{d} t + \sigma \mathrm{d} W_t,
\]
where $\zeta=1.0$ is the long-term mean, $\kappa=2.0$ controls the speed of reversion to the mean, $\sigma=1.0$ is the volatility of the random fluctuations, and $W_t$ is a standard Wiener process. At each time step, $t = 0, \dots, T-1$, the agent takes an action $a_t \in (-a_{\max}, a_{\max})$, which adjusts its inventory $q_t \in (-q_{\max}, q_{\max})$, representing the quantity of the asset traded. 

The state is represented by a 3-dimensional vector comprising the asset price, the quantity of assets the agent possesses, and the remaining time. The rewards are also structured as follows: for $t = 0$ to $T-2$, the agent’s reward is $r_t = -a_t P_t - \varphi a_t^2$, incorporating transaction costs ($\varphi=0.005$). At the final time step, $t = T-1$, the reward is modified to include an additional term penalizing the agent for holding any remaining inventory, given by $r_{T-1} = -a_{T-1} P_{T-1} - \varphi a_{T-1}^2 + q_T P_T - \psi q_T^2$, where $\psi=0.5$ represents the terminal penalty.

First, we compare our online algorithms, TD3-CVaR and AC-CVaR, to DSAC-iCVaR. All three algorithms are optimized using the CVaR objective with \(\alpha = 0.2\). To evaluate their performance, we simulate 10,000 trajectories and compute the \(\operatorname{CVaR}_\alpha\) of their returns. The x-axis in Figure \ref{fig:trading1} represents CVaR levels, ranging from 0.1 to 1.0, while the y-axis shows the average normalized score \(\pm\) standard deviation across five random seeds. The results indicate that while TD3-CVaR and DSAC-iCVaR achieve similar \(\operatorname{CVaR}_{0.2}\) values, TD3-CVaR performs better in terms of expected return.  

In the offline setting, we observe similar trends. These experiments are conducted using the Expert-Replay and Random datasets. As shown in Figure \ref{fig:trading2}, TD3BC-CVaR and CODAC-iCVaR achieve comparable \(\operatorname{CVaR}_{0.2}\), but differences emerge in their expected returns. When trained on the Random dataset, all models experience a decline in performance due to the lack of expert demonstrations, as expected. However, as illustrated in Figure \ref{fig:trading3}, both of our models successfully extract useful policies from the dataset and outperform ORAAC-iCVaR and CODAC-iCVaR.

\begin{figure*}[!ht]
     \centering
     \begin{subfigure}[b]{0.32\textwidth}
         \centering
         \includegraphics[width=\textwidth]{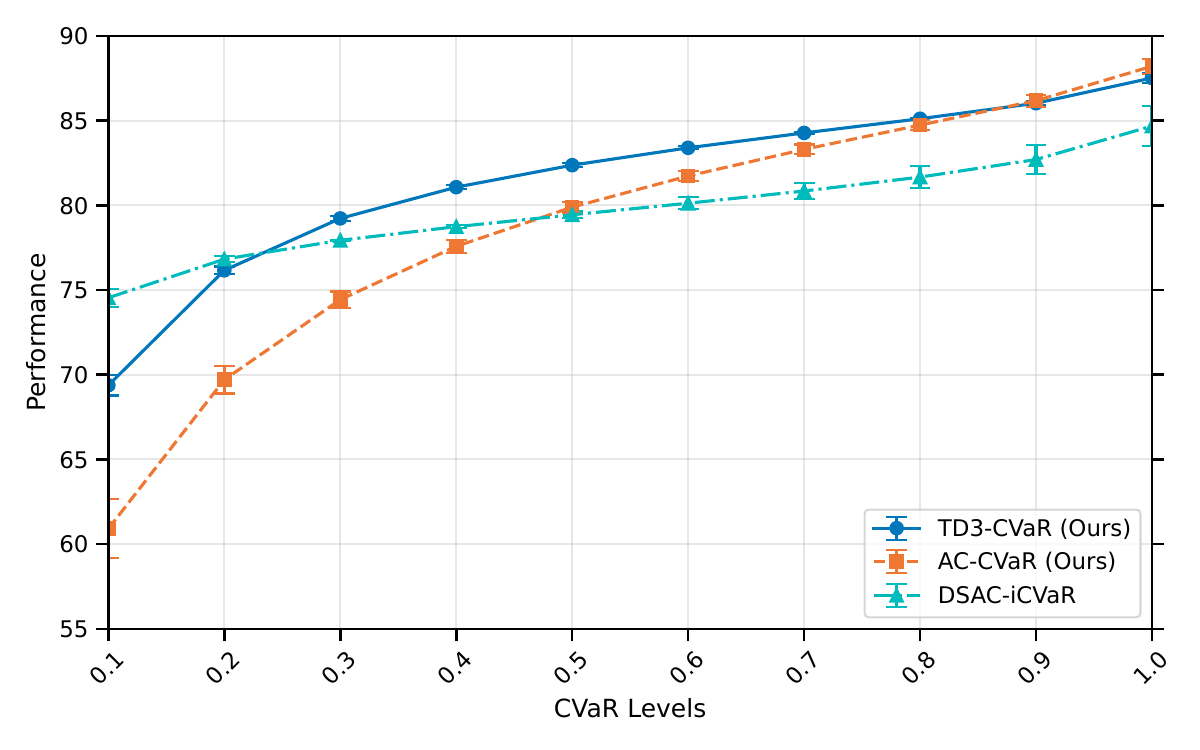}
         \caption{Online}
         \label{fig:trading1}
     \end{subfigure}
     \hfill
     \begin{subfigure}[b]{0.32\textwidth}
         \centering
         \includegraphics[width=\textwidth]{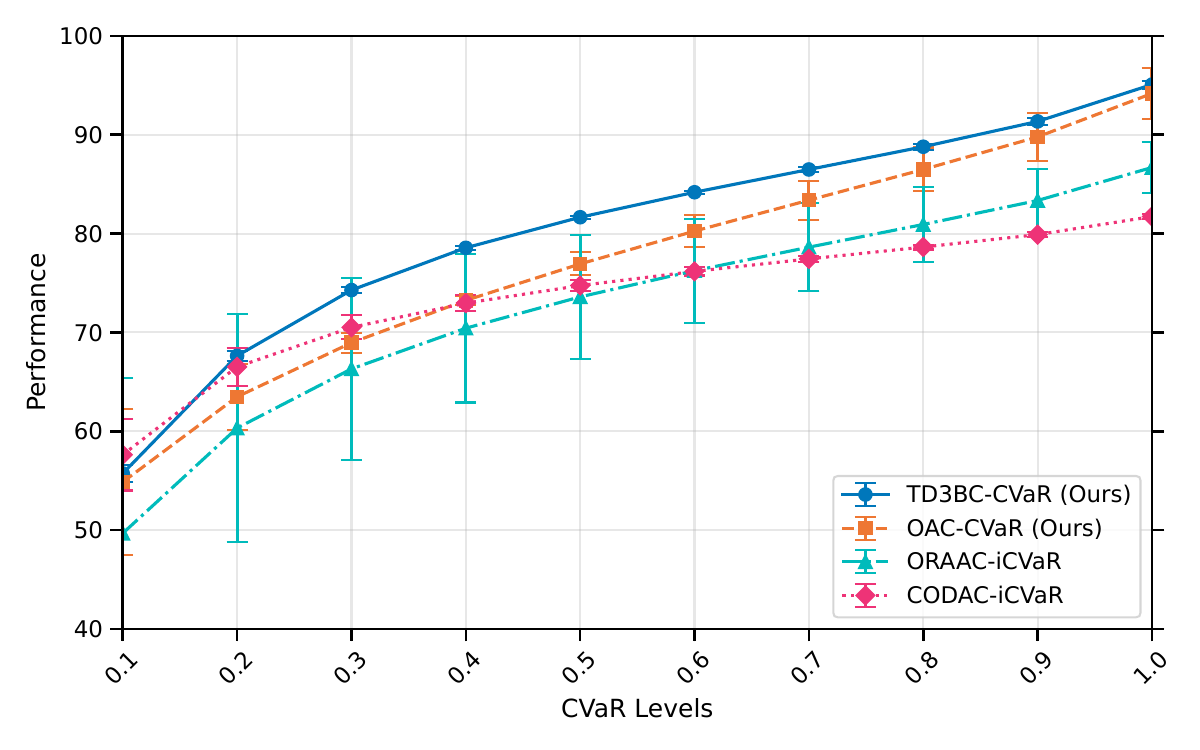}
         \caption{Offline (Expert)}
         \label{fig:trading2}
     \end{subfigure}
     \hfill
     \begin{subfigure}[b]{0.32\textwidth}
         \centering
         \includegraphics[width=\textwidth]{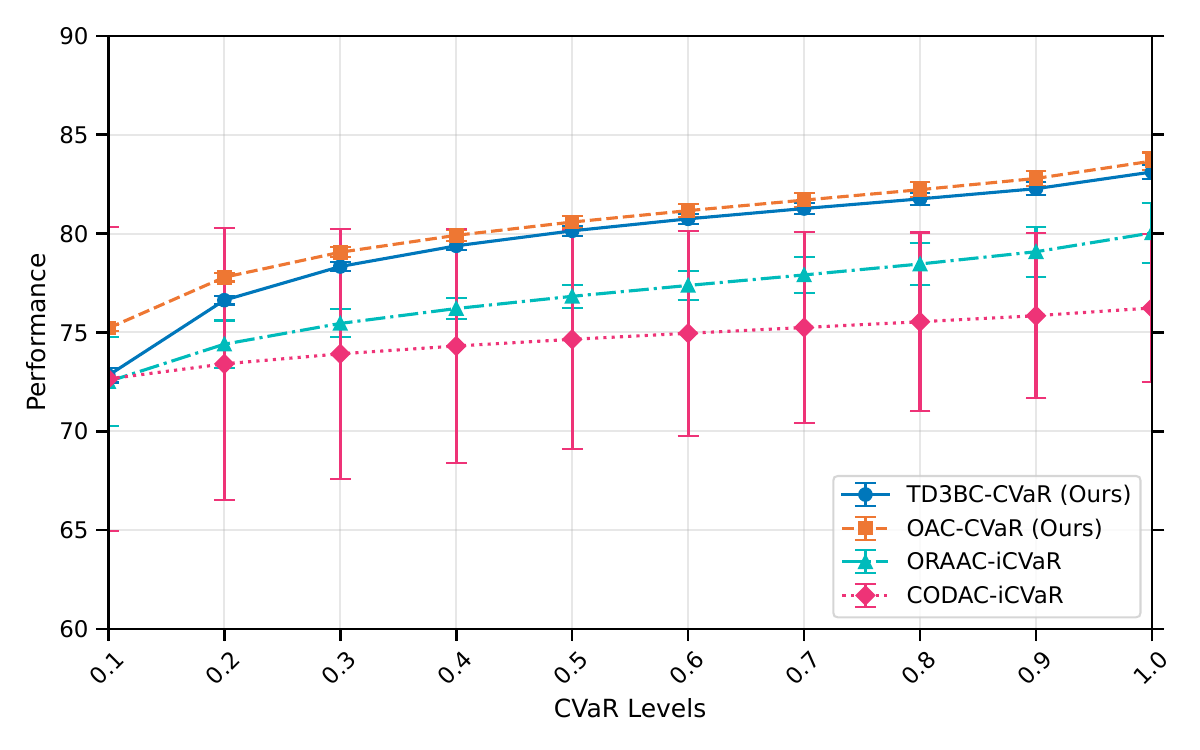}
         \caption{Offline (Random)}
         \label{fig:trading3}
     \end{subfigure}
        \caption{Risk-sensitive performance comparison of online and offline RL algorithms with the $\operatorname{CVaR}_{0.2}$ objective.}
        \label{fig:trading_plot}
\end{figure*}

To isolate the impact of the risk measure from other design choices, we conduct an additional experiment comparing our TD3-Exp model to its iterative risk measure counterpart, TD3-iExp. Figure \ref{fig:exp_online} highlights that iterative risk measures lead to more conservative policies. While both risk measures produce policies that align with their respective risk-sensitive objectives, policies optimized with iterative risk measures yield lower expected returns at higher risk levels.

\begin{figure*}[!ht]
\centering
\includegraphics[width=0.95\linewidth]{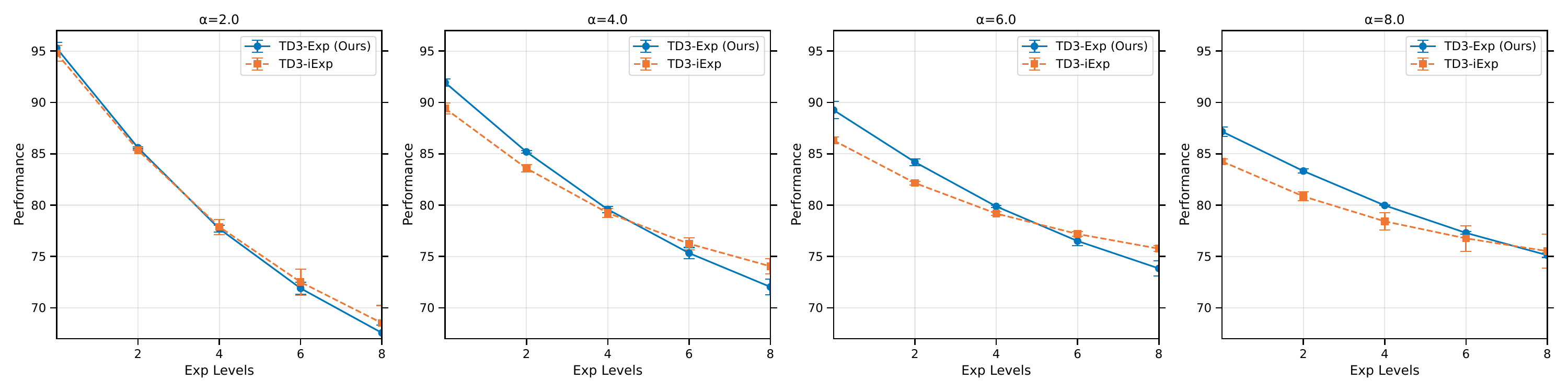}
\caption{Risk-sensitive performance comparison of TD3-Exp and TD3-iExp.}
\label{fig:exp_online}
\end{figure*}

\subsection{Portfolio Allocation}
To evaluate our algorithms in a more realistic setting, we design a portfolio allocation environment based on real-world market data. While RL has proven to be a suitable and widely used approach for portfolio management \citep{Almahdi.Yang2017, Pendharkar.Cusatis2018, Park.etal2020, Wang.Ku2022}, the problem has received little attention in the offline RL setting, despite its practical relevance in financial applications where interaction with the market is costly. Our primary goal with this environment is to demonstrate the effectiveness of learning various risk-sensitive policies in an offline setting. The environment is based on historical daily price data obtained from Yahoo Finance for a set of core assets: SPY (an ETF tracking the S\&P 500 index, representing U.S. equities), GLD (an ETF tracking the price of gold, representing a key alternative asset), and a risk-free cash component with 0\% return. This selection provides a balance between equity market exposure, a non-correlated safe-haven asset, and a risk-free option, capturing essential trade-offs in portfolio allocation among growth, diversification, and capital preservation. The historical data spans twenty years, from January 1, 2005, to December 31, 2024, covering a range of market regimes including bull markets, recessions, and periods of financial stress. To enable realistic backtesting and prevent look-ahead bias, the data is chronologically split: the initial 80\% of the time series serves as the training dataset for policy learning, while the remaining 20\% forms a held-out test set for evaluating the generalization performance of trained agents on unseen future data.

Within the environment, we use historical daily closing prices \( p_{i,t} \) for each asset \( i \) at time \( t \) to compute log-returns defined as \( r_{i,t} = \log(p_{i,t} / p_{i,t-1}) \). The agent observes a state composed of the most recent \( m = 5\) log-returns for each asset, along with the current portfolio weights. A TD3 algorithm is trained to learn an expert policy that outputs the portfolio allocation weights for the next trading period. To reflect practical trading constraints, we enforce that the policy produces non-negative weights that sum to one, effectively prohibiting short selling. The reward at each step is the log-return of the portfolio value, computed using the new weights and asset returns. We also deduct a transaction cost equal to 0.25\% of the trading value at each allocation step to simulate realistic market frictions.

The replay buffer accumulated during TD3 training serves as the Expert-Replay dataset, which we subsequently use to train alternative policies tailored to different risk preferences. This approach enables us to leverage a single dataset to learn a diverse set of risk-sensitive policies without requiring further interaction with the environment.

\begin{figure*}[!ht]
     \centering
     \begin{subfigure}[b]{0.32\textwidth}
         \centering
         \includegraphics[width=\textwidth]{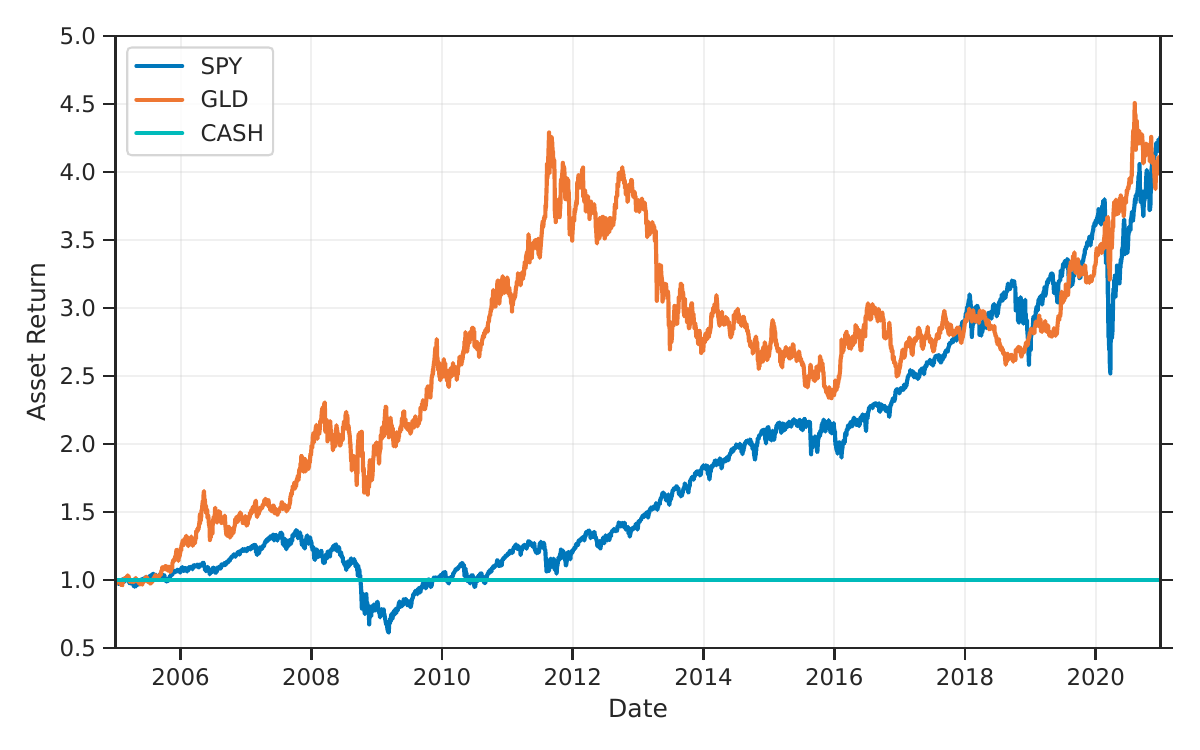}
         \caption{Training Period}
         \label{fig:train}
     \end{subfigure}
     \hfill
     \begin{subfigure}[b]{0.32\textwidth}
         \centering
         \includegraphics[width=\textwidth]{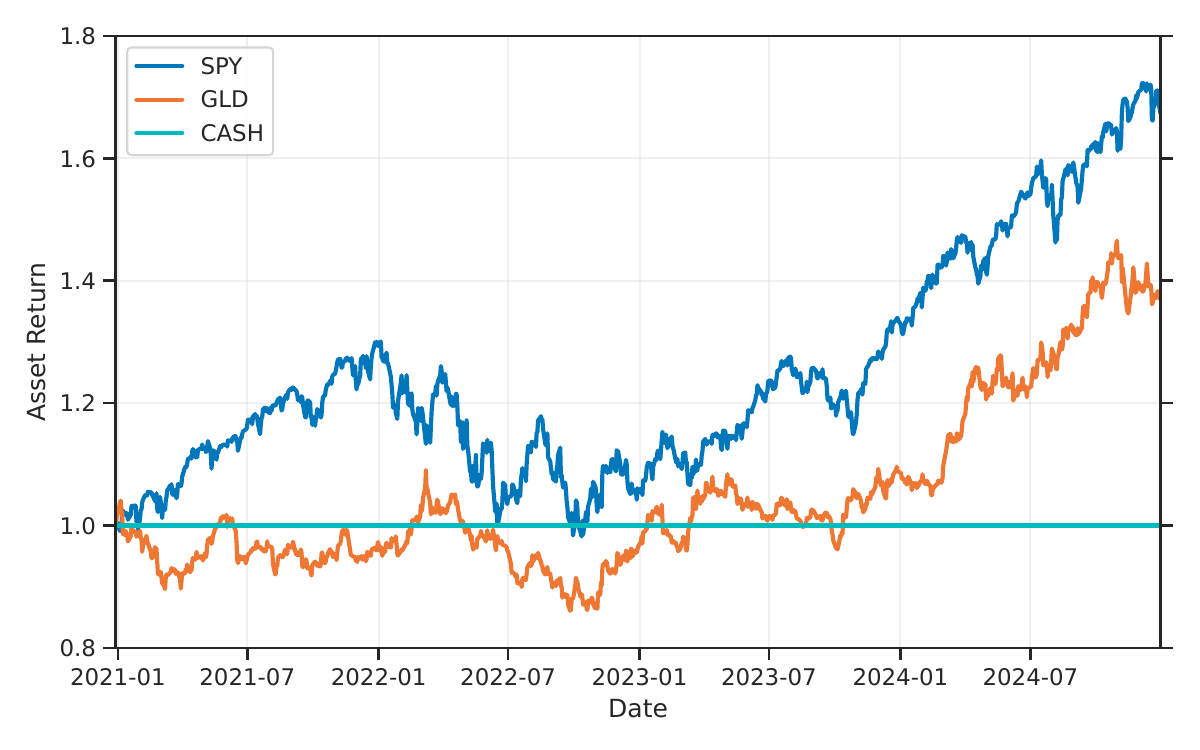}
         \caption{Testing Period}
         \label{fig:test}
     \end{subfigure}
     \hfill
     \begin{subfigure}[b]{0.32\textwidth}
         \centering
         \includegraphics[width=\textwidth]{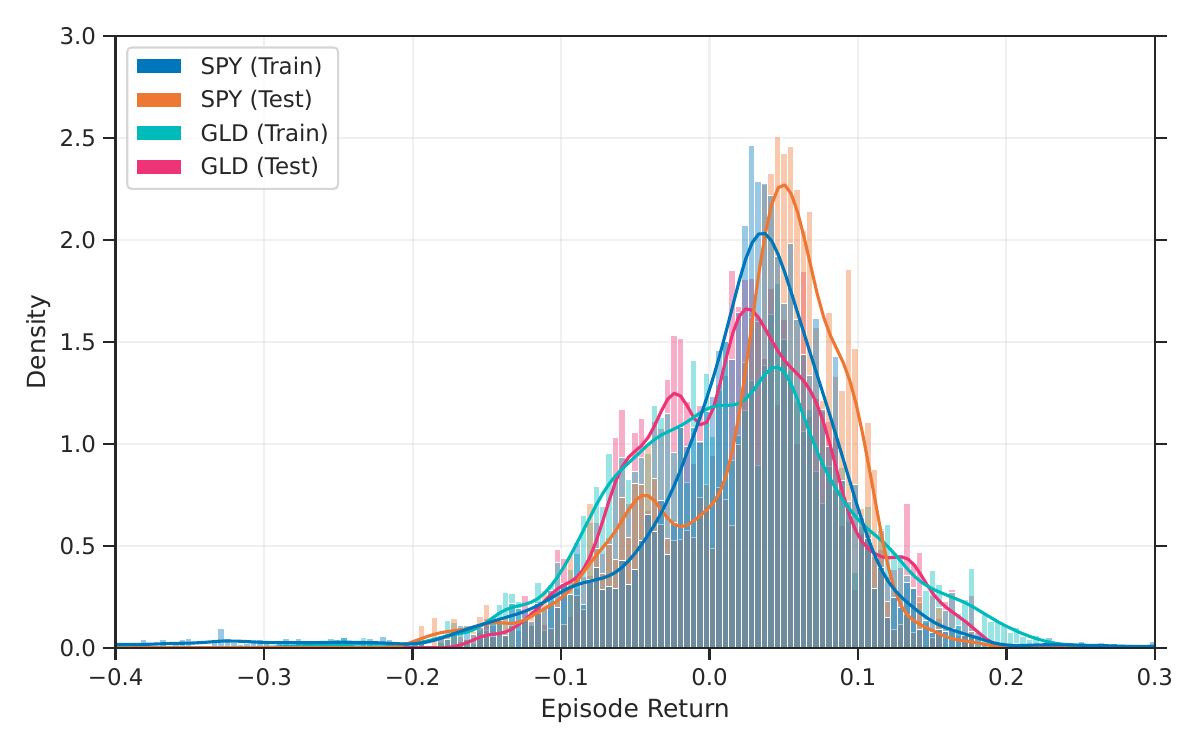}
         \caption{Distribution of Returns}
         \label{fig:dist}
     \end{subfigure}
        \caption{Figures \ref{fig:train} and \ref{fig:test} display the asset returns during the training and testing period. Figure \ref{fig:dist} illustrates the distribution of asset returns in an episode.}
        \label{fig:data_portfo}
\end{figure*}

Each episode in the environment spans 63 trading days, corresponding to approximately three calendar months. The starting point of each episode is randomly selected from the historical data to provide diverse market conditions. Since rewards are defined as log-returns, the cumulative reward over an episode corresponds to the log-return of the portfolio over the entire episode, offering a coherent measure of long-term performance. Figure \ref{fig:data_portfo} provides an overview of the asset data used in the environment. Figures \ref{fig:train} and \ref{fig:test} show the asset return trajectories for SPY, GLD, and the risk-free asset (CASH) during the training and testing periods, respectively. Figure \ref{fig:dist} illustrates the distribution of episode returns across assets, highlighting the variability and skewness in return profiles over different market regimes.

\begin{figure*}[!ht]
     \centering
     \begin{subfigure}[b]{0.48\textwidth}
         \centering
         \includegraphics[width=\textwidth]{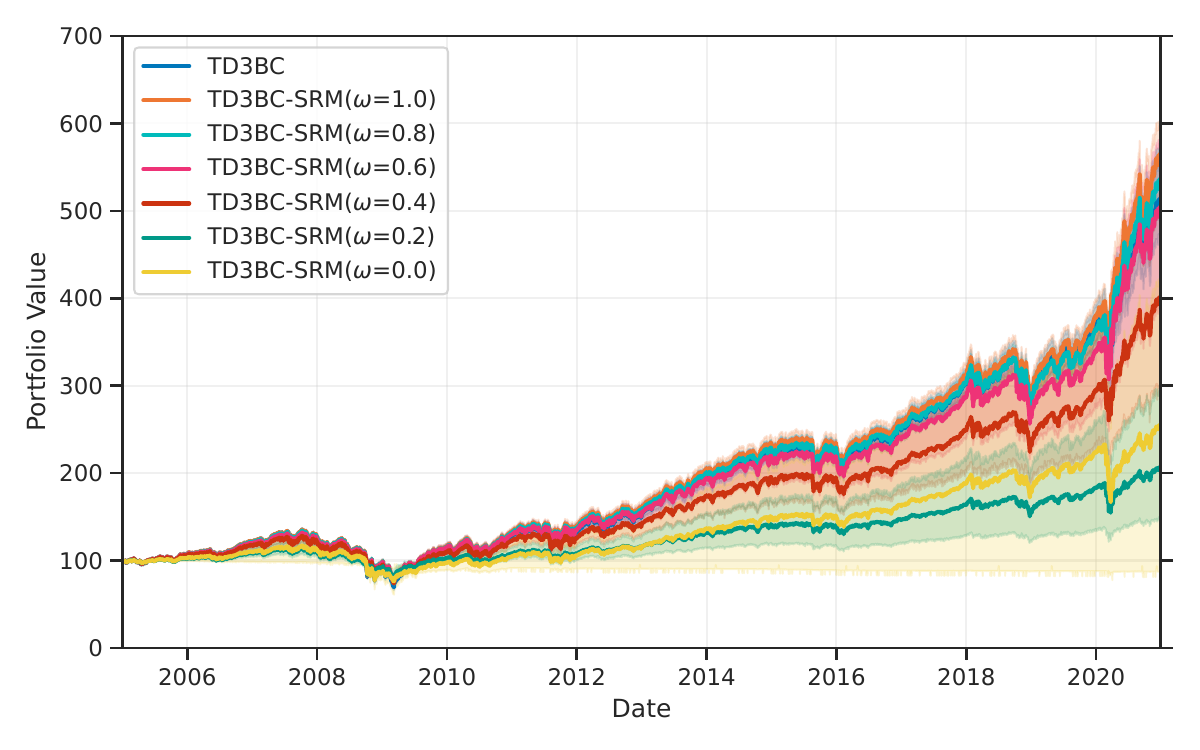}
         \caption{Training Period}
         \label{fig:portfolio1}
     \end{subfigure}
     \hfill
     \begin{subfigure}[b]{0.48\textwidth}
         \centering
         \includegraphics[width=\textwidth]{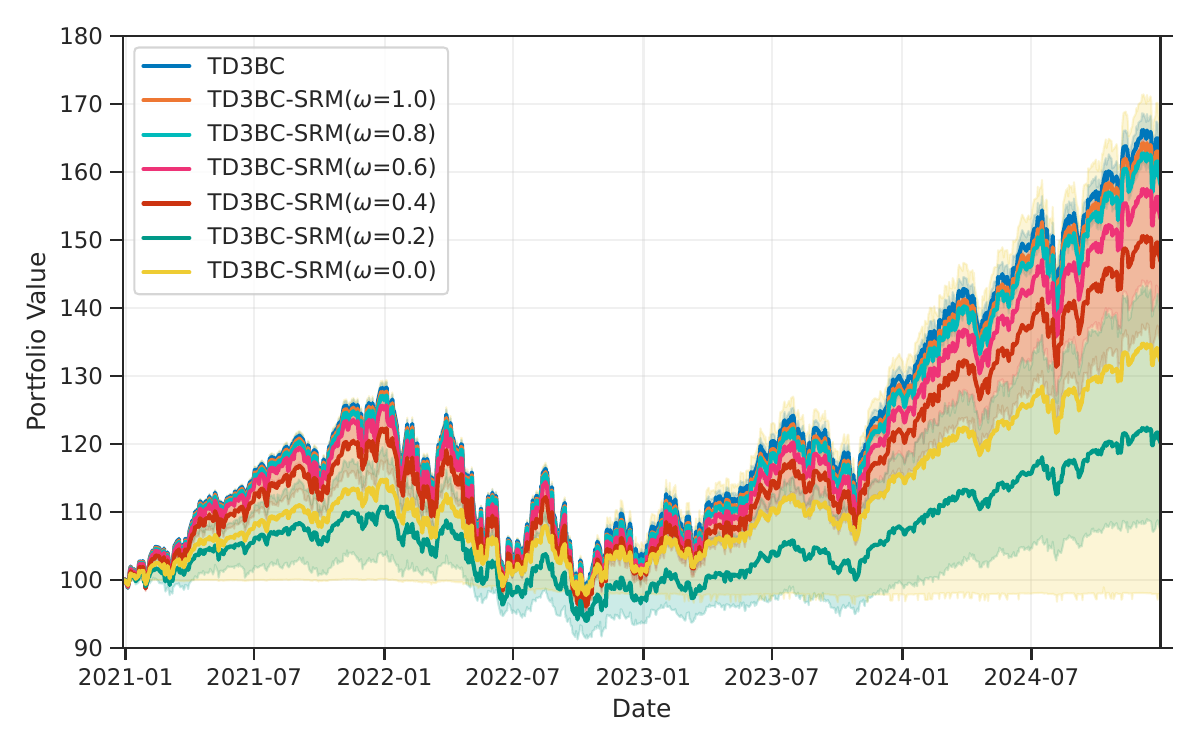}
         \caption{Testing Period}
         \label{fig:portfolio2}
     \end{subfigure}
        \caption{The value of the portfolio in the training and testing period for different policies. The shaded regions correspond to 95\% confidence interval.}
        \label{fig:portfolio}
\end{figure*}

Using the Expert-Replay dataset, we train TD3BC and TD3BC-MC with $\alpha = 0.2$ and $\omega \in \{0.0, 0.2, 0.4, 0.6, 0.8, 1.0\}$, where $\omega$ controls the weight of the expected return in the Mean-CVaR objective. This setup allows us to capture a diverse set of policies aligned with different levels of risk sensitivity. We evaluate the performance of these algorithms during both the training and testing periods using four different metrics. To facilitate fair comparison across algorithms, we first normalize each algorithm’s performance relative to the expert TD3 policy on the training (2.81\% average return) and testing (2.48\% average return) period, following the normalization procedure discussed earlier. We then compute and report the Mean and Conditional Value-at-Risk (CVaR) at a confidence level $\alpha = 0.2$ for this normalized performance, capturing both the average and the worst-case behavior of each algorithm.

In addition, we report the Sharpe Ratio (SR), defined as:
\[
\text{SR} = \frac{\mathbb{E}[R]}{\sigma(R)},
\]
where $R$ represents the daily log-returns over the testing period, $\mathbb{E}[R]$ is the mean return, and $\sigma(R)$ is its standard deviation. A higher Sharpe Ratio indicates better risk-adjusted performance. We also include the Maximum Drawdown (MDD), which measures the largest loss from a peak to a subsequent trough in portfolio value over time. Formally, for a time series of portfolio values $\{V_t\}_{t=1}^T$, MDD is defined as:
\[
\text{MDD} = \max_{t \in [1, T]} \left( \max_{s \in [1, t]} \frac{V_s - V_t}{V_s} \right).
\]

Together, these four metrics provide a comprehensive evaluation of each algorithm’s return profile and risk characteristics. Tables \ref{tab:portfolio_train} and \ref{tab:portfolio} summarize the results for all algorithms in the training and testing periods, respectively.

Unlike TD3BC, which lacks the flexibility to adapt to different risk preferences, our algorithm is explicitly designed to address worst-case outcomes and produce policies tailored to varying levels of risk aversion. As seen in Table \ref{tab:portfolio_train}, during the training period, we observe that the worst-case performance improves as more weight is allocated to CVaR in the Mean-CVaR objective, an expected result given that training and evaluation occur on the same data. Notably, the Mean-CVaR objective with $\omega = 0.2$ achieves the best $\operatorname{CVaR}_{0.2}$ and MDD performance. This outcome can be attributed to more stable training under the Mean-CVaR objective compared to directly optimizing CVaR, as reflected by lower standard deviations across different random seeds.

\begin{table}[!ht]
	\caption{Results for the offline algorithms evaluated in the Training period of the Portfolio Allocation environment. The $\pm$ symbol represents the standard deviation across seeds.}
	\label{tab:portfolio_train}
	\centering
	\resizebox{\linewidth}{!}{
	\begin{tabular}{llllllll}
		\toprule Metric             & TD3BC                & \shortstack{TD3BC-MC\\($\omega$=1.0)} & \shortstack{TD3BC-MC\\($\omega$=0.8)} & \shortstack{TD3BC-MC\\($\omega$=0.6)} & \shortstack{TD3BC-MC\\($\omega$=0.4)} & \shortstack{TD3BC-MC\\($\omega$=0.2)} & \shortstack{TD3BC-MC\\($\omega$=0.0)} \\
		\midrule $\mathbb{E}$       & 92.56$\pm$0.88         & \textbf{93.88$\pm$1.39}                 & 92.49$\pm$2.43                          & 91.55$\pm$4.68                          & 88.71$\pm$4.61                          & 80.26$\pm$5.35                          & 80.79$\pm$11.33                         \\
		$\operatorname{CVaR}_{0.2}$ & 1.20$\pm$5.79          & 6.93$\pm$3.13                           & 8.30$\pm$3.17                           & 11.51$\pm$8.49                          & 14.38$\pm$15.14                         & \textbf{28.61$\pm$24.38}                & 22.66$\pm$48.01                         \\
		SR                          & 0.65$\pm$0.04 &        \textbf{0.69$\pm$0.03}                    & 0.68$\pm$0.03                           & 0.66$\pm$0.08                           & 0.60$\pm$0.07                           & 0.38$\pm$0.12                           & 0.20$\pm$0.41                           \\
		MDD                         & -48.77$\pm$2.83        & -45.45$\pm$1.66                         & -44.88$\pm$2.15                         & -43.19$\pm$5.78                         & -41.72$\pm$9.70                         & \textbf{-31.12$\pm$12.37}               & -36.21$\pm$25.66                        \\
		\bottomrule
	\end{tabular}
	}
\end{table}

Turning to the results of the policies on the unseen testing period (Table \ref{tab:portfolio}), we observe similar improvements in the worst-case performance of TD3BC-MC, especially compared to the risk-neutral TD3BC. In this period, TD3BC-MC with $\omega = 0.0$ achieves the highest $\operatorname{CVaR}_{0.2}$ and Sharpe Ratio, demonstrating the effectiveness of our risk-sensitive algorithm. An interesting observation is that, in Table \ref{tab:portfolio_train} for the training period, we see a significant increase in $\operatorname{CVaR}_{0.2}$ and a decline in average performance as $\omega$ decreases from 0.4 to 0.2. Table \ref{tab:portfolio} shows that this pattern is also present in the testing period. This finding has important practical implications as different risk-sensitive policies trained offline on the dataset exhibit consistent behavior when evaluated on unseen data. Consequently, users can train a diverse set of risk-sensitive policies without additional environment interaction and select among them based on their desired risk-return profile.

\begin{table}[!ht]
	\caption{Results for the offline algorithms evaluated in the Testing period of the Portfolio Allocation environment. The $\pm$ symbol represents the standard deviation across seeds.}
	\label{tab:portfolio}
	\centering
	\resizebox{\linewidth}{!}{
	\begin{tabular}{llllllll}
		\toprule Metric & TD3BC & \shortstack{TD3BC-MC\\($\omega$=1.0)} & \shortstack{TD3BC-MC\\($\omega$=0.8)} & \shortstack{TD3BC-MC\\($\omega$=0.6)} & \shortstack{TD3BC-MC\\($\omega$=0.4)} & \shortstack{TD3BC-MC\\($\omega$=0.2)} & \shortstack{TD3BC-MC\\($\omega$=0.0)} \\
		\midrule $\mathbb{E}$           & \textbf{99.97$\pm$1.85} & 99.17$\pm$0.68                          & 97.71$\pm$2.32                          & 96.26$\pm$6.10                          & 94.25$\pm$5.66                          & 84.54$\pm$8.75                          & 87.43$\pm$16.53                         \\
		$\operatorname{CVaR}_{0.2}$     & 0.80$\pm$2.19           & 0.91$\pm$1.71                           & 3.56$\pm$5.37                           & 5.81$\pm$8.89                           & 11.69$\pm$15.52                         & 30.65$\pm$15.93                         & \textbf{33.23$\pm$40.60}                \\
		SR                              & 0.82$\pm$0.01           & 0.81$\pm$0.02                           & 0.81$\pm$0.01                           & 0.76$\pm$0.13                           & 0.76$\pm$0.07                           & 0.46$\pm$0.26                           & \textbf{0.86$\pm$0.01}                  \\
		MDD                             & -24.28$\pm$0.54         & -24.23$\pm$0.46                         & -23.89$\pm$0.47                         & -23.36$\pm$1.64                         & -21.23$\pm$4.62                         & \textbf{-15.29$\pm$4.71}                & -24.47$\pm$0.21                         \\
		\bottomrule
	\end{tabular}
	}
\end{table}

\subsection{HIV Treatment.}
While our previous experiments focused on financial applications, our method is not limited to this domain. To demonstrate its broader applicability, we evaluate it in the context of healthcare using the HIV treatment simulator. The HIV treatment simulator, originally introduced by \citet{Ernst.etal2006} and implemented following the work of \citet{Geramifard.etal2015}, provides a simplified yet widely adopted model of HIV patient treatment in reinforcement learning research \citep{Keramati.etal2020, Rigter.etal2023}. The environment features a six-dimensional state space representing concentrations of various cells and viral loads in the patient’s bloodstream. The agent controls the dosages of two drugs, administered either individually or in combination, and receives rewards based on the patient’s health outcomes. Due to stepwise variability in drug efficacy, the environment is inherently stochastic. Each treatment is applied for 20 consecutive days, with a total of 50 time steps per episode, resulting in an episode length of 1,000 days.

Compared to the trading environment, the HIV treatment environment exhibits substantially greater stochasticity and carries higher real-world stakes, where minimizing adverse outcomes is essential. Accordingly, we prioritize risk aversion in this setting by employing the Mean-CVaR objective with $\alpha=0.1$ and $\omega=0.4$. As shown in Table \ref{tab:hiv_online}, while the iterative Mean-CVaR objective leads to modest improvements in the risk-sensitive performance of DSAC, our algorithm optimized for the static Mean-CVaR objective achieves substantial gains in both expected return and $\operatorname{CVaR}_{0.1}$ when applied to TD3 and AC. Notably, TD3-MC achieves the highest performance with respect to both metrics.

\begin{table*}[!ht]
	\caption{Normalized results for the online experiments in the HIV Treatment environment. We report both the expected return and $\operatorname{CVaR}_{\alpha}$, averaged across 5 different seeds. The $\pm$ symbol represents the standard deviation across seeds.}
	\label{tab:hiv_online}
	\centering
	\resizebox{0.8\linewidth}{!}{
		\begin{tabular}{lllllll}
			\toprule 
            Metric                       & AC-MC (Ours)                 & TD3-MC (Ours)                    & DSAC-iMC                 & AC                 & TD3                      & DSAC         \\ 
			\midrule 
            $\mathbb{E}$                 &               54.99$\pm$12.22 &              \textbf{100.33$\pm$19.43} &                87.92$\pm$13.89 &               48.13$\pm$13.18 &                71.65$\pm$5.61 &                81.15$\pm$16.67 \\
            $\operatorname{CVaR}_{0.1}$  &               16.24$\pm$10.63 &              \textbf{53.80$\pm$10.19}  &                30.79$\pm$8.99 &                10.76$\pm$8.08  &                44.59$\pm$5.37 &                27.65$\pm$10.57 \\
			\bottomrule
		\end{tabular}
	}
\end{table*}

In the offline setting, we use the Medium-Replay dataset, which consists of the replay buffer collected during SAC training, up to the point where the policy reaches 40\% of the expert policy’s performance \citep{Rigter.etal2023}. Using this dataset, our static Mean-CVaR algorithms, OAC-MC and TD3BC-MC, again demonstrate improvements in both expected return and $\operatorname{CVaR}_{0.1}$ over risk-neutral baselines such as AWAC and TD3BC. Among all models evaluated, TD3BC-MC attains the best performance. Conversely, CODAC-iMC which is a distributional variant of CQL with an iterative risk measure underperforms, suggesting that iterative risk measures may be less effective in this context. One possible explanation is that the combined conservativeness of CQL and iterative risk measures prevent the policy from reaching higher-reward states and ultimately reduce both expected return and worst-case performance.

\begin{table*}[!ht]
	\caption{Normalized results for the offline experiments in the HIV Treatment environment. We report the expected return and $\operatorname{CVaR}_{\alpha}$, averaged across 5 seeds. The $\pm$ symbol represents the standard deviation across seeds.}
	\label{tab:hiv_offline}
	\centering
	\resizebox{\linewidth}{!}{
	\begin{tabular}{lllllllll}
        \toprule 
		Metric                       & OAC-MC (Ours)                   & TD3BC-MC (Ours)                &  ORAAC-iMC          &  CODAC-iMC        & CQL                       & IQL                      & AWAC                       & TD3BC    \\          
        \midrule 
		$\mathbb{E}$                 & 52.81$\pm$0.61            & \textbf{76.01$\pm$0.83}  &  35.01$\pm$12.35    &  49.05$\pm$3.97   & 68.61$\pm$1.56            & 58.40$\pm$0.44           & 46.65$\pm$2.23             & 71.82$\pm$1.21   \\
		$\operatorname{CVaR}_{0.1}$  & 10.28$\pm$1.65            & \textbf{43.73$\pm$1.66}  &  1.89$\pm$0.58     &  10.94$\pm$0.96    & 35.11$\pm$2.44           &  19.59$\pm$1.49          & 3.47$\pm$0.63              & 37.33$\pm$2.43    \\
	\bottomrule
	\end{tabular}
	}
\end{table*}

\subsection{Stochastic MuJoCo.} 

To further showcase the generality of our method, we evaluate it in a robotics setting using MuJoCo continuous control tasks. MuJoCo \citep{Todorov.etal2012} is a physics engine for simulating complex physical interactions, widely used in robotics research. These robotics simulators are significantly more complex than previous environments due to their larger state and action spaces. In these environments, transitions are deterministic, with stochasticity only arising from the initial state. To introduce additional stochasticity, we use a stochastic version of MuJoCo, where small perturbations are added to the rewards \citep{Urpi.etal2021a, Ma.etal2021}. 

For HalfCheetah, we modify the reward function as follows:
\[
R_t(s, a) = \bar{r}_t(s, a) - 70 \, \mathbb{I}_{v > \bar{v}} \cdot \mathcal{B}_{0.1},
\]
where $\bar{r}_t(s, a)$ is the original environment reward, $v$ denotes the forward velocity, and $\mathcal{B}_{0.1}$ is a Bernoulli random variable with success probability $p = 0.1$. We set the velocity threshold $\bar{v}$ to 4 for the medium dataset. The maximum episode length for HalfCheetah is set to 200 steps.

For Walker2D and Hopper, the reward function is modified to:
\[
R_t(s, a) = \bar{r}_t(s, a) - p \, \mathbb{I}_{|\theta| > \bar{\theta}} \cdot \mathcal{B}_{0.1},
\]
where $\bar{r}_t(s, a)$ is the original reward, $\theta$ represents the pitch angle, and $\mathcal{B}_{0.1}$ introduces stochastic penalties. The penalty threshold $\bar{\theta}$ is set to 0.5 for Walker2D and 0.1 for Hopper, with corresponding penalty magnitudes $p = 30$ and $p = 50$, respectively. Additionally, if $|\theta|$ exceeds $2\bar{\theta}$, the agent is considered to have fallen, terminating the episode. This stochastic penalty mechanism penalizes unstable postures, encouraging the agent to maintain balance. Both Walker2D and Hopper are evaluated over a maximum of 500 time steps.

These environments are the most complex in our experiments, with added stochasticity making them even more challenging. We use the Mean-CVaR objective with \(\alpha=0.2\) and \(\omega=0.2\) for risk-sensitive algorithms in these settings. In the HalfCheetah and Walker2D environments, both AC-MC and TD3-MC improve performance over their risk-neutral counterparts, AC and TD3, demonstrating that incorporating risk sensitivity can help discover more effective policies. Notably, TD3-MC achieves the best results across all tested algorithms. Moreover, AC-MC and TD3-MC surpass DSAC-iMC in these environments, further highlighting the advantages of our approach over iterative risk measures. In the Hopper environment, AC, TD3, AC-MC, and TD3-MC perform similarly, with little difference in their results. A notable finding is that DSAC-iMC performs worse than DSAC in the Hopper and Walker2D environments. This decline may be due to the use of an iterative risk measure in environments with long horizons which results in overly conservative policies.

\begin{table*}[!ht]
	\caption{Normalized results for the online experiments. We report the expected return and $\operatorname{CVaR}_{\alpha}$, averaged across 5 seeds. The $\pm$ symbol represents the standard deviation across seeds.}
	\label{tab:mujoco_online}
	\centering
	\resizebox{0.8\linewidth}{!}{
		\begin{tabular}{llllllll}
			\toprule 
			Environment                   & Metric                      & AC-MC (Ours)          & TD3-MC (Ours)                   & DSAC-iMC        & AC                       & TD3             & DSAC            \\             
			\midrule 
			\multirow{2}{*}{HalfCheetah}  & $\mathbb{E}$                &                   91.59$\pm$0.96 &                   \textbf{97.68$\pm$1.41} &                 78.57$\pm$9.73 &  66.85$\pm$16.13 &  82.15$\pm$3.75 &        80.86$\pm$1.52 \\
			                               & $\operatorname{CVaR}_{0.2}$ &                   89.14$\pm$1.25 &                   \textbf{93.30$\pm$2.69} &                 69.91$\pm$9.97 &  40.37$\pm$24.17 &  76.35$\pm$4.87 &        67.58$\pm$1.54 \\
			\midrule 
			\multirow{2}{*}{Hopper}       & $\mathbb{E}$                &                  98.73$\pm$17.38 &                  101.20$\pm$4.30 &                54.43$\pm$12.48 &  \textbf{105.21$\pm$2.75} &  100.14$\pm$3.70 &       62.24$\pm$18.44 \\
			                               & $\operatorname{CVaR}_{0.2}$ &                  96.77$\pm$18.59 &                   99.11$\pm$6.21 &                46.58$\pm$13.91 &  \textbf{104.62$\pm$2.88} &   95.30$\pm$5.75 &       52.85$\pm$24.36 \\          
			\midrule 
			\multirow{2}{*}{Walker2D}     & $\mathbb{E}$                &                   90.36$\pm$5.33 &                  \textbf{97.20$\pm$13.48} &                80.17$\pm$25.85 &  97.15$\pm$13.47 &   82.31$\pm$5.79 &       75.55$\pm$28.26 \\
			                               & $\operatorname{CVaR}_{0.2}$ &                   84.76$\pm$9.74 &                  \textbf{93.57$\pm$10.66} &                27.31$\pm$28.69 &  85.14$\pm$16.82 &  66.99$\pm$11.67 &       31.02$\pm$28.35 \\
			\bottomrule
		\end{tabular}
	}
\end{table*}

In the offline setting of the Stochastic MuJoCo environment, our results suggest that some performance gains achieved with the Mean-CVaR objective may diminish in more complex scenarios. While our models generally outperform other risk-sensitive offline algorithms, such as ORAAC and CODAC, part of this improvement can be attributed to the strength of the underlying offline algorithms, like TD3BC and AWAC. For example, in Walker2D, the Mean-CVaR objective improves performance compared to TD3BC and AWAC. However, in other environments, such as Hopper, it actually decreases performance. The Mean-CVaR objective assigns different weights to various quantiles of the return distribution. While lower quantiles are crucial in risk-sensitive applications, their estimation is inherently less accurate due to the smaller
sample size used. As a result, the noisy estimation of these quantiles may lead to decreased performance in
some cases.

\begin{table*}[!ht]
	\caption{Normalized results for the offline experiments with the Medium dataset. We report the expected return and $\operatorname{CVaR}_{\alpha}$, averaged across 5 seeds. The $\pm$ symbol represents the standard deviation across seeds.}
	\label{tab:mujoco_offline_medium}
	\centering
	\resizebox{\linewidth}{!}{
		\begin{tabular}{llllllllll}
			\toprule 
			Environment                   & Metric                      & OAC-MC (Ours)                  & TD3BC-MC (Ours)               & ORAAC-iMC       & CODAC-iMC       & CQL             & IQL                     & AWAC            & TD3BC           \\
			\midrule 
			\multirow{2}{*}{HalfCheetah}  & $\mathbb{E}$                & \textbf{58.82$\pm$18.77} & 52.75$\pm$2.65          & 52.07$\pm$1.11  & 27.19$\pm$12.71 & 58.47$\pm$0.24  & 58.24$\pm$0.33          & 56.47$\pm$15.57 & 52.77$\pm$3.90  \\
			                               & $\operatorname{CVaR}_{0.2}$ & \textbf{47.63$\pm$15.96} & 38.55$\pm$13.09         & 19.15$\pm$0.95  & 14.24$\pm$7.04  & 22.03$\pm$0.41  & 21.35$\pm$1.05          & 36.54$\pm$22.84 & 43.16$\pm$2.64  \\         
			\midrule 
			\multirow{2}{*}{Hopper}       & $\mathbb{E}$                & 54.45$\pm$6.89           & 54.17$\pm$7.15          & 36.50$\pm$2.76  & 37.66$\pm$4.77  & 41.08$\pm$31.96 & \textbf{78.01$\pm$5.29} & 59.61$\pm$11.31 & 68.33$\pm$16.16 \\
			                               & $\operatorname{CVaR}_{0.2}$ & 42.50$\pm$3.48           & 43.43$\pm$3.93          & 27.38$\pm$2.01  & 26.28$\pm$5.09  & 29.48$\pm$30.04 & \textbf{67.34$\pm$3.61} & 46.42$\pm$13.27 & 56.62$\pm$14.59 \\
			\midrule 
			\multirow{2}{*}{Walker2D}     & $\mathbb{E}$                & 76.30$\pm$3.86           & \textbf{88.73$\pm$2.89} & 25.99$\pm$10.05 & 5.65$\pm$2.06   & 77.79$\pm$4.95  & 51.33$\pm$5.34          & 68.74$\pm$9.08  & 86.28$\pm$7.67  \\
			                               & $\operatorname{CVaR}_{0.2}$ & 43.08$\pm$21.66          & \textbf{81.89$\pm$4.45} & 6.47$\pm$0.55   & 3.25$\pm$2.71   & 51.98$\pm$11.82 & 3.61$\pm$2.53           & 28.00$\pm$11.98 & 69.81$\pm$20.47 \\
						
			\bottomrule
		\end{tabular}
	}
\end{table*}

\subsection{Discussion}
Our empirical results highlight the advantages of using a static risk measure over both risk-neutral baselines and risk-sensitive methods based on iterative risk formulations. We also demonstrate that our offline framework can be used to learn diverse risk-sensitive policies aligned with different risk preferences. One notable observation from our experiments is that while stochastic policies are well-suited for inherently stochastic environments, the added randomness can hinder the effectiveness of risk-sensitive policies. This trend appears consistently across both online and offline settings, where TD3-SRM and TD3BC-SRM achieve strong risk-sensitive performance in their respective domains.

Another important insight is that naively incorporating risk sensitivity into distributional actor-critic methods with entropy-based exploration can lead to degraded performance. This issue arises because soft critics estimate both future rewards and policy entropy. Since risk adjustments are meaningful only for returns and not for entropy, the risk measure should be applied solely to the reward component. Decoupling return estimation from entropy may improve the performance of risk-sensitive methods while retaining the benefits of entropy-based exploration, presenting a compelling direction for future work.

\section{Conclusion}

Despite rapid progress in risk-sensitive reinforcement learning, two important gaps remain. First, the dominant use of iterative risk measures in DRL, while easy to implement, often leads to suboptimal policies when the goal is to optimize static risk preferences. Second, despite growing interest in offline RL, there is little support for directly optimizing static risk measures, even though they are essential in safety-critical settings.

In this paper, we address these challenges by introducing a unified actor-critic framework for optimizing static SRMs in both online and offline RL. Our approach leverages the supremum representation of SRMs to develop a simple and efficient optimization method with minimal computational overhead. We provide convergence guarantees in the online tabular setting and show how policy constraints can be integrated to mitigate distributional shift in the offline setting, enabling practical use on fixed datasets.

Our framework enables learning a wide range of risk-sensitive policies from a single offline dataset, with each policy tailored to a different level of risk preference. This is especially valuable in domains like finance, healthcare, and robotics, where environment interaction is expensive or risky and the ability to manage worst-case outcomes is critical. Through extensive experiments, we show that our methods consistently outperform existing risk-sensitive algorithms, especially when it comes to controlling risk while maintaining strong returns.

Overall, our contributions bring together solid theoretical grounding and practical tools, pushing risk-sensitive RL closer to deployment in real-world applications.

\endgroup




\appendix
\renewcommand{\thetheorem}{\arabic{theorem}}

\section{Details of the Experiments}
\label{details}

\subsection{Baselines}
\label{app:baselines}
\paragraph{Online Setting.}
The implementation of our model, as well as the baselines, follow the single-file implementation of RL algorithms from CleanRL \citep{Huang.etal2022b} for clarity. We use the JAX \citep{Bradbury.etal2018} implementation of the SAC and TD3 algorithms from CleanRL\footnote{\url{https://github.com/vwxyzjn/cleanrl}}'s repository. We implement the AC algorithm with JAX according to Algorithm \ref{alg:stochastic_ac}.

\paragraph{Offline Setting.}
For our risk-sensitive baselines, we use the original implementation of the O-RAAC\footnote{\url{https://github.com/nuria95/O-RAAC}} and CODAC\footnote{\url{https://github.com/JasonMa2016/CODAC}} algorithms. For TD3BC, AWAC, CQL, and IQL, we use the JAX implementations in the JAX-CORL\footnote{\url{https://github.com/nissymori/JAX-CORL}} repository. For each algorithm, we use the default hyperparameters introduced in their corresponding paper.

The Expert-Replay datasets used in the trading and portfolio allocation environments are generated by training expert policies using SAC and TD3, respectively. For the HIV treatment environment, we use the Medium-Replay dataset provided by \citet{Rigter.etal2023}\footnote{\url{https://huggingface.co/datasets/marcrigter/1R2R-datasets}}. The Medium datasets for the Stochastic MuJoCo environments are obtained from the implementation of \citet[O-RAAC]{Urpi.etal2021a}. All datasets used in our experiments are publicly available through the project’s GitHub repository.

\subsection{Implementation Details}
\label{app:implementation}
As described in section \ref{method}, the risk-sensitive variants of the AC and TD3 algorithms in the online setting, as well as the AWAC and TD3BC algorithms in the offline settings, are developed by adding three components: 1) a distributional Critic, 2) an extended state space, and 3) a periodically updated function $h$ based on the return distribution of the initial state. We use the QR-DQN algorithm for our distributional critic and we set the number of quantiles to $50$. We also use the return-distribution of the initial state to update the function $h$ every $500$ time-steps.

Table \ref{tab:hyper} lists the shared hyperparameters for both online and offline settings. In the online experiments, the Trading environment is trained for 500,000 timesteps and the HIV Treatment environment and the Stochastic MuJoCo environment for 1,000,000 timesteps. For the offline experiments, all environments are trained for 500,000 timesteps. Additionally, the Lagrange multiplier for AWAC and OAC-SRM is fixed at 1.0, while the behavior cloning coefficient in TD3BC and TD3BC-SRM is set to 2.5.
\begin{table}[!ht]
    \caption{Default hyperparameters in different models}
    \label{tab:hyper}
    \centering
    \begin{tabular}{cc}
    \toprule
     Hyperparameter & Value \\
     \midrule
     Learning Rate & 3e-4 \\
     Discount Factor ($\gamma$) & 0.99 \\
     Batch Size & 256\\
     Number of Quantiles & 50 \\
     Target Smoothing Coefficient & 5e-3 \\
     Hidden Layer dimension & 256 \\
     Number of layers & 2 \\
     Activation Function & ReLu \\
     \bottomrule
    \end{tabular}
\end{table}

The code for the project is available at \url{https://github.com/MehrdadMoghimi/ACSRM}. 

\subsection{Evaluation}
\label{app:evaluation}
For both the online and offline settings, we train each algorithm using 5 different random seeds. Once training is complete, we generate 1,000 trajectories using the learned policies. The returns are normalized following the method outlined by \citet{Fu.etal2021}, utilizing the values from Table \ref{tab:normalization}.

\begin{table}[!ht]
    \caption{Expected return of a random policy and an expert policy after 1 million time-steps.}
    \label{tab:normalization}
    \centering
    \begin{tabular}{lcc}
    \toprule
     Environment & Random Policy & Expert Policy \\
     \midrule
     Trading & -6.17 & 1.72 \\
     Portfolio (Train) & -0.0776 & 0.0248\\
     Portfolio (Test) & -0.0815 & 0.0281 \\
     HIV Treatment & -48.7 & 5557.9 \\
     HalfCheetah & -57.95 & 659.49 \\
     Hopper & -13.37 & 1602.04  \\
     Walker2D  & -17.32 & 1790.02  \\
     
     \bottomrule
    \end{tabular}
\end{table}

\subsection{Computational Requirement}
\label{app:computation}
In the online setting, each experiment with $1$ million time-steps takes approximately $1-2$ hour on a Windows 11 PC with 16GB of RAM, Intel Core i5-12600K CPU, and Nvidia RTX 3060 GPU. In the offline setting, $1$ million time-steps of training take $10-20$ minutes on the same system.

\section{Proof of Proposition \ref{prop}}
\label{app:functionh}

As seen in Section \ref{sec:distributional}, we have $\tau_i=\frac{i}{N}$ and $\hat{\tau}_i=\frac{\tau_{i-1}+\tau_i}{2}$ for \(i=1,\cdots, N\). The quantile representation of the random variable $Z$ with \(q_i = F^{-1}_{Z}(\hat{\tau}_i)\) allows us to express Equation \ref{functionh} as a piecewise linear function:
\begin{equation}
    \tilde{h}_{\phi,Z}(z) \approx \sum_{i=1}^{N} w_i \left(q_i + \frac{1}{\hat{\tau}_i}(z - q_i)^-\right),
\end{equation}
where  \(w_i = \hat{\tau}_i \left(\phi(\tau_{i-1}) - \phi(\tau_i)\right)\). To show this, using the definition of function $h$, we can write:
\begin{align}
\label{eq:tildeh}
    h(z) & = \int_{0}^{1} \left( F_{Z}^{-1}(\alpha)+\frac{1}{\alpha}\left(z - F_{Z}^{-1}(\alpha)\right)^{-} \right) \mu(\mathrm{d}\alpha) \nonumber \\
    & = \sum_{i=1}^{N} \left(\int_{\tau_{i-1}}^{\tau_{i}} \left( F_{Z}^{-1}(\alpha) +  \frac{1}{\alpha}\left(z - F_{Z}^{-1}(\alpha)\right)^{-} \right)\mu(\mathrm{d}\alpha)\right)   \nonumber\\
    & \stackrel{(a)}{=} \sum_{i=1}^{N} \left(q_i \int_{\tau_{i-1}}^{\tau_{i}} \mu(\mathrm{d}\alpha) + \left(z - q_i\right)^{-}\int_{\tau_{i-1}}^{\tau_{i}} \frac{1}{\alpha} \mu(\mathrm{d}\alpha)\right)    \nonumber\\
    & \stackrel{(b)}{\approx} \sum_{i=1}^{N} \left(q_i \hat{\tau}_i\left(\phi(\tau_{i-1}) - \phi(\tau_{i})\right) + \left(z - q_i\right)^{-}\left(\phi(\tau_{i-1}) - \phi(\tau_{i})\right)\right)    \nonumber\\
    & \approx \sum_{i=1}^{N} w_i \left(q_i + \frac{1}{\hat{\tau}_i}(z - q_i)^-\right).
\end{align}
In this calculation, the integration interval $[0,1]$ is partitioned into $N$ subintervals: $[\tau_0, \tau_1), [\tau_1, \tau_2), \ldots, [\tau_{N-1}, \tau_N]$. Each integral $\int_{\tau_{i-1}}^{\tau_i} \mu(\mathrm{d}\alpha)$ is evaluated over $[\tau_{i-1}, \tau_i)$, meaning it includes the lower limit $\tau_{i-1}$ but excludes the upper limit $\tau_i$. In step (a), we used the fact that $q_i$ is constant within each interval $[\tau_{i-1}, \tau_i)$. In step (b), we apply the relationship between $\phi$ and $\mu$ as given in Equation \ref{eq:phimu}. More specifically, the approximation for the first term relies on the idea that if the interval is sufficiently small, it is reasonable to approximate $\alpha$ in the integrand by its midpoint value $\hat{\tau}_i$. This yields:
\[
\int_{\tau_{i-1}}^{\tau_i}\frac{\mu(\mathrm{d}\alpha)}{\alpha}\approx \frac{1}{\hat\tau_i}\int_{\tau_{i-1}}^{\tau_i}\mu(\mathrm{d}\alpha) \implies \int_{\tau_{i-1}}^{\tau_i}\mu(\mathrm{d}\alpha)
\approx \hat\tau_i\left[\phi(\tau_{i-1})-\phi(\tau_i)\right].
\]

Using the quantile value of the any state-action pair $\tilde{q}_j=F_{G^\pi(\bar{x},a)}^{-1}(\hat{\tau}_j)$ and setting \(q_i = F^{-1}_{G^\pi}(\hat{\tau}_i)\), we can estimate the Q-values with 
\begin{equation}
\label{eq:actionselection}
Q^\pi(\bar{x}, a) =  C + \frac{1}{N} \sum_{j=1}^N \left(\sum_{i=1}^N \left(\phi(\tau_{i-1}) - \phi(\tau_{i})\right)\left(s + c\tilde{q}_j - q_i\right)^{-}\right), 
\end{equation}
where $C=\int_{0}^{1} F_{G}^{-1}(\alpha) \mu(\mathrm{d}\alpha)$ is a constant for all state actions.

\section{Proof of Theorem \ref{theorem:ac_inner}}
\label{app:ac_inner}

An important result in traditional RL is the Performance Difference Lemma \citep{Kakade.Langford2002}, which states that the difference in value between two policies can be represented using the advantage function of one policy and the state occupancy measure of another. \citet{Kim.etal2024a} extend this result to show that this relationship also holds for any non-decreasing concave function. For completeness, we provide the proof of this lemma here using our notation.
\begin{lemma}[Risk-Sensitive Performance Difference Lemma]
The performance difference of policies $\pi$ and $\pi^\prime$ for a non-decreasing concave function $h$ is given by
  \begin{equation}
    \label{eq:perfdiff}
    J(\pi^\prime,h) - J(\pi,h) = \mathbb{E}_{d_{\xi_0}^{\pi^\prime}, \pi^\prime}\left[A^{\pi}_{h}(\bar{x}, a)\right]/(1-\gamma).
\end{equation}  
\end{lemma}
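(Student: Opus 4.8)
The plan is to mirror the classical Performance Difference Lemma of \citet{Kakade.Langford2002}, but to carry out the argument entirely inside the extended MDP so that the fixed risk function $h$, which acts on the initial-state return, is handled through the auxiliary quantities $Q^\pi_h$, $V^\pi_h$, and $A^\pi_h$. Since the constant $\int_0^1 \hat h(\phi(u))\,\mathrm{d}u$ does not depend on the policy, it cancels in $J(\pi',h)-J(\pi,h)$, so the task reduces to showing $\mathbb{E}[h(G^{\pi'})]-\mathbb{E}[h(G^\pi)] = \mathbb{E}_{d_{\xi_0}^{\pi'},\pi'}[A^\pi_h(\bar{x},a)]/(1-\gamma)$.

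First I would establish a Bellman recursion for the risk-sensitive Q-value. Combining the return recursion $G^\pi(\bar{x},a)=R+\gamma G^\pi(\bar{x}',a')$ with the extended-state transitions $s'=s+cR$ and $c'=\gamma c$ gives the key algebraic identity $s+cG^\pi(\bar{x},a)=s'+c'G^\pi(\bar{x}',a')$, which states that the total return as measured from the initial time is invariant under advancing one step. Substituting this into $Q^\pi_h(\bar{x},a)=\mathbb{E}[h(s+cG^\pi(\bar{x},a))]/c$ and using $c'/c=\gamma$ yields $Q^\pi_h(\bar{x},a)=\gamma\,\mathbb{E}_{\bar{x}'}[V^\pi_h(\bar{x}')]$. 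The notable feature is that the reward is absorbed into the $s$-component of the transition, so no explicit reward term appears; this is exactly what makes the telescoping clean. I would also record the boundary fact $\mathbb{E}[h(G^\pi)] = \mathbb{E}_{\bar{x}_0}[V^\pi_h(\bar{x}_0)]$, which holds because $s_0=0$, $c_0=1$, and the law of $\bar{x}_0$ is fixed by $\xi_0$ for both policies.

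Next I would telescope along a $\pi'$-trajectory. Substituting the recursion into $A^\pi_h(\bar{x}_t,a_t)=Q^\pi_h(\bar{x}_t,a_t)-V^\pi_h(\bar{x}_t)$ and taking the $\pi'$-expectation gives $\mathbb{E}_{\pi'}[\gamma^t A^\pi_h(\bar{x}_t,a_t)]=\mathbb{E}_{\pi'}[\gamma^{t+1}V^\pi_h(\bar{x}_{t+1})-\gamma^t V^\pi_h(\bar{x}_t)]$, where the weight $\gamma^t$ is forced by the factor $\gamma$ in the recursion. Summing over $t$ collapses this to the boundary difference $\mathbb{E}_{\pi'}[\gamma^{T+1}V^\pi_h(\bar{x}_{T+1})]-\mathbb{E}_{\bar{x}_0}[V^\pi_h(\bar{x}_0)]$, whose limit I claim equals $\mathbb{E}[h(G^{\pi'})]-\mathbb{E}[h(G^\pi)]=J(\pi',h)-J(\pi,h)$. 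Finally, rewriting $\mathbb{E}_{\pi'}[\sum_t\gamma^t A^\pi_h(\bar{x}_t,a_t)]$ through the discounted occupancy measure $d_{\xi_0}^{\pi'}(\bar{x})=(1-\gamma)\sum_t\gamma^t\operatorname{Pr}^{\pi'}(\bar{x}_t=\bar{x})$ introduces the $1/(1-\gamma)$ factor and produces the stated identity.

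The main obstacle I anticipate is the boundary term at infinity. Because $Q^\pi_h$ divides by $c=\gamma^t$, the value functions grow like $\gamma^{-t}$, so the telescoped term $\gamma^{T+1}V^\pi_h(\bar{x}_{T+1})=\mathbb{E}_{\pi'}[h(s_{T+1}+\gamma^{T+1}G^\pi(\bar{x}_{T+1},a))]$ is an $O(1)$ quantity arising from a delicate cancellation. To control its limit I would use that rewards lie in $[R_{\min},R_{\max}]$, so $G^\pi$ is bounded and $\gamma^{T+1}G^\pi(\bar{x}_{T+1},a)\to 0$, while $s_{T+1}=\sum_{k\le T}\gamma^k R_k\to G^{\pi'}$ along the trajectory; continuity of the concave $h$ on the bounded return range then lets me pass the limit inside to recover $\mathbb{E}[h(G^{\pi'})]$. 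In the finite-horizon tabular setting assumed for Theorem \ref{theorem:ac_inner} this step is immediate, since the sum terminates and $c_T G^\pi$ vanishes at the terminal state, but I would present the limiting argument so the lemma holds beyond that special case.
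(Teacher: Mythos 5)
Your proof is correct and follows essentially the same route as the paper's: the same telescoping of $\gamma^t A^{\pi}_{h}(\bar{x}_t,a_t)$ along a $\pi'$-trajectory via the one-step recursion $Q^{\pi}_{h}(\bar{x},a)=\gamma\,\mathbb{E}_{\bar{x}'}\left[V^{\pi}_{h}(\bar{x}')\right]$ (which the paper states without the $\gamma$ factor, an apparent typo, though its telescoping uses the correct $c_{t+1}$ weighting), the same boundary identification $\mathbb{E}\left[V^{\pi}_{h}(\bar{x}_0)\right]=\mathbb{E}\left[h(G^{\pi})\right]$, and the same limit $\mathbb{E}\left[c_t V^{\pi}_{h}(\bar{x}_t)\right]\to\mathbb{E}\left[h(G^{\pi'})\right]$. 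Your explicit justification of that limiting boundary term via bounded rewards and continuity of $h$, and your explicit remark that the policy-independent constant $\int_0^1 \hat{h}(\phi(u))\,\mathrm{d}u$ cancels in the difference, are slightly more careful than the paper's presentation, which passes over both points without comment.
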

\begin{proof}
We can prove this lemma by writing:
\[
\begin{aligned}
& \mathbb{E}_{d_{\xi_0}^{\pi^\prime}, \pi^\prime}\left[A^{\pi}_{h}(\bar{x}, a)\right]/(1-\gamma) = \mathbb{E}_{\pi^\prime}\left[\sum_{t=0}^{\infty} \gamma^t A^{\pi}_{h}(\bar{x}_t, a_t)\right] \\
& = \mathbb{E}_{\pi^\prime}\left[\sum_{t=0}^{\infty} \left(c_t Q^{\pi}_{h}(\bar{x}_t, a_t) - c_t V^{\pi}_{h}(\bar{x}_t)\right)\right] \\
& \stackrel{(a)}{=} \mathbb{E}_{\pi^\prime}\left[\sum_{t=0}^{\infty} \left(c_t V^{\pi}_{h}(\bar{x}_{t+1}) - c_t V^{\pi}_{h}(\bar{x}_t)\right)\right] \\
& = \mathbb{E}_{\pi^\prime}\left[c_1 V^{\pi}_{h}(\bar{x}_{1}) - c_0 V^{\pi}_{h}(\bar{x}_0) + c_2 V^{\pi}_{h}(\bar{x}_{2}) - c_1 V^{\pi}_{h}(\bar{x}_{1}) + \cdots + \right] \\
& = \lim_{t\rightarrow\infty}\mathbb{E}_{\pi^\prime}\left[c_t V^{\pi}_{h}(\bar{x}_t)\right] - \mathbb{E}_{\pi^\prime}\left[c_0 V^{\pi}_{h}(\bar{x}_0)\right] \\
& = \lim_{t\rightarrow\infty}\mathbb{E}_{\pi^\prime}\left[c_t V^{\pi}_{h}(\bar{x}_t)\right] - \mathbb{E}[h(G^{\pi})] \\
& = \lim_{t\rightarrow\infty}\mathbb{E}_{\pi^\prime}\left[\mathbb{E}\left[h\left(s_t + c_t G^{\pi}(\bar{x}_t)\right)\right]\right] - \mathbb{E}[h(G^{\pi})] \\ 
& = \lim_{t\rightarrow\infty}\mathbb{E}_{\pi^\prime}\left[\mathbb{E}\left[h\left(\sum_{t^\prime=0}^{t} \gamma^{t^\prime} R^{\pi^\prime}_{t^\prime} + \gamma^t G^{\pi}(\bar{x}_t)\right)\right]\right] - \mathbb{E}[h(G^{\pi})] \\ 
& = \mathbb{E}[h(G^{\pi^\prime})]  - \mathbb{E}[h(G^{\pi})] 
\end{aligned}
\]
where $(a)$ uses the relationship between $Q$ and $V$, i.e. \(
Q^{\pi}_{h}(\bar{x}, a)= \mathbb{E}_{\bar{x}^{\prime} \sim P(\bar{x}, a)}\left[V^{\pi}_{h}\left(\bar{x}^{\prime}\right)\right]\)
\end{proof}

With the risk-sensitive performance difference lemma established, the proof of Theorem \ref{theorem:ac_inner} follows the overall structure of the convergence proof for traditional policy gradient methods in the finite state and action case \citep{Agarwal.etal2021a}. However, our analysis requires a key adaptation: we incorporate a risk-adjusted return objective rather than the standard expected return. This modification introduces additional complexity in bounding the updates, which we address by leveraging the structure of the spectral risk measure and its impact on the advantage function. We assume the rewards are positive and bounded, i.e., \( R \in [R_{\min}, R_{\max}] \) with \( R_{\min} \geq 0 \), which implies that the state and state-action values are also bounded:
\[
V^{\pi}_{h}(\bar{x}),Q^{\pi}_{h}(\bar{x}, a) \in [h(s+cG_{\min})/c, h(s+cG_{\max})/c]
\]
where $G_{\min}:=R_{\min}/(1-\gamma)$ and $ G_{\max}:=R_{\max}/(1-\gamma)$. Since $h$ is a $\phi(0)$-Lipschitz function, the upper bound of the advantage function can be calculated as:
\begin{align}
    \max |A^{\pi}_{h}(\bar{x}, a)|  & = \max |Q^{\pi}_{h}(\bar{x}, a) - V^{\pi}_{h}(\bar{x})| \nonumber \\
    & \leq \left(h(s+cG_{\max}) - h(s+cG_{\min})\right)/c \nonumber \\
    & \leq \phi(0)(G_{\max} - G_{\min}). \nonumber 
\end{align}

Furthermore, we denote the policy as \(\pi\) instead of \(\pi_h\) for brevity, even though it depends on the function \(h\). The softmax policy parameterization is given by:
\[
\pi_\theta(a \mid \bar{x}) := \dfrac{\exp(\theta(\bar{x},a))}{\sum_{a^\prime} \exp(\theta(\bar{x},a^\prime))}, \forall (\bar{x},a) \in \bar{X},A
\]
Let \( \pi_t \) denote \( \pi_{\theta_t} \). The policy can be written as:
\[
\pi_{t+1}(a \mid \bar{x}) = \pi_t(a \mid \bar{x})\dfrac{\exp(\theta_{t+1}(\bar{x},a)-\theta_t(\bar{x},a))}{Z_t(\bar{x})} 
\]
where \( Z_t(\bar{x}) := \sum_{a \in A} \pi_t(a \mid \bar{x}) \exp(\theta_{t+1}(\bar{x}, a) - \theta_t(\bar{x}, a)) \). Our goal is to show that, starting from \( \pi_0 \), we can use the policy gradient defined in Equation \ref{eq:policy_gradient} to reach the optimal policy in the inner optimization. By using the gradient updates as in the Natural Policy Gradient algorithm, i.e.:
$$
\begin{aligned}
& F(\theta)=\mathbb{E}_{\bar{x} \sim d_{\xi_0}^{\pi_\theta}, a \sim \pi_\theta(\cdot \mid \bar{x})}\left[\nabla_\theta \log \pi_\theta(a \mid \bar{x})\left(\nabla_\theta \log \pi_\theta(a \mid \bar{x})\right)^{\top}\right], \\
& \theta_{t+1}=\theta_t+\eta_t F\left(\theta_t\right)^{\dagger} \nabla_\theta J(\pi_t,h),
\end{aligned}
$$
where $F$ denotes the Fisher information matrix (induced by $\pi_\theta$) and $F^{\dagger}$ denotes the Moore-Penrose pseudoinverse of $F$, the updates of the policy parameters take a simple form:
\begin{equation}
\label{eq:theta_update}
\theta_{t+1}=\theta_t+\frac{\eta_t}{1-\gamma}\left(A_{h}^t\right),
\end{equation}
as the pseudoinverse of the Fisher information cancels out the effect of the state distribution in Equation \ref{eq:policy_gradient} \citep[Lemma 15]{Agarwal.etal2021a}. To continue, we need to prove some intermediate results. 

\begin{lemma}[Improvement lower bound]
\label{lem:lower}
For any initial state distribution ${\xi_0}$, the sequence of policies generated by \ref{eq:theta_update} follows:
\[
J(\pi_{t+1}, h)-J\left(\pi_t, h\right) \geq \frac{1-\gamma}{\eta_t} \sum_{\bar{x} \in \bar{X}} {\xi_0}(\bar{x}) \log Z_t(\bar{x})
\]
\end{lemma}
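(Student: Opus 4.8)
The plan is to combine the closed form of the NPG update with the Risk-Sensitive Performance Difference Lemma, and then to extract a nonnegative KL term together with a pointwise lower bound $\log Z_t(\bar{x}) \ge 0$. This is the standard mirror-descent/NPG argument of \citet{Agarwal.etal2021a}, transported to the risk-sensitive objective once the performance difference lemma is in hand.

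First I would record that, in the softmax parameterization, the update \eqref{eq:theta_update} gives $\theta_{t+1}(\bar{x},a) - \theta_t(\bar{x},a) = \frac{\eta_t}{1-\gamma} A_h^{\pi_t}(\bar{x},a)$, so that the multiplicative policy update reads
\[
\pi_{t+1}(a\mid\bar{x}) = \pi_t(a\mid\bar{x}) \frac{\exp\!\left(\frac{\eta_t}{1-\gamma} A_h^{\pi_t}(\bar{x},a)\right)}{Z_t(\bar{x})}.
\]
Rearranging for the advantage yields
\[
A_h^{\pi_t}(\bar{x},a) = \frac{1-\gamma}{\eta_t}\left(\log Z_t(\bar{x}) + \log\frac{\pi_{t+1}(a\mid\bar{x})}{\pi_t(a\mid\bar{x})}\right).
\]
Next I would invoke the Risk-Sensitive Performance Difference Lemma with $\pi^\prime = \pi_{t+1}$ and $\pi = \pi_t$, substitute the expression above for $A_h^{\pi_t}$, and take the expectation under $d_{\xi_0}^{\pi_{t+1}}$ and $\pi_{t+1}$. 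The inner action-average of $\log(\pi_{t+1}/\pi_t)$ is precisely $\mathrm{D}_{\mathrm{KL}}\!\left(\pi_{t+1}(\cdot\mid\bar{x}) \,\|\, \pi_t(\cdot\mid\bar{x})\right) \ge 0$, which can be discarded, leaving
\[
J(\pi_{t+1},h) - J(\pi_t,h) \ge \frac{1}{\eta_t} \sum_{\bar{x}\in\bar{X}} d_{\xi_0}^{\pi_{t+1}}(\bar{x}) \log Z_t(\bar{x}).
\]

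The two remaining ingredients are a pointwise sign and a change of state measure. For the sign, I would apply Jensen's inequality to the convex exponential together with the identity $\mathbb{E}_{a\sim\pi_t}\!\left[A_h^{\pi_t}(\bar{x},a)\right] = 0$ (immediate from $V_h^{\pi}(\bar{x}) = \mathbb{E}_{a\sim\pi}[Q_h^{\pi}(\bar{x},a)]$) to conclude $Z_t(\bar{x}) \ge \exp(0) = 1$, hence $\log Z_t(\bar{x}) \ge 0$ for every $\bar{x}$. For the change of measure, I would use the definition $d_{\xi_0}^{\pi}(\bar{x}) = (1-\gamma)\sum_{t}\gamma^t \operatorname{Pr}^{\pi}(\bar{x}_t=\bar{x})$ to note that the $t=0$ term alone gives $d_{\xi_0}^{\pi_{t+1}}(\bar{x}) \ge (1-\gamma)\xi_0(\bar{x})$, with all other terms nonnegative. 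Combining the nonnegativity of $\log Z_t$ with this lower bound on the occupancy measure replaces $d_{\xi_0}^{\pi_{t+1}}$ by $(1-\gamma)\xi_0$ and produces the claimed inequality.

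The main obstacle I anticipate is bookkeeping rather than conceptual: ensuring the risk adjustment does not break the two structural facts I rely on, namely that $A_h^{\pi}$ still averages to zero under $\pi$ and that the performance difference lemma retains the $1/(1-\gamma)$ occupancy-measure form. Both hold here because $A_h^{\pi}$ is defined exactly as $Q_h^{\pi} - V_h^{\pi}$ with $V_h^{\pi}$ the $\pi$-average of $Q_h^{\pi}$, and because the lemma is already established for the concave $h$; the rest of the argument is then independent of the risk measure.
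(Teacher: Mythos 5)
Your proposal is correct and follows essentially the same route as the paper's proof: both invoke the risk-sensitive performance difference lemma, rewrite the advantage via the multiplicative softmax/NPG update to expose a nonnegative KL term plus $\log Z_t(\bar{x})$, establish $\log Z_t(\bar{x}) \ge 0$ by Jensen's inequality (you apply it to the convex exponential, the paper to the concave logarithm---equivalent arguments), and finish with the occupancy-measure bound $d_{\xi_0}^{\pi_{t+1}}(\bar{x}) \ge (1-\gamma)\xi_0(\bar{x})$. The only cosmetic difference is the order of steps: the paper proves $\log Z_t \ge 0$ first, while you defer it to the end.
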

\begin{proof}
First, note that $\log Z_t(\bar{x})$ is non-negative, which results from the concavity of the logarithmic function and 
 $\sum_{a \in A} \pi_t(a \mid \bar{x}) A_{h}^t(\bar{x}, a)=0$:
\[
\begin{aligned}
    \log Z_t(\bar{x}) & = \log \sum_{a \in A} \pi_t(a \mid \bar{x}) \exp \left(\eta_t A_{h}^t(\bar{x}, a)/(1-\gamma)\right) \\
    & \geq \sum_{a \in A} \pi_t(a \mid \bar{x}) \log \exp \left(\eta_t A_{h}^t(\bar{x}, a)/(1-\gamma)\right) \\
    & = \dfrac{\eta_t}{1-\gamma} \sum_{a \in A} \pi_t(a \mid \bar{x}) A_{h}^t(\bar{x}, a)=0
\end{aligned}
\]
Then for any ${\xi_0}$:
$$
\begin{aligned}
& J(\pi_{t+1}, h)-J\left(\pi_t, h\right) \\
& =\frac{1}{1-\gamma} \sum_{\bar{x} \in \bar{X}} d_{\xi_0}^{\pi_{t+1}}(\bar{x}) \sum_{a \in A} \pi_{t+1}(a \mid \bar{x})\left(A_{h}^t(\bar{x}, a)\right) \\
& =\frac{1}{\eta_t} \sum_{\bar{x} \in \bar{X}} d_{\xi_0}^{\pi_{t+1}}(\bar{x}) \sum_{a \in A} \pi_{t+1}(a \mid \bar{x}) \log \left(\pi_{t+1}(a \mid \bar{x}) Z_t(\bar{x}) / \pi_t(a \mid \bar{x})\right) \\
& =\frac{1}{\eta_t} \sum_{\bar{x} \in \bar{X}} d_{\xi_0}^{\pi_{t+1}}(\bar{x})\left(\mathrm{D}_{\mathrm{KL}}\left(\pi_{t+1}(\cdot \mid \bar{x}) \| \pi_t(\cdot \mid \bar{x})\right)+\log Z_t(\bar{x})\right) \\
& \geq \frac{1}{\eta_t} \sum_{\bar{x} \in \bar{X}} d_{\xi_0}^{\pi_{t+1}}(\bar{x}) \log Z_t(\bar{x})  \\
& \stackrel{(\mathrm{a})}{\geq} \frac{1-\gamma}{\eta_t} \sum_{\bar{x} \in \bar{X}} {\xi_0}(\bar{x}) \log Z_t(\bar{x}). 
\end{aligned}
$$
where (a) uses the fact that $d_{\xi_0}^\pi(\bar{x})=(1-\gamma) \sum_t \gamma^t \mathbb{P}\left(\bar{x}_t=\bar{x}\right) \geq(1-\gamma) {\xi_0}(\bar{x})$ and $\log Z_t(\bar{x})\geq0$. 
\end{proof}
\begin{lemma}[Improvement upper bound]
\label{lem:upper}
The sequence of policies generated by Equation \ref{eq:theta_update} follows:
\[
J(\pi_{t+1}, h)-J\left(\pi_t, h\right) \leq \eta_t\left(\phi(0)(G_{\max} - G_{\min})\right)^2 /(1-\gamma)^2
\]
\end{lemma}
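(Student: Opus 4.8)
The plan is to upper-bound the one-step improvement by invoking the Risk-Sensitive Performance Difference Lemma in the direction of $\pi_{t+1}$ over $\pi_t$, and then to control the resulting expectation of the advantage under $\pi_{t+1}$ using the exponential-tilting structure of the NPG update. Concretely, applying Equation \eqref{eq:perfdiff} with $\pi^\prime = \pi_{t+1}$ and $\pi = \pi_t$ gives
\[
J(\pi_{t+1},h) - J(\pi_t,h) = \frac{1}{1-\gamma}\sum_{\bar{x}} d_{\xi_0}^{\pi_{t+1}}(\bar{x}) \sum_{a} \pi_{t+1}(a\mid\bar{x})\, A_{h}^t(\bar{x},a).
\]
Since $\sum_{\bar{x}} d_{\xi_0}^{\pi_{t+1}}(\bar{x}) = 1$ is a probability distribution, it suffices to bound the inner per-state quantity $\sum_a \pi_{t+1}(a\mid\bar{x}) A_{h}^t(\bar{x},a)$ uniformly over $\bar{x}$ by $\eta_t B^2/(1-\gamma)$, where $B := \phi(0)(G_{\max}-G_{\min})$ is the advantage bound established just above the lemma.

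Next I would fix a state $\bar{x}$ and set $\lambda := \eta_t/(1-\gamma)$. The update \eqref{eq:theta_update} is precisely an exponential tilting, $\pi_{t+1}(a\mid\bar{x}) = \pi_t(a\mid\bar{x})\exp(\lambda A_{h}^t(\bar{x},a))/Z_t(\bar{x})$, so writing $\psi(\lambda) := \log Z_t(\bar{x})$ and viewing it as the cumulant generating function of $A_{h}^t(\bar{x},\cdot)$ under $\pi_t(\cdot\mid\bar{x})$, a direct differentiation yields $\psi^\prime(\lambda) = \sum_a \pi_{t+1}(a\mid\bar{x}) A_{h}^t(\bar{x},a)$, which is exactly the per-state term I need. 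Two facts then finish the estimate: the advantage is centered, $\sum_a \pi_t(a\mid\bar{x}) A_{h}^t(\bar{x},a) = 0$ (as used in Lemma \ref{lem:lower}), giving $\psi^\prime(0)=0$; and $\psi^{\prime\prime}(\lambda) = \mathrm{Var}_{\pi_{t+1}}\!\big(A_{h}^t(\bar{x},\cdot)\big)$ is the variance of a random variable valued in $[-B,B]$, hence at most $B^2$ by Popoviciu's inequality. Integrating $\psi^{\prime\prime}$ over $[0,\lambda]$ then gives $\psi^\prime(\lambda) \le \lambda B^2$.

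Substituting the per-state bound $\sum_a \pi_{t+1}(a\mid\bar{x}) A_{h}^t(\bar{x},a) \le \eta_t B^2/(1-\gamma)$ back and summing against $d_{\xi_0}^{\pi_{t+1}}$ produces
\[
J(\pi_{t+1},h) - J(\pi_t,h) \le \frac{\eta_t B^2}{(1-\gamma)^2} = \frac{\eta_t\big(\phi(0)(G_{\max}-G_{\min})\big)^2}{(1-\gamma)^2},
\]
which is the claimed inequality.

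The main obstacle is recognizing that the NPG update endows the per-state advantage expectation with the structure of a cumulant generating function derivative, which converts the problem into the clean convexity/variance estimate $\psi^{\prime\prime}\le B^2$; everything else is bookkeeping. If one prefers to avoid differentiating $Z_t$, an alternative is to bound $\sum_a(\pi_{t+1}(a\mid\bar{x})-\pi_t(a\mid\bar{x}))A_{h}^t(\bar{x},a)$ directly through a second-order expansion of the exponential tilting in $\lambda$, which requires slightly more careful remainder control but reaches the same $\lambda B^2$ estimate. Either route relies only on the centering of the advantage under $\pi_t$ and the uniform advantage bound $B=\phi(0)(G_{\max}-G_{\min})$ inherited from the $\phi(0)$-Lipschitzness of $h$.
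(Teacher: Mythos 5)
Your proof is correct, and it reaches the paper's bound by a genuinely different route for the key estimate. Both arguments open identically — apply the risk-sensitive performance difference lemma with $\pi^\prime = \pi_{t+1}$, exploit the centering $\sum_a \pi_t(a\mid\bar{x})A_h^t(\bar{x},a)=0$, and close with the uniform advantage bound $B=\phi(0)(G_{\max}-G_{\min})$ — but they diverge in how the per-state term is controlled. The paper subtracts $\pi_t$ from $\pi_{t+1}$ inside the sum, applies H\"older's inequality to get $\max_a|A_h^t(\bar{x},a)|\cdot\|\pi_{t+1}(\cdot\mid\bar{x})-\pi_t(\cdot\mid\bar{x})\|_1$, and then invokes an external softmax Lipschitz result, $\|\pi_{t+1}-\pi_t\|_1\le\|\theta_{t+1}-\theta_t\|_\infty$ (Lemma 24 of \citet{Mei.etal2020}), before substituting the NPG step $\theta_{t+1}-\theta_t=\frac{\eta_t}{1-\gamma}A_h^t$. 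You instead exploit the fact that the NPG/softmax update is an exact exponential tilting, so the per-state expectation $\sum_a\pi_{t+1}(a\mid\bar{x})A_h^t(\bar{x},a)$ is the derivative $\psi^\prime(\lambda)$ of the cumulant generating function $\psi(\lambda)=\log Z_t(\bar{x})$ at $\lambda=\eta_t/(1-\gamma)$; centering gives $\psi^\prime(0)=0$, the tilted variance bound gives $\psi^{\prime\prime}\le B^2$ (Popoviciu, or simply $\mathrm{Var}\le\mathbb{E}[A^2]\le B^2$), and integrating yields $\psi^\prime(\lambda)\le\lambda B^2$, which sums to the same $\eta_t B^2/(1-\gamma)^2$. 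The trade-off: the paper's route is modular — it only needs control of $\|\theta_{t+1}-\theta_t\|_\infty$, so it would survive updates that are not exact tiltings — but leans on a cited lemma; your route is fully self-contained, ties the estimate to the precise structure of the NPG update, and incidentally exhibits the convexity of $\log Z_t(\bar{x})$ in the step size, which is the same quantity whose nonnegativity drives the paper's lower-bound lemma.
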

\begin{proof}

\begin{align*}
& J(\pi_{t+1}, h)-J\left(\pi_t, h\right) \\
& =\frac{1}{1-\gamma} \sum_{\bar{x}} d_{\xi_0}^{\pi_{t+1}}(\bar{x}) \sum_a \pi_{t+1}(a \mid \bar{x})\left(A_{h}^t(\bar{x}, a)\right) \\
& =\frac{1}{1-\gamma} \sum_{\bar{x}} d_{\xi_0}^{\pi_{t+1}}(\bar{x}) \sum_a\left(\pi_{t+1}(a \mid \bar{x})-\pi_t(a \mid \bar{x})\right)\left(A_{h}^t(\bar{x}, a)\right) \\
& \stackrel{(a)}{\leq} \frac{1}{1-\gamma} \sum_{\bar{x}} d_{\xi_0}^{\pi_{t+1}}(\bar{x}) \max _a\left|A_{h}^t(\bar{x}, a)\right| \sum_a\left|\pi_{t+1}(a \mid \bar{x})-\pi_t(a \mid \bar{x})\right|\\
& \stackrel{(b)}{\leq} \frac{\left\|\theta_{t+1}-\theta_t\right\|_{\infty}}{1-\gamma}\left\|A_{h}^t\right\|_{\infty} \\
& \stackrel{(c)}{=}\frac{\eta_t}{(1-\gamma)^2}\left\|A_{h}^t\right\|_{\infty}^2 \\
& \leq \frac{\eta_t}{(1-\gamma)^2} \left(\phi(0)(G_{\max} - G_{\min})\right)^2.
\end{align*}

In this derivation, (a) follows Hölder's inequality, (b) follows the fact that $\left\|\pi_{t+1}-\pi_t\right\|_{1} \leq\left\|\theta_{t+1}-\theta_t\right\|_{\infty}$ \citep[Lemma 24]{Mei.etal2020}, and (c) is resulted from policy updates in Equation \ref{eq:theta_update}. 
\end{proof}
By combining the results of Lemma \ref{lem:lower} (with the state occupancy measure of the optimal policy \( (d_{\xi_0}^{\pi^*}) \) as the initial state distribution) and Lemma \ref{lem:upper}, we obtain:
\begin{equation}
\label{eq1}
\frac{1-\gamma}{\eta_t} \sum_{\bar{x} \in \bar{X}}  d_{\xi_0}^{\pi^*}(\bar{x}) \log Z_t(\bar{x}) \leq \frac{ \eta_t}{(1-\gamma)^2}\left(\phi(0)(G_{\max} - G_{\min})\right)^2 . 
\end{equation}
Using this, we can show that for any policy \( \pi_t \) and the optimal policy \( \pi^* \), we have:

\begin{align*}
& J(\pi^*, h)-J\left(\pi_t, h\right) \\
& =\frac{1}{1-\gamma} \sum_{\bar{x} \in \bar{X}} d_{\xi_0}^{\pi^*}(\bar{x}) \sum_{a \in A} \pi^*(a \mid \bar{x})\left(A_h^t(\bar{x}, a)\right) \\
& =\frac{1}{\eta_t} \sum_{\bar{x} \in \bar{X}} d_{\xi_0}^{\pi^*}(\bar{x}) \sum_{a \in A} \pi^*(a \mid \bar{x}) \log \left(\pi_{t+1}(a \mid \bar{x}) Z_t(\bar{x}) / \pi_t(a \mid \bar{x})\right) \\
& =\frac{1}{\eta_t} \sum_{\bar{x} \in \bar{X}} d_{\xi_0}^{\pi^*}(\bar{x}) \left(\mathrm{D}_{\mathrm{KL}}\left(\pi^*(\cdot \mid \bar{x}) \| \pi_t(\cdot \mid \bar{x})\right) \right. \\
& \quad \left.-\mathrm{D}_{\mathrm{KL}}\left(\pi^*(\cdot \mid \bar{x}) \| \pi_{t+1}(\cdot \mid \bar{x})\right)+\log Z_t(\bar{x})\right)\\
& \leq \frac{1}{\eta_t} \sum_{\bar{x} \in \bar{X}} d_{\xi_0}^{\pi^*}(\bar{x})\left(\mathrm{D}_{\mathrm{KL}}\left(\pi^*(\cdot \mid \bar{x}) \| \pi_t(\cdot \mid \bar{x})\right) \right. \\
& \quad \left. -\mathrm{D}_{\mathrm{KL}}\left(\pi^*(\cdot \mid \bar{x}) \| \pi_{t+1}(\cdot \mid \bar{x})\right)\right) \\
& \quad +\frac{\eta_t}{(1-\gamma)^3}\left(\phi(0)(G_{\max} - G_{\min})\right)^2 
\end{align*}

With $K=\frac{1}{(1-\gamma)^3}\left(\phi(0)(G_{\max} - G_{\min})\right)^2$, we can write:

\begin{align*}
& \sum_{t}\left(\eta_t\left(J(\pi^*, h)-J\left(\pi_t, h\right)\right)\right) \\
& \leq \sum_{\bar{x} \in \bar{X}} d_{\xi_0}^{\pi^*}(\bar{x}) \mathrm{D}_{\mathrm{KL}}\left(\pi^*(\cdot \mid \bar{x}) \| \pi_0(\cdot \mid \bar{x})\right) \\
& \quad +\sum_{t} \frac{\eta_t^2}{(1-\gamma)^3}\left(\phi(0)(G_{\max} - G_{\min})\right)^2 \\
& = \mathrm{D}_{\mathrm{KL}}\left(\pi^* \| \pi_0\right)+ K \sum_{t} \eta_t^2 
\end{align*}
where we have 
\[
\mathrm{D}_{\mathrm{KL}}\left(\pi^* \| \pi_0\right) = \sum_{\bar{x} \in \bar{X}} d_{\xi_0}^{\pi^*}(\bar{x}) \mathrm{D}_{\mathrm{KL}}\left(\pi^*(\cdot \mid \bar{x}) \| \pi_0(\cdot \mid \bar{x})\right)
\]
for brevity. Since \( \pi_0 \) assigns positive probability to all actions, \( \mathrm{D}_{\mathrm{KL}}\left(\pi^* \| \pi_0\right) \) remains bounded. Furthermore, by the Robbins-Monro condition (\( \sum_{t} \eta_t = \infty, \sum_{t} \eta_t^2 < \infty \)), the right-hand side converges to a finite limit. Since the Robbins-Monro condition holds, we conclude that \( \lim_{t \rightarrow \infty} J\left(\pi_{t}, h\right) = J\left(\pi^*, h\right) \), which implies that the policy \( \pi_t \) converges to the optimal policy.

\section{Proof of Theorem \ref{theorem:ac_outer}}
\label{app:ac_outer}

\begin{proof}
The alternating update of the policy $\pi$ and the function $h$ can be analyzed within the Block Cyclic Coordinate Ascent framework \citep{Tseng2001, Wright2015}. We treat the policy parameters, denoted by $\theta \in \Theta$, and the quantiles of $G^{\pi}$ required to define the function $h$, denoted by $q \in Q = \left[G_{\min}, G_{\max}\right]^N$, with constraints $q_i \leq q_{i+1}, \forall i \in \{1, \cdots, N-1\}$, as two separate parameter blocks for optimizing the objective. A standard assumption used throughout the proof is that the parameter space \(\bar{\Theta} := \{\theta \in \Theta, q \in Q \mid J(\pi_\theta, h_{q}) \geq J(\pi_{\theta_0})\}\) is compact, where \(\theta_0\) represents the initial policy parameter.  Another standard assumption is \(\sup_{\theta \in \Theta}\|\nabla_\theta \log\pi_\theta\| < \infty\). Combined with the boundedness of the advantage function and the policy gradient expression in Equation \ref{eq:policy_gradient}, this ensures that \(J(\pi_\theta, h_q)\) is Lipschitz continuous with respect to \(\theta\). Importantly, since this property does not depend on the specific quantiles defining \(h_q\), we can conclude that \(J(\pi_\theta)\) is also Lipschitz continuous with respect to \(\theta\).

First, we observe that monotonic policy improvement in this context can be demonstrated as follows: 
\[
\begin{aligned}
J(\pi_{k+1}) & = \max_{h \in \mathcal{H}^\prime} J(\pi_{k+1},h)\\
& \geq \mathbb{E}\left[h_{k+1}\left(G^{\pi_{k+1}}\right)\right] \\
& \stackrel{(a)}{\geq} \mathbb{E}\left[h_{k+1}\left(G^{\pi_{k}}\right)\right] \\
& \stackrel{(b)}{=}  J(\pi_{k}) \\
\end{aligned}
\]
Here, $(a)$ follows from the fact that $\pi_{k+1}$ is the maximizing policy when function $h_{k+1}$ is used. $(b)$ also follows from the fact that $h_{k+1}$ is constructed using the quantiles of $G^{\pi_{k}}$ to maximize $\mathbb{E}\left[h\left(G^{\pi_{k}}\right)\right]$. The argument for parameterized policies follows similarly and shows that $J(\pi_{\theta_{k+1}}) \geq J(\pi_{\theta_k})$. Since $J(\pi_\theta)$ is bounded from above, the sequence $\{J(\pi_{\theta_k})\}_{k=1,\cdots}$ converges to some $J^*$.

The proof of the theorem’s final result follows closely from Proposition 2 in \citet{Zhang.etal2021b}. However, unlike their setting, our quantile vector $q$ is defined implicitly through an inner optimization, and thus requires an adaptation of their argument to account for the subgradient structure of $h_q$. Using Theorem 4.1(c) of \citet{Tseng2001}, we know that for any convergent subsequence \(\left\{\left(\theta_k, q_k\right)\right\}_{k \in \mathcal{K}}\), the limit \(\left(\theta_{\mathcal{K}}, q_{\mathcal{K}}\right)\) satisfies \(\nabla_\theta J\left(\pi_{\theta_{\mathcal{K}}}, h_{q_{\mathcal{K}}}\right) = 0\) and \(\nabla_{q} J\left(\pi_{\theta_{\mathcal{K}}}, h_{q_{\mathcal{K}}}\right)=\nabla_{q}\mathbb{E}\left[h_{q}\left(G^{\pi_{\theta_{\mathcal{K}}}}\right)\left.\right|_{q=q_{\mathcal{K}}}\right]=0\). Our objective is to show that \(\nabla_\theta J\left(\pi_{\theta_{\mathcal{K}}}\right) = \nabla_\theta J\left(\pi_{\theta_{\mathcal{K}}}, h_{q_{\mathcal{K}}}\right) = 0\), which confirms that the subsequence \(\left\{\theta_k\right\}_{k \in \mathcal{K}}\) converges to a stationary point of \(J(\pi_\theta)\).

Since \(h_q(z)\) is not differentiable at \(z = q_i\), the condition \(\nabla_{q_i} \mathbb{E}\left[h_q\left(G^{\pi_{\theta_{\mathcal{K}}}}\right)\right] = 0\), interpreted in the sense of the subgradient containing zero, implies that \(q_i\) is a \(\hat{\tau}_i\)-quantile of \(G^{\pi_{\theta_{\mathcal{K}}}}\) for non-zero $w_i$. This means that \(F_{G^{\pi_{\theta_{\mathcal{K}}}}}(q_i^-) \leq \hat{\tau}_i \leq F_{G^{\pi_{\theta_{\mathcal{K}}}}}(q_i)\). To see why, note that \(G^{\pi_{\theta_{\mathcal{K}}}}\) follows an \(N\)-quantile distribution with quantile locations represented as \(g_{\mathcal{K}} = [g_1, g_2, \dots, g_N]\), where \(F^{-1}_{G^{\pi_{\theta_{\mathcal{K}}}}}(\tau) = g_{\lceil \tau N \rceil}\). Additionally, recall the structure of \(h_q(z)\):
\[
h_q(z) = \sum_{i=1}^{N} w_i \left(q_i + \frac{1}{\hat{\tau}_i}(z - q_i)^-\right).
\]
The subgradient of \(h_q(z)\) with respect to \(q_i\) is given by:
\[
\partial_{q_i} h_q(z) =
\begin{cases} 
w_i \left(1 - \frac{1}{\hat{\tau}_i}\right), & z < q_i, \\
w_i \left[1 - \frac{1}{\hat{\tau}_i}, 1\right], & z = q_i, \\
w_i, & z > q_i.
\end{cases}
\]
For any random variable \(Z\), the subgradient of \(\mathbb{E}[h_q(Z)]\) becomes:
\begin{align*}
\partial_{q_i} \mathbb{E}\left[h_q \left(Z\right)\right] & =  \mathbb{E}\left[\partial_{q_i} h_q \left(Z\right)\right] \nonumber \\ 
 & =  w_i\left(\mathbb{P}(Z < q_i) \cdot (1 - \frac{1}{\hat{\tau}_i}) + \mathbb{P}(Z > q_i) \cdot 1  \right. \\
 & \quad \left. + \mathbb{P}(Z = q_i)\cdot [1 - \frac{1}{\hat{\tau}_i},1]\right) \nonumber\\
 & =  w_i\left(1 - \mathbb{P}(Z = q_i) - \mathbb{P}(Z < q_i) \cdot (\frac{1}{\hat{\tau}_i}) \right. \\ 
 & \quad \left.+ \mathbb{P}(Z = q_i)\cdot [1 - \frac{1}{\hat{\tau}_i},1]\right) \nonumber\\
 & =  w_i\left[1 - \mathbb{P}(Z \leq q_i) \cdot (\frac{1}{\hat{\tau}_i}), 1 - \mathbb{P}(Z < q_i) \cdot (\frac{1}{\hat{\tau}_i}) \right] \nonumber\\
& = w_i\left(1 - \frac{1}{\hat{\tau}_i}\cdot\left[\mathbb{P}(Z < q_i), \mathbb{P}(Z \leq q_i)\right]\right)  \nonumber\\
& = w_i\left(1 - \frac{1}{\hat{\tau}_i}\cdot\left[F_{Z}(q_i^-), F_{Z}(q_i)\right]\right).    
\end{align*}

For non-zero \(w_i\), the condition 
\[
0 \in w_i \left(1 - \frac{1}{\hat{\tau}_i} \cdot [F_{G^{\pi_{\theta_{\mathcal{K}}}}}(q_i^-), F_{G^{\pi_{\theta_{\mathcal{K}}}}}(q_i)]\right)
\] ensures that \(\hat{\tau}_i \in [F_{G^{\pi_{\theta_{\mathcal{K}}}}}(q_i^-), F_{G^{\pi_{\theta_{\mathcal{K}}}}}(q_i)]\), which implies \(q_i\) is a \(\hat{\tau}_i\)-quantile of \(G^{\pi_{\theta_{\mathcal{K}}}}\). Consequently, for non-zero \(w_i\), the quantiles of \({q}_{\mathcal{K}}\) match with those of \({g}_{\mathcal{K}}\) and therefore, we can conclude that \(\nabla_\theta J\left(\pi_{\theta_{\mathcal{K}}}\right) = \nabla_\theta J\left(\pi_{\theta_{\mathcal{K}}}, h_{q_{\mathcal{K}}}\right) = 0\).

Ultimately, to establish the existence of a convergent subsequence, we need to prove that \(\Theta_0 := \{\theta \in \Theta \mid J(\pi_\theta) \geq J(\pi_{\theta_0})\}\) is compact. Let \(\{\theta^i\}_{i=1, \ldots}\) be a sequence within \(\Theta_0\) that converges to a limit \(\theta^{\infty}\). For each \(i\), define \(q^i := \arg \max_{q} J(\pi_{\theta^i}, q)\). Since \(J(\pi_\theta)\) is Lipschitz continuous with respect to \(\theta\), the sequence \((\theta^i, q^i)\) must converge to \((\theta^{\infty}, q^{\infty})\).  The compactness of \(\bar{\Theta}\) guarantees that \(J(\pi_{\theta^{\infty}}, h_{q^{\infty}}) \geq J(\pi_{\theta_0})\), given that \(J(\pi_{\theta^i}, h_{q^i}) = J(\pi_{\theta^i}) \geq J(\pi_{\theta_0})\) for all \(i\). This implies that \(J(\pi_{\theta^{\infty}}) \geq J(\pi_{\theta_0})\), confirming that \(\theta^{\infty} \in \Theta_0\) and \(\Theta_0\) is compact. This completes the proof.
\end{proof}

\section{Proof of Theorem \ref{theorem:awac_srm}}
\label{app:awac_srm}
\begin{proof}
The proof of this theorem closely follows the results of \citet{Peng.etal2019} and \citet{Nair.etal2021}, which derive policy updates under a KL divergence constraint. Our formulation adapts this idea to the risk-sensitive setting by incorporating the risk-adjusted advantage $A_h^{\pi}(\bar{x},a)$, and we treat the Lagrange multiplier $\lambda$ as a tunable hyperparameter. This adaptation enables us to retain the tractability of the update rule while aligning it with the underlying risk-sensitive objective. Our formulation thus generalizes the advantage-weighted actor-critic framework to account for risk preferences in the offline policy learning.

Suppose we write the inner optimization as a search to find a policy that maximizes the expected improvement $I_h(\pi)=(1-\gamma)(J(\pi,h)-J(\pi_{k},h))$. This expected improvement lets us use the risk-sensitive performance difference lemma from the previous section to write the objective involving the advantage function. As in the proof of Theorem \ref{theorem:ac_inner}, we omit the subscript \(h\) from \(\pi_h\) for brevity.

\[
I_h(\pi)=\mathbb{E}_{\bar{x} \sim d_\pi(\bar{x})} \mathbb{E}_{a \sim \pi(a \mid \bar{x})}\left[A_h^{\pi_{k}}(\bar{x}, a)\right]
\]
Following \citet{Schulman.etal2015} and assuming that each policy $\pi_k$ is close to $\pi_\beta$ in term of the KL-divergence, we can estimate $I_h(\pi)$ by using the state distribution of $\pi_\beta$:
$$
\hat{I}_h(\pi)=\mathbb{E}_{\bar{x} \sim d_{\pi_\beta}(\bar{x})} \mathbb{E}_{a \sim \pi(a \mid \bar{x})}\left[A_h^{\pi_{k}}(\bar{x}, a)\right] .
$$
With this change, we derive the following constrained inner optimization problem: 
$$
\begin{aligned}
\underset{\pi}{\arg \max } & \int_{\bar{x}} d_{\pi_\beta}(\bar{x}) \int_{a} \pi(a \mid \bar{x})A_h^{\pi_{k}}(\bar{x}, a) d a d \bar{x} \\
\text { s.t. } & \int_{\bar{x}} d_{\pi_\beta}(\bar{x}) \mathrm{D}_{\mathrm{KL}}(\pi(\cdot \mid \bar{x}) \| {\pi_\beta}(\cdot \mid \bar{x})) d \bar{x} \leq \epsilon .\\
& \int_{a} \pi(a \mid \bar{x}) d a=1, \quad \forall \bar{x} .
\end{aligned}
$$
The Lagrangian of this optimization problem can be written as:

\begin{align*}
\mathcal{L}_h(\pi, \lambda, \alpha)=&\int_{\bar{x}} d_{\pi_\beta}(\bar{x}) \int_{a} \pi(a \mid \bar{x})A_h^{\pi_{k}}(\bar{x}, a) d a d \bar{x} \\
& \quad + \lambda\left(\epsilon-\int_{\bar{x}} d_{\pi_\beta}(\bar{x}) \mathrm{D}_{\mathrm{KL}}(\pi(\cdot \mid \bar{x}) \| {\pi_\beta}(\cdot \mid \bar{x})) d \bar{x}\right), \\
& \quad + \int_{\bar{x}} \alpha_{\bar{x}}\left(1-\int_{a} \pi(a \mid \bar{x}) d a\right) d \bar{x},
\end{align*}

where $\lambda$ and $\alpha=\left\{\alpha_{\bar{x}} \mid \forall \mathrm{\bar{x}} \in \mathcal{\bar{X}}\right\}$ are the Lagrange multipliers. The derivative of $\mathcal{L}(\pi, \lambda, \alpha)$ w.r.t $\pi(a \mid \bar{x})$ is
\begin{align*}
\frac{\partial \mathcal{L}}{\partial \pi(a \mid \bar{x})} & =d_{\pi_\beta}(\bar{x})A_h^{\pi_{k}}(\bar{x}, a)-\lambda d_{\pi_\beta}(\bar{x}) \log \pi(a \mid \bar{x}) \\
& \quad +\lambda d_{\pi_\beta}(\bar{x}) \log \pi_\beta(a \mid \bar{x})-\lambda d_{\pi_\beta}(\bar{x})-\alpha_{\bar{x}},
\end{align*}
and setting this derivative to zero and solving for $\pi(a \mid \bar{x})$ give rise to
\[
\log \pi(a \mid \bar{x})=\frac{1}{\lambda}A_h^{\pi_{k}}(\bar{x}, a)+\log \pi_\beta(a \mid \bar{x})-1-\frac{1}{d_{\pi_\beta}(\bar{x})} \frac{\alpha_{\bar{x}}}{\lambda}.
\]
Therefore, we have:
\[
\pi(a \mid \bar{x})=\pi_\beta(a \mid \bar{x}) \exp \left(\frac{1}{\lambda}A_h^{\pi_{k}}(\bar{x}, a)\right) \exp \left(-\frac{1}{d_{\pi_\beta}(\bar{x})} \frac{\alpha_{\bar{x}}}{\lambda}-1\right).
\]

With $Z(\bar{x})=\exp \left(\frac{1}{d_{\pi_\beta}(\bar{x})} \frac{\alpha_{\bar{x}}}{\lambda}+1\right)$ as the partition function that normalizes the conditional action distribution, the optimal policy can be written as
$$
\pi^*(a \mid \bar{x})=\frac{1}{Z(\bar{x})} \pi_\beta(a \mid \bar{x}) \exp \left(\frac{1}{\lambda}A_h^{\pi_{k}}(\bar{x}, a)\right).
$$
Since $\int_{a^\prime} \pi(a^\prime \mid \bar{x}) d a^\prime=1$, we also have
$$
Z(\bar{x})=\int_{a^{\prime}} \pi_\beta\left(a^{\prime} \mid \bar{x}\right) \exp \left(\frac{1}{\lambda}A_h^{\pi_{k}}(\bar{x}, a^\prime)\right) d a^{\prime} .
$$
Finally, since we use function approximation to represent the optimal policy $\pi^*$, we can project the optimal policy into the parameterized policy spaces with the following problem:
\begin{align*}
& \pi_{k+1} = \underset{\pi}{\arg \min } \mathbb{E}_{\bar{x} \sim d_{\pi_\beta}(\bar{x})}  \left[\mathrm{D}_{\mathrm{KL}}\left(\pi^*(\cdot \mid \bar{x}) \| \pi(\cdot \mid \bar{x})\right)\right] \\
& =\underset{\pi}{\arg \min }  \mathbb{E}_{\bar{x} \sim d_{\pi_\beta}(\bar{x})}\left[\mathrm{D}_{\mathrm{KL}}\left(\frac{1}{Z(\bar{x})} \pi_\beta(a \mid \bar{x}) \exp \left(\frac{1}{\lambda}A_h^{\pi_{k}}(\bar{x}, a)\right) \| \pi(\cdot \mid \bar{x})\right)\right] \\
& =\underset{\pi}{\arg \max } \mathbb{E}_{\bar{x} \sim d_{\pi_\beta}(\bar{x})} \mathbb{E}_{a \sim \pi_\beta(a \mid \bar{x})}\left[\log \pi(a \mid \bar{x}) \exp \left(\frac{1}{\lambda}A_h^{\pi_{k}}(\bar{x}, a)\right)\right] \\
\end{align*}

\end{proof}

\newpage
\section{Algorithm for OAC-SRM}
\label{app:stochastic_oac}

\begin{algorithm}[!ht]
\caption{OAC-SRM}
\label{alg:stochastic_oac}
\SetAlgoLined
\DontPrintSemicolon
\SetKwInOut{Input}{Input}
\SetKwInOut{Initialize}{Initialize}
{\small
\Input{
Batch size $M$,
Number of quantiles $N$, 
Policy update frequency $d$,
Target smoothing coefficient $\nu$,
Number of policy updates $T_{inner}$,
Number of function $h$ updates $T_{outer}$, 
Dataset $\mathcal{D}$,
Lagrange multiplier $\lambda$}

\Initialize{
Critic networks $G_{\theta_1}, G_{\theta_2}$ and Actor network $\pi_{\theta}=\mathcal{N}(f_\theta,\sigma)$ or $\pi_{\theta}=\operatorname{Categorical}(\operatorname{softmax}(f_\theta))$ with random parameters $\theta_1, \theta_2, \theta$, 
Target network parameters $\theta_1^\prime \leftarrow \theta_1, \theta_2^\prime \leftarrow \theta_2, \theta^\prime \leftarrow \theta$, 
}
\For{$1$ \KwTo $T_{outer}$}{
    \tcp{Update risk function}
    \(
    h =  \tilde{h}_{\phi, G_{\theta_1}(x_0,\pi_{\theta}(x_0))}
    \)
    
    \For{$t=1$ \KwTo $T_{inner}$}{
        \tcp{Update Critic}
        Sample mini-batch of $M$ transitions $(\bar{x}, a, r, \bar{x}^\prime)$ from $\mathcal{D}$\\
        \ForEach{transition in the mini-batch}{
            Sample target action: 
            \[
            a^\prime \sim \pi_{\theta^\prime}(\bar{x}^\prime)
            \]
            
            Compute Q-values for $k = 1, 2$:
            \[
            Q_{k} = \mathbb{E}\left[h\left(s^\prime + c^\prime G_{\theta^\prime_k}(\bar{x}^\prime, a^\prime)\right)\right]/c^\prime
            \]
            
            Select target quantile set:
            \[
            G^\prime(\bar{x}^\prime, a^\prime) = 
            \begin{cases}
            G_{\theta^\prime_1}(\bar{x}^\prime, a^\prime) & \text{if } Q_{1} \leq Q_{2} \\
            G_{\theta^\prime_2}(\bar{x}^\prime, a^\prime) & \text{otherwise}
            \end{cases}
            \]
            
            Compute target quantiles: 
            \[
            Y(\bar{x},a) = r + \gamma G^\prime(\bar{x}^\prime, a^\prime)
            \]

        }
        Minimize quantile regression loss $\mathcal{L}(G_{\theta_k}, Y), k = 1, 2$ (Equation \ref{huberloss}) for the mini-batch
        
        \tcp{Update Actor}
        \If{$t \bmod d = 0$}{
            Compute the advantage function:
            \[
            A(\bar{x},a) = Q_1(\bar{x},a) - \mathbb{E}_{\tilde{a}\sim\pi_{\theta^\prime}}\left[Q_1(\bar{x},\tilde{a})\right]
            \]
            
            Update $\theta$ using the advantage function and the policy gradient in Equation \ref{eq:oac_pg}
            
            Update target networks:
            \[
            \theta^\prime_k \leftarrow \nu \theta_k + (1 - \nu) \theta^\prime_k, k=1,2 \quad \theta^\prime \leftarrow \nu \theta + (1 - \nu) \theta^\prime
            \]
        }
    }
}
}
\end{algorithm}

\newpage
\section{Algorithm for TD3-SRM and TD3BC-SRM}
\label{app:algodeterministic}

\begin{algorithm}[!ht]
\caption{{\color{blue}TD3-SRM} \& {\color{red}TD3BC-SRM}}
\label{alg:deterministic}
\SetAlgoLined
\DontPrintSemicolon
\SetKwInOut{Input}{Input}
\SetKwInOut{Initialize}{Initialize}
{\small
\Input{
Batch size $M$,
Number of quantiles $N$, 
Noise parameters $\sigma, \tilde{\sigma}$, 
Clipping constant $c$, 
Policy update frequency $d$,
Target smoothing coefficient $\nu$,
Number of policy Updates $T_{inner}$,
Number of function $h$ updates $T_{outer}$, 
{\color{red} Dataset $\mathcal{D}$,
Behavior cloning parameter $\lambda$}
}
\Initialize{
Critic networks $G_{\theta_1}, G_{\theta_2}$ and Actor network $\pi_{\theta}=\mathcal{N}(f_\theta,\sigma)$, with random parameters $\theta_1, \theta_2, \theta$, 
Target network parameters $\theta_1^\prime \leftarrow \theta_1, \theta_2^\prime \leftarrow \theta_2, \theta^\prime \leftarrow \theta$, 
{\color{blue} Dataset $\mathcal{D}$}
}
\For{$1$ \KwTo $T_{outer}$}{
    \tcp{Update risk function}
    \(
    h =  \tilde{h}_{\phi, G_{\theta_1}(x_0,\pi_{\theta}(x_0))}
    \)
    
    \For{$t=1$ \KwTo $T_{inner}$}{
        \tcp{Collect New Data}
        {\color{blue}
        Observe state $\bar{x}$, select action $a = \pi_\theta(\bar{x}) + \epsilon$, where $\epsilon \sim \mathcal{N}(0, \sigma)$\\
        Execute $a$, observe reward $r$ and next state $\bar{x}^\prime$\\
        Store transition $(\bar{x}, a, r, \bar{x}^\prime)$ into $\mathcal{D}$\\
        }
        \tcp{Update Critic}
        Sample mini-batch of $M$ transitions $(\bar{x}, a, r, \bar{x}^\prime)$ from $\mathcal{D}$\\
        \ForEach{transition in the mini-batch}{
            Compute noisy target action: 
            \[
            a^\prime = \pi_{\theta^\prime}(\bar{x}^\prime) + \epsilon^\prime, \epsilon^\prime \sim \text{clip}(\mathcal{N}(0, \tilde{\sigma}), -c, c)
            \]
            
            Compute Q-values for $k=1,2$:
            \[
            Q_{k} = \mathbb{E}\left[h\left(s^\prime + c^\prime G_{\theta^\prime_k}(\bar{x}^\prime, a^\prime)\right)\right]/c^\prime
            \]
            
            Select target quantile set:
            \[
            G^\prime(\bar{x}^\prime, a^\prime) = 
            \begin{cases}
            G_{\theta^\prime_1}(\bar{x}^\prime, a^\prime) & \text{if } Q_{1} \leq Q_{2} \\
            G_{\theta^\prime_2}(\bar{x}^\prime, a^\prime) & \text{otherwise}
            \end{cases}
            \]
            
            Compute target quantiles: $Y(\bar{x},a) = r + \gamma G^\prime(\bar{x}^\prime, a^\prime)$
        }
        Minimize quantile regression loss $\mathcal{L}(G_{\theta_k}, Y), k = 1, 2$ (Equation \ref{huberloss}) for the mini-batch
        
        \tcp{Update Actor}
        \If{$t \bmod d = 0$}{
            Update $\theta$ using the Q values and the policy gradient in Equation {\color{blue}\ref{eq:td3srm_pg}} or {\color{red}\ref{eq:td3bcsrm_pg}}
            
            Update target networks:
             \[
            \theta^\prime_k \leftarrow \nu \theta_k + (1 - \nu) \theta^\prime_k, k = 1, 2 \quad \theta^\prime \leftarrow \nu \theta + (1 - \nu) \theta^\prime
            \]
        }
    }
}
}
\end{algorithm}

\bibliographystyle{plainnat}
\bibliography{MyLibrary.bib}

\end{document}